\def\eqref#1{equation~\ref{#1}}
\def\1{\bm{1}}
\def\rvc{{\mathbf{c}}}
\def\rvx{{\mathbf{x}}}
\def\rvy{{\mathbf{y}}}
\DeclareMathAlphabet{\mathsfit}{\encodingdefault}{\sfdefault}{m}{sl}
\SetMathAlphabet{\mathsfit}{bold}{\encodingdefault}{\sfdefault}{bx}{n}
\DeclareMathOperator*{\argmin}{arg\,min}
\definecolor{tabred}{rgb}{0.890, 0.466, 0.760} 
\definecolor{navy}{RGB}{0, 0, 128}
\newcolumntype{L}{>{\raggedright\arraybackslash}X} 
\definecolor{addedrow}{RGB}{230,245,255} 
\newtheorem*{assumption*}{Assumption}
\newcommand{\rateinline}[2]{#1\color{gray}{\tiny$\pm$ #2}}
\newcommand{\AnsCorrect}[1]{\textbf{#1 (\textcolor{ForestGreen}{\ding{51}})}}
\newcommand{\AnsIncorrect}[1]{\textbf{#1 (\textcolor{BrickRed}{\ding{55}})}}
\newcommand{\cmark}{\ding{51}}%
\newcommand{\xmark}{\ding{55}}%
\def\rvx{{\mathbf{x}}}
\def\rvy{{\mathbf{y}}}
\def\rvc{{\mathbf{c}}}
\def\vocab{{\mathcal{V}}}
\title{Distillation of Large Language Models \\via Concrete Score Matching}
\author{Yeongmin Kim$^1$,\,  Donghyeok Shin$^1$,\,  Mina Kang$^1$,\, Byeonghu Na$^1$,\, Il-Chul Moon$^{1,2}$ \\
$^1$Korea Advanced Institute of Science and Technology (KAIST), $^2$summary.ai\\
\texttt{\{alsdudrla10,tlsehdgur0,kasong13,byeonghu.na,icmoon\}@kaist.ac.kr}
}
\begin{document}

\maketitle

\begin{abstract}
Large language models (LLMs) deliver remarkable performance but are costly to deploy, motivating knowledge distillation (KD) for efficient inference. Existing KD objectives typically match student and teacher probabilities via softmax, which blurs valuable logit information. While direct logit distillation (DLD) mitigates softmax smoothing, it fails to account for logit shift invariance, thereby restricting the solution space. We propose \textit{Concrete Score Distillation} (CSD), a discrete score-matching objective that overcomes both softmax-induced smoothing and restrictions on the optimal solution set. We resolve the training instability and quadratic complexity of discrete score-matching in autoregressive LLMs, and the resulting CSD objective aligns relative logit differences across all vocabulary pairs between student and teacher with flexible weighting. We provide both mode-seeking and mode-covering instances within our framework and evaluate CSD on task-agnostic instruction-following, task-specific, and \textcolor{black}{general chat capability} distillation using GPT-2-1.5B, OpenLLaMA-7B, Gemma-7B-IT, \textcolor{black}{Qwen2.5-7B-IT, and Gemma2-9B-IT teachers.} Experiments show that CSD consistently surpasses recent KD objectives, achieves favorable fidelity–diversity trade-offs, demonstrating its scalability and effectiveness for LLM distillation. Code: \url{https://github.com/aailab-kaist/CSD}.
\end{abstract}

\section{Introduction}
\label{sec:1}
Large language models (LLMs) have demonstrated remarkable generative capabilities across a wide range of tasks~\citep{achiam2023gpt,dubey2024llama,liu2024deepseek,comanici2025gemini}. Such progress has been primarily driven by the vast amount of training data and the unprecedented scale of model parameters~\citep{kaplan2020scaling}. However, when deploying such LLMs in real-world applications, the recurring inference cost becomes prohibitively expensive. Consequently, research into reducing the parameter size of LLMs while preserving performance has become particularly crucial for enabling efficient inference. In this context, knowledge distillation (KD)~\citep{hinton2015distilling} has emerged as a promising approach for LLMs, as it allows a smaller student model to inherit the capabilities of a large teacher model, thereby enabling more efficient inference.

The common paradigm in KD for LLMs is to align the per-token probability distributions of the student with those of the teacher. Kullback–Leibler (KL) divergence was initially the most widely adopted objective, and the search for more effective probability matching losses has since become a central topic of research.
Alternative objectives have been proposed within the framework of $f$-divergence~\citep{wen-etal-2023-f,gu2024minillm,agarwal2024onpolicy}, as well as its smoothed variants~\citep{pmlr-v235-ko24c,shing2025taid,ko2025distillm}. However, existing distillation losses primarily targeted the estimated probabilities obtained through the softmax transformation, instead of directly utilizing the raw neural network outputs (logits) from either the teacher or the student. As illustrated in \Cref{fig:1.b}, even when the teacher’s logit values differ substantially, their corresponding probability values can be nearly indistinguishable. Such smoothing hinders the student from faithfully capturing the teacher’s knowledge, a challenge further exacerbated in modern LLMs with large vocabularies, where most tokens are assigned near-zero probabilities (See \Cref{fig:1.a}).

In traditional KD, direct logit distillation (DLD)~\citep{ba2014deep, urban2017do} has been proposed as an alternative strategy, with advantages in generalization capability and in removing the softmax smoothing~\citep{kim2021comparing}. However, such approaches have not been thoroughly explored in the context of LLMs. This paper identifies a key drawback of DLD: its restriction on the optimal solution set as described in \Cref{fig:1.c}. Considering the softmax activation in inference, it is sufficient for the teacher’s and student’s logits to agree up to an additive constant, but the previous solutions of DLD fail to accommodate such an acceptable slack constant, a.k.a. \textit{logit shift invariance}. Such a restriction on the solution set may hinder the discovery of optimal solutions in distillation, particularly when the teacher and student models have a large capacity gap, as is often the case with LLMs. Therefore, the goal of this paper is to establish a design space of distillation losses that overcome both the softmax-induced smoothing of teacher knowledge and the restriction on the solution set.

This paper adopts the idea from energy-based models~\citep{song2021train}, which design objectives that avoid the constraint of probabilistic models (sum-to-one) by using the score-matching objective~\citep{hyvarinen2005estimation}. We propose \textit{Concrete Score Distillation} (CSD), a discrete form of the score-matching objective~\citep{meng2022concrete} adapted for autoregressive LLM distillation. We address training instability and computational overhead arising when applying the score-matching objective to LLMs, and provide theoretical guarantees of optimality, showing that its solution set is broader than that of DLD from both theoretical and empirical perspectives. The resulting objective reduces to matching the relative logit differences across all pairs of vocabulary items between the student and teacher, while allowing flexible weighting across all vocabulary pairs in linear time with respect to vocabulary size. Furthermore, we present instances within our framework that exhibit both mode-seeking and mode-covering properties.

In our experiments, we conducted task-agnostic instruction-following distillation, task-specific distillations (summarization, mathematics, and translation), and \textcolor{black}{general chat capability distillation} using GPT-2~\citep{radford2019language}, OpenLLaMA~\citep{openlm2023openllama}, Gemma~\citep{team2024gemma}, \textcolor{black}{Qwen2.5~\citep{team2024qwen2}, and Gemma2~\citep{team2024gemma}} backbones. The proposed CSD consistently outperformed recent probability-matching objectives as well as direct logit distillation. By appropriately choosing weighting functions, we further demonstrated that our method resides on the frontier of the diversity–fidelity trade-off. Finally, we observed complementary performance gains when integrating our loss with on-policy techniques.

\begin{figure*}[t]
\vspace{-2mm}
  \centering
  \begin{subfigure}{0.35\textwidth}
    \includegraphics[width=\linewidth]{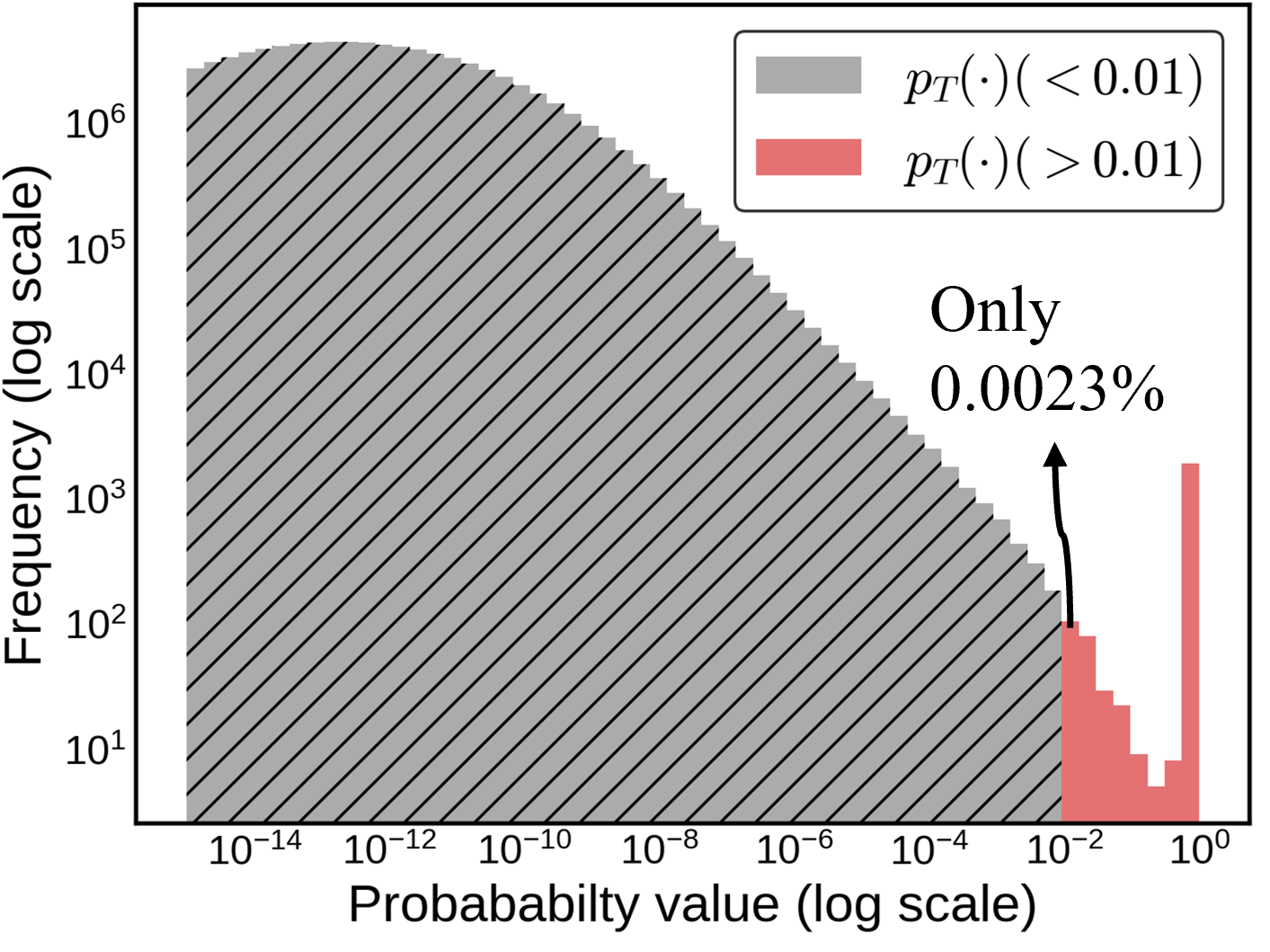}
    \caption{Probability value statistics.}
    \label{fig:1.a}
  \end{subfigure}
  \begin{subfigure}{0.36\textwidth}
    \includegraphics[width=\linewidth]{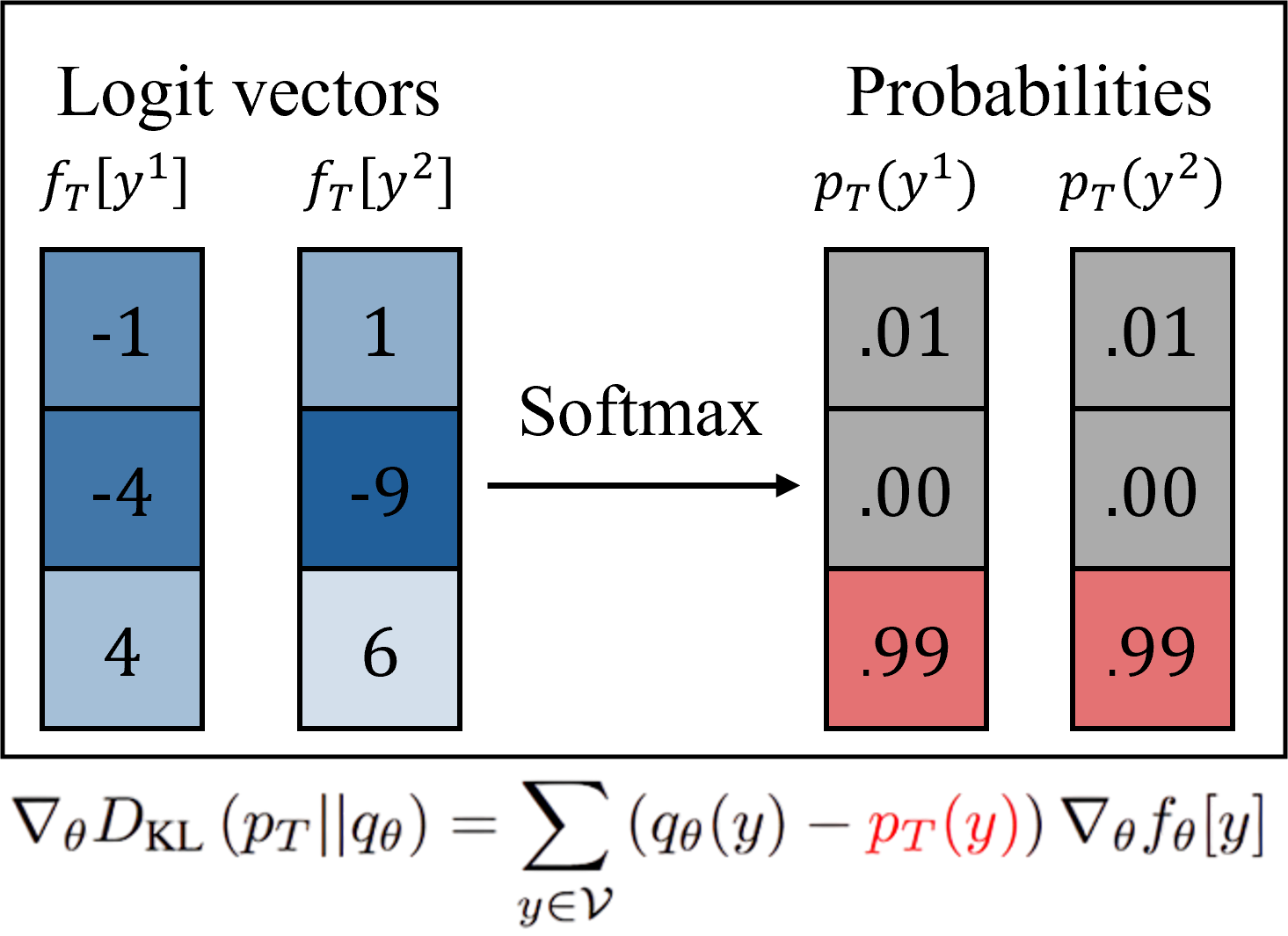}
    \caption{Softmax hides teacher's knowledge.}
    \label{fig:1.b}
  \end{subfigure}
  \begin{subfigure}{0.26\textwidth}
    \includegraphics[width=\linewidth]{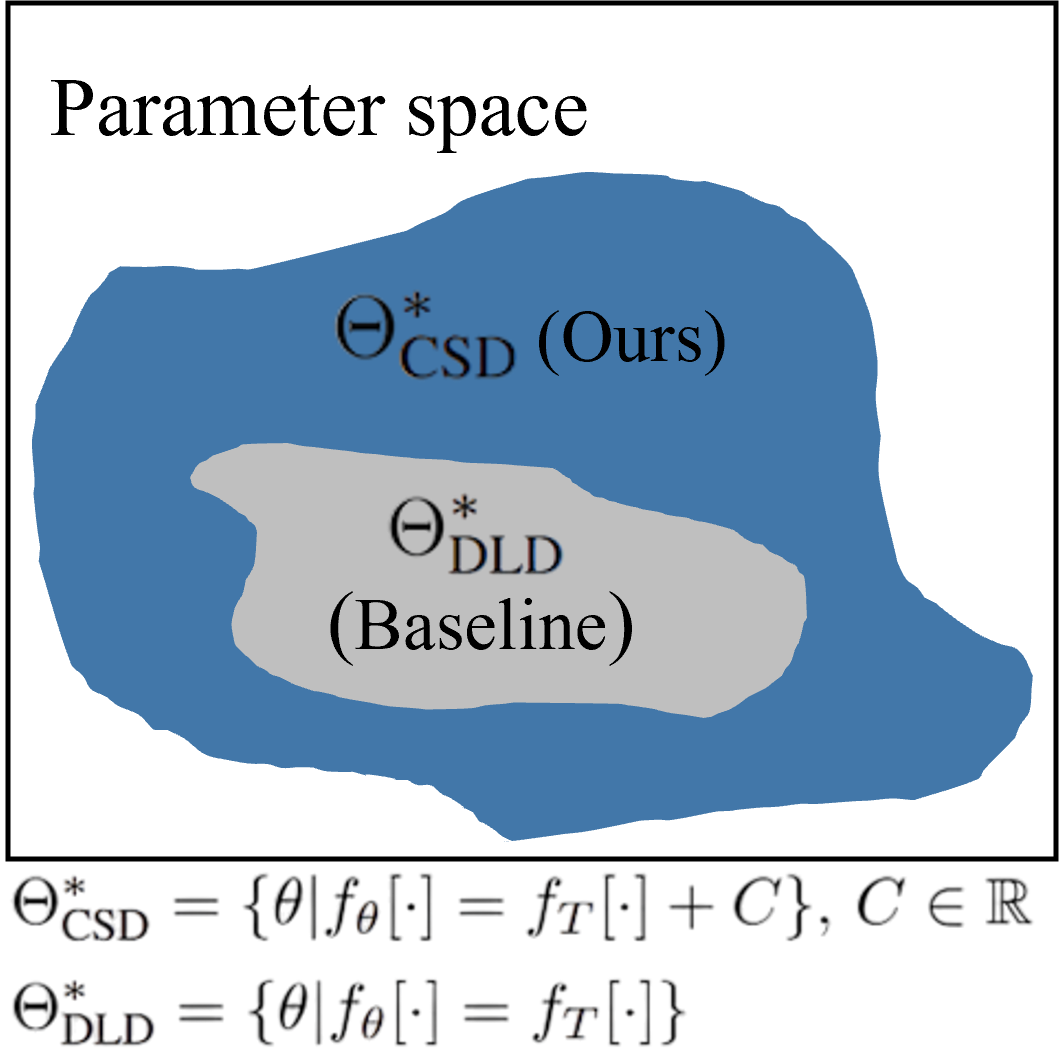}
    \caption{Optimal solution sets.}
    \label{fig:1.c}
  \end{subfigure}
  \vspace{-2mm}
  \caption{Motivation for logit-level distillation and limitations of prior work. (a) Statistics of per-token probabilities for every vocabulary for 16 input–output sequences from the teacher model (\texttt{GPT-2-1.5B}). The probabilities are highly sparse, with only 0.0023\% being greater than 0.01. (b) Despite large differences in logits (e.g., $[-1, -4, 4]$ vs. $[1, -9, 6]$), softmax yields nearly identical probabilities and gradients. (c) Prior direct logit distillation restricts the solution set.}
  \label{fig:1}
  \vspace{-6mm}
\end{figure*}

\section{Preliminaries}
\label{sec:2}
\subsection{Knowledge Distillation of Large Language models}
\label{sec:2.1}
We consider autoregressive large language models (LLMs), consisting of a teacher $p_T$ and a student $q_\theta$ with $\theta \in \Theta$, where the student is a smaller and more efficient model. Given an input context $\rvc$, the student generates an output sequence $\rvy = (y_1, y_2, \dots, y_L)$ with probability $q_{\theta}(\rvy|\rvc)=\prod_{t=1}^{L}{q_{\theta}(y_t|\rvc,\rvy_{<t})}$, where $L$ denotes the sequence length, and the teacher's probability is defined analogously. Each token $y_t$ is drawn from the fixed vocabulary set $\mathcal{V}:=\{v_1,v_2,...\}$. As in prior works~\citep{lin-etal-2020-autoregressive,pmlr-v235-ko24c}, we assume the teacher and student share the same vocabulary set. To compute the token probability $q_{\theta}(y_t|\rvc, \rvy_{<t})$, an LLM typically adopts a parametric function $f_{\theta}:\vocab^{|\rvc|} \times \vocab^{t-1} \rightarrow \mathbb{R}^{|\vocab|}$, which maps the input $(\rvc, \rvy_{<t})$ to a logit vector $f_{\theta}(\rvc,\rvy_{<t}) \in \mathbb{R}^{|\vocab|}$. The logit corresponding to token $y_t$ is denoted by $f_{\theta}(\rvc,\rvy_{<t})[y_t]$. For brevity of notation, the input arguments of the function $f_{\theta}$ will be omitted hereafter. Let $f_T$ be the parametric function of the teacher. Accordingly, the probability of each token is calculated through the softmax transformation:
\begin{equation}
q_{\theta}(y_t|\rvc,\rvy_{<t})=\frac{\text{exp}(f_\theta[y_t])}{\sum_{x\in\mathcal{V}}\text{exp}(f_\theta[x]))},\quad\quad p_{T}(y_t|\rvc,\rvy_{<t})=\frac{\text{exp}(f_{T}[y_t])}{\sum_{x\in\mathcal{V}}\text{exp}(f_{T}[x]))}. \label{eq:softmax}
\end{equation}

\textbf{Problem definition:} The goal of knowledge distillation for LLMs is to align the student’s per-token probability distribution with that of the teacher, so that the student inherits the teacher’s capabilities. We assume access to input–output sequence pairs $(\mathbf{c}, \mathbf{y}) \sim \mathcal{D}$, obtained either from a fixed dataset or from samples generated by the student or teacher~\citep{lin-etal-2020-autoregressive,pmlr-v235-ko24c}. For each selected instance $(\mathbf{c}, \mathbf{y})$, distillation is performed by selecting a specific discrepancy metric $D$ and minimizing the discrepancy between the per-token probability distributions with respect to $\theta$:
\begin{equation}
   \mathbb{E}_{(\rvc,\rvy)\sim\mathcal{D}}\left[\frac{1}{L}\sum_{t=1}^{L}D\left(p_{T}\left(\cdot|\rvc,\rvy_{<t}\right) || q_{\theta}\left(\cdot|\rvc,\rvy_{<t}\right)\right)\right].
\end{equation}
\textbf{Prior work and motivation}:
In previous studies, $D$ is most commonly chosen as the KL divergence~\citep{hinton2015distilling}, which is formulated as follows (the input of the probability is omitted):
\begin{align}
D_\text{KL}\left(p_{T} || q_{\theta}\right) =\sum_{y_{t}\in\mathcal{V}}{p_T}(y_t|\rvc,\rvy_{<t})\log{\frac{p_{T}(y_t|\rvc,\rvy_{<t})}{q_{\theta}(y_t|\rvc,\rvy_{<t})}}.
\end{align}
However, $D_\text{KL}$ focuses on the teacher’s probabilities and is constrained by the softmax. As shown in \Cref{fig:1.b}, although the teacher carries rich knowledge across all vocabulary items at the logit level, much of it is lost after softmax, and the teacher provides nearly identical gradient signals to most minor tokens. Accordingly, in classical KD studies~\citep{ba2014deep,urban2017do}, \textit{direct logit distillation} (DLD) has been widely adopted as a logit-level mean squared error (MSE) loss:
\begin{align}
\mathcal{L}_\text{DLD}\left(\theta;p_T,w\right) =\frac{1}{2}\sum_{y_{t}\in\mathcal{V}}w(y_t)\left(f_{\theta}[y_t] - f_{T}[y_t]\right)^2,
\end{align}
where $w(\cdot)$ is a strictly positive weighting function\footnote{Throughout this paper, we assume each weighting function sums to one over the vocabulary for simplicity.}. \citet{kim2021comparing} showed that $\mathcal{L}_\text{DLD}$ provides better generalization and representation capability by taking minority indices into account. Since faithfully distilling logit information is crucial for large-vocabulary LLMs, we investigated the use of DLD for LLM distillation. However, we found that its optimal solution does not permit logit constant invariance, thereby severely restricting the solution set. This observation motivated us to develop a logit-level distillation loss that does not restrict the optimal solution.

\subsection{Score Matching for a discrete random variable}
\label{sec:2.2}
Score-matching (SM)~\citep{hyvarinen2005estimation} was originally proposed in energy-based models~\citep{song2021train} with continuous variables $\rvx \in \mathbb{R}^d$. An energy function $E_{\theta}: \mathbb{R}^d \rightarrow \mathbb{R}$ maps $\rvx$ to a scalar. The corresponding probability and the score-matching objective are given by:
\begin{align}
q_{\theta}(\rvx) = \frac{\text{exp}(-E_{\theta}(\rvx))}{Z_{\theta}}, \quad \mathcal{L}_{\text{SM}}(\theta;p_{\text{data}},w)=\mathbb{E}_{w(\rvx)}\left[||\nabla_{\rvx}\log{q_{\theta}(\rvx)}-\nabla_{\rvx}\log{p_{\text{data}}(\rvx)}||_2^2\right],
\end{align}
where \textcolor{black}{$p_{\text{data}}$ is the data distribution,} $Z_{\theta} = \int_{\rvx}\text{exp}(-E_{\theta}(\rvx))\text{d}\rvx$ is the partition function, and $w(\cdot)$ is a weighting function. The term $\nabla_{\rvx}\log{q_{\theta}(\rvx)}=-\nabla_{\rvx}E_{\theta}(\rvx)$ is known as the \textit{Stein score}, which uniquely identifies the probability distribution without requiring the computation of $Z_{\theta}$. $\mathcal{L}_{\text{SM}}$ facilitates the design of losses without considering the normalization constraint of probabilistic models. The probability computation $q_\theta$ here follows, analogously, the form of the LLM probabilities in \cref{eq:softmax}. The difference is that an LLM outputs energy values $f_{\theta}$ over all finite states at once, whereas an EBM handles continuous variables, so that each input to $E_{\theta}$ yields only a single scalar output. 

Inspired by how EBMs design losses beyond the normalized structure of a probabilistic model through score-matching, we extend this idea to construct logit-level distillation losses for LLMs. However, because the Stein score is defined through derivatives, it cannot be directly applied to discrete random variables. \cite{meng2022concrete} proposed a generalized score function, applicable to both \underline{con}tinuous and dis\underline{crete} variables, named the \textit{concrete score}: $s_{\theta}(y):=\left[\frac{q_{\theta}(x)}{q_{\theta}(y)}\right]_{x\in \vocab}$. Similar to the Stein score, the concrete score characterizes local changes at the current state, but replaces them with probability ratios between all other point masses. This term is also uniquely identifiable with the underlying distribution. The corresponding concrete score-matching objective is then defined as: 
\begin{align}
\mathcal{L}_{\text{CSM}}(\theta;p_{\text{data}}, w)=\frac{1}{2}\left[\sum_{y\in\vocab}\sum_{x\in\vocab}w(y,x)\left(\frac{q_{\theta}(x)}{q_{\theta}(y)} - \frac{p_{\text{data}}(x)}{p_{\text{data}}(y)}\right)^2 \right],
\end{align}
where \textcolor{black}{$p_{\text{data}}$ is the data distribution defined over a discrete state, and} $w(\cdot,\cdot)$ is a positive weighting function. Previous work on language models~\citep{lou2024discrete} typically adopted this loss by directly parameterizing the concrete score (also known as discrete diffusion models) to mimic the data distribution. In contrast, we take this concept as a starting point to design logit-level distillation losses for autoregressive-type language models, which are more dominant in real-world applications.

\section{Method}

This section introduces the proposed \textit{Concrete Score Distillation} (CSD) objective for knowledge distillation (KD) in autoregressive large language models (LLMs). \Cref{sec:3.1} discusses the challenges of directly applying $\mathcal{L}_{\text{CSM}}$ to LLMs, so we propose a modified objective with theoretical guarantees of optimality and compare the objective with $\mathcal{L}_{\text{DLD}}$. \Cref{sec:3.2} presents an efficient analytic gradient computation for CSD, analyzes its gradient structure, and compares it with that of $D_{\text{KL}}$.

\subsection{Concrete Score Distillation for Large Language Models}
\label{sec:3.1}
\textbf{Tackling training instability:} We observe that optimizing the student model $q_{\theta}$ by minimizing $\mathcal{L}_{\text{CSM}}(\theta; p_{T}, w)$ leads to training instability, as the likelihood ratio $\frac{q_{\theta}(x)}{q_{\theta}(y_t)}$ can diverge as the denominator approaches zero. In the discrete diffusion model~\citep{lou2024discrete}, a single vocabulary item is fed into the neural network $s_{\theta}$, which directly outputs the ratios over the other vocabulary items, thereby avoiding instability. In contrast, autoregressive LLMs compute probabilities for each vocabulary item separately and then take their ratios, making this issue specific to autoregressive LLMs.

Training instability is a well-known issue in likelihood ratio estimation~\citep{rhodes2020telescoping}. Following \cite{higuchi2025direct}, we address it by applying a monotonically increasing function to the concrete scores. In particular, we adopt the logarithm, which yields the following objective:
\begin{align}
\mathcal{L}_\text{CSD}\left(\theta; p_T, w\right) :=\frac{1}{2}&\left[\sum_{y_{t}\in\mathcal{V}}\sum_{x\in\mathcal{V}}w(y_t, x)\left(\log{\frac{q_{\theta}(x|\rvc,\rvy_{<t})}{q_{\theta}(y_t|\rvc,\rvy_{<t})}} - \log{\frac{p_{T}(x|\rvc,\rvy_{<t})}{p_{T}(y_t|\rvc,\rvy_{<t})}}\right) ^2\right]\\
=\frac{1}{2}&\sum_{y_{t}\in\mathcal{V}}\sum_{x\in\mathcal{V}}w(y_t,x)\left(f_{\theta}[x] - f_{\theta}[y_t]- f_{T}[x] + f_{T}[y_t]\right)^2. \label{eq:csd}
\end{align}

The choice of the logarithm function provides two benefits: (1) it yields an MSE loss between logits (i.e., neural network outputs), ensuring stability by avoiding the likelihood ratio computation; and (2) it naturally leads to the logit-level loss design, which aligns with our motivation.

\begin{wrapfigure}{r}{5.5cm}
    \vspace{-5mm}
    \begin{subfigure}[h]{0.4\textwidth}
         \includegraphics[width=\textwidth, height=1.11in]{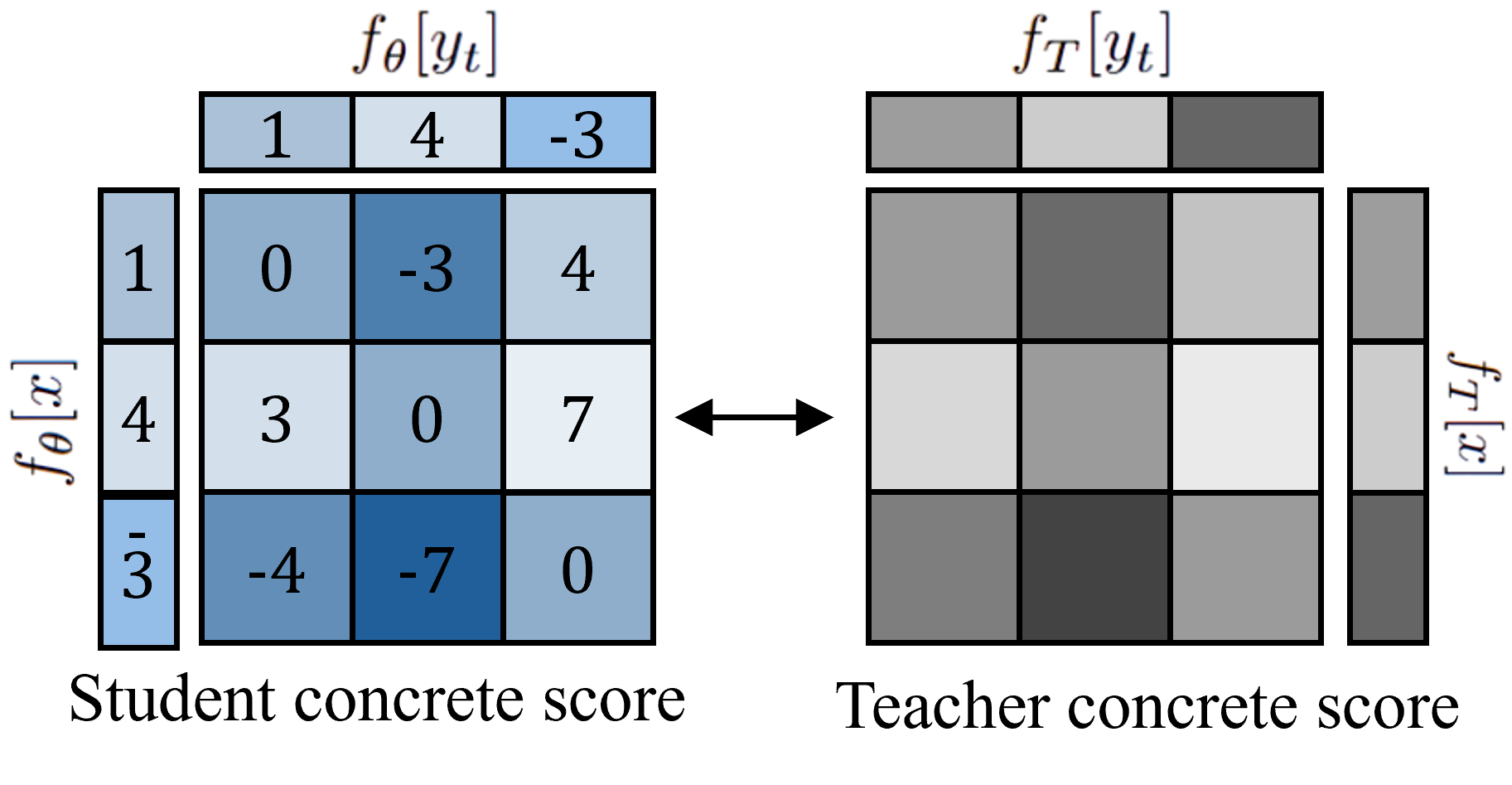}
    \end{subfigure}
    \vspace{-2mm}
         \caption{Schematic for $\mathcal{L}_{\text{CSD}}$ (\cref{eq:csd}). }
         \label{fig:2}
         \vspace{-5mm}
\end{wrapfigure}

\textbf{Logit distillation with intra-vocabulary relationships:} Unlike $\mathcal{L}_{\text{DLD}}$, which directly matches student and teacher logits for the same vocabulary item, $\mathcal{L}_{\text{CSD}}$ aligns the logit residuals across different vocabulary items between the student and the teacher. This allows the student not only to be compared against the teacher but also to perform relative comparisons among its own vocabulary items. In contrast to $D_{\text{KL}}$, where softmax normalization implicitly adjusts each vocabulary item relative to all others, our loss explicitly controls the pairwise relationships between student vocabulary items $y_t$ and $x$ through the weighting function $w(y_t, x)$. \Cref{fig:2} illustrates how a logit vector $f_{\theta}(\rvc,\rvy_{<t}) \in \mathbb{R}^{|\vocab|}$ (e.g., [1, 4, –3]) produces a concrete score and how it is matched with the teacher’s concrete score. The following theorems provide the theoretical guarantee of the proposed objective function.
\begin{restatable}{proposition}{propositiona}   \label{prop} (Consistency) Given context $\rvc$ and prefix $\rvy_{<t}$, assume model capacity \textcolor{blue!0!black}{$|\Theta|\rightarrow \infty$}. For any $w(\cdot,\cdot)>0$, define the set of optimal parameters as $\Theta_{\text{CSD}}^* = \argmin_{\theta \textcolor{blue!0!black}{\in \Theta}}{\mathcal{L}_\text{CSD}\left(\theta; p_{T},w\right)}$. Then, for any $\theta^* \in \Theta_{\text{CSD}}^*$, we have $\mathcal{L}_\text{CSD}\left(\theta^{*}; p_{T},w\right)=0$, and the following holds for all $y_t \in \mathcal{V}$:
\begin{equation}
q_{\theta^*}(y_t|\rvc,\rvy_{<t}) = p_{T}(y_t|\rvc,\rvy_{<t}). \nonumber
\label{eq:consis}
\end{equation}
\end{restatable}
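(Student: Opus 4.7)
The plan is to show that any minimizer drives the CSD loss to exactly zero, that this zero value forces all pairwise logit differences between student and teacher to coincide, and that the resulting logit shift is absorbed by the softmax. The whole argument is purely algebraic; the crux lies in turning the pointwise-zero conclusion on the weighted squares into an additive-constant relation between the logit vectors, and then invoking shift invariance of the softmax.

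First I would establish the existence and value of the minimum. Under sufficient capacity, one can choose $\theta$ so that $f_{\theta}=f_{T}$ pointwise for the given $(\rvc,\rvy_{<t})$. Plugging this choice into $\mathcal{L}_{\text{CSD}}$ makes every summand identically zero, hence $\min_{\theta}\mathcal{L}_{\text{CSD}}(\theta;p_T,w)=0$. Since the loss is a nonnegative sum of squares, this minimum is attained if and only if the loss evaluates to zero, giving $\mathcal{L}_{\text{CSD}}(\theta^{*};p_{T},w)=0$ for every $\theta^{*}\in\Theta^{*}_{\text{CSD}}$.

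Next I would extract the pointwise information from $\mathcal{L}_{\text{CSD}}(\theta^{*};p_T,w)=0$. Because $w(y_t,x)>0$ for every pair by assumption, each squared term must vanish, so
\begin{equation}
f_{\theta^{*}}[x]-f_{\theta^{*}}[y_t] \;=\; f_{T}[x]-f_{T}[y_t] \qquad \forall\, x,y_t\in\mathcal{V}. \nonumber
\end{equation}
Fixing an arbitrary reference token $y_t=v_{1}$ and letting $C:=f_{\theta^{*}}[v_{1}]-f_{T}[v_{1}]$, this identity rearranges to $f_{\theta^{*}}[x]=f_{T}[x]+C$ for all $x\in\mathcal{V}$, where $C=C(\rvc,\rvy_{<t})$ is independent of $x$.

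Finally I would plug this additive relation into the softmax in \cref{eq:softmax}. The common factor $e^{C}$ cancels between numerator and denominator, yielding
\begin{equation}
q_{\theta^{*}}(y_t|\rvc,\rvy_{<t}) \;=\; \frac{\exp(f_{T}[y_t]+C)}{\sum_{x\in\mathcal{V}}\exp(f_{T}[x]+C)} \;=\; \frac{\exp(f_{T}[y_t])}{\sum_{x\in\mathcal{V}}\exp(f_{T}[x])} \;=\; p_{T}(y_t|\rvc,\rvy_{<t}), \nonumber
\end{equation}
which is the claim. The main obstacle is conceptual rather than technical: one must emphasize that $\mathcal{L}_{\text{CSD}}$ only pins down the logits up to a context-dependent additive constant $C$, and it is precisely this slack that distinguishes CSD from $\mathcal{L}_{\text{DLD}}$ and exhibits the broader optimal solution set advertised in the introduction, while still producing the exact same softmax probabilities as the teacher.
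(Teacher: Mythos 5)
Your proposal is correct and follows essentially the same route as the paper's proof: zero loss forces every weighted square to vanish, which pins down all pairwise logit differences, and softmax shift invariance then closes the argument. The only cosmetic differences are that you explicitly justify why the minimum value is zero (via sufficient capacity realizing $f_\theta = f_T$) and you introduce the additive constant $C$ via a reference token, whereas the paper substitutes $f_{\theta^*}[x] = f_{\theta^*}[y_t] + f_T[x] - f_T[y_t]$ directly into the softmax denominator.
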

Please refer to \Cref{sec:A.1} for the proof. \Cref{prop} shows that consistency holds when matching the log-transformed concrete scores of the student and teacher, and guarantees that our objective leads the student to converge to the target teacher.
\begin{restatable}{theorem}{thma} \label{thma}
 (Solution Superset) Assume model capacity \textcolor{blue!0!black}{$|\Theta|\rightarrow \infty$}, let the set of optimal parameters $\Theta_{\text{CSD}}^* = \argmin_{\theta\textcolor{blue!0!black}{\in \Theta}}{\mathcal{L}_\text{CSD}\left(\theta; p_{T}, w\right)}$ and $\Theta_{\text{DLD}}^* = \argmin_{\theta\textcolor{blue!0!black}{\in \Theta}}{\mathcal{L}_\text{DLD}\left(\theta; p_{T}, w\right)}$, then following holds:
 \begin{equation}
\Theta_{\text{CSD}}^* \supsetneq \Theta_{\text{DLD}}^*.\nonumber
\label{eq:superset}
\end{equation}
\end{restatable}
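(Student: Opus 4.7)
The plan is to show the two inclusions separately: first that every DLD optimum is a CSD optimum, and then that there exist CSD optima which are not DLD optima, establishing strict containment. The key insight, which I would state up front, is that while $\mathcal{L}_{\text{DLD}}$ is minimized iff $f_\theta[y_t]=f_T[y_t]$ for every $y_t\in\mathcal{V}$ (pointwise logit equality), $\mathcal{L}_{\text{CSD}}$ only requires matching logit \emph{differences} $f_\theta[x]-f_\theta[y_t]=f_T[x]-f_T[y_t]$, which is equivalent to $f_\theta=f_T+c\,\mathbf{1}$ for some constant $c\in\mathbb{R}$ (possibly depending on the context $(\rvc,\rvy_{<t})$).

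For the forward inclusion $\Theta_{\text{DLD}}^*\subseteq \Theta_{\text{CSD}}^*$, I would first characterize $\Theta_{\text{DLD}}^*$: since $\mathcal{L}_{\text{DLD}}$ is a nonnegative weighted sum of squares with strictly positive weights and is achievable at zero under sufficient model capacity, $\theta^*\in\Theta_{\text{DLD}}^*$ iff $f_{\theta^*}[y_t]=f_T[y_t]$ for every $y_t$. Plugging this identity into the CSD integrand gives $f_{\theta^*}[x]-f_{\theta^*}[y_t]-f_T[x]+f_T[y_t]=0$ termwise, so $\mathcal{L}_{\text{CSD}}(\theta^*;p_T,w)=0$. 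Since Proposition~\ref{prop} (consistency) guarantees that the minimum of $\mathcal{L}_{\text{CSD}}$ is zero, we conclude $\theta^*\in\Theta_{\text{CSD}}^*$.

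For strict containment, I would exhibit a parameter in $\Theta_{\text{CSD}}^*\setminus\Theta_{\text{DLD}}^*$. Under sufficient model capacity, there exists $\theta'$ realizing $f_{\theta'}=f_T+c\,\mathbf{1}$ for any chosen nonzero constant $c\neq 0$. For such $\theta'$, the CSD residuals $f_{\theta'}[x]-f_{\theta'}[y_t]-f_T[x]+f_T[y_t]=(f_T[x]+c)-(f_T[y_t]+c)-f_T[x]+f_T[y_t]=0$ vanish identically, so $\mathcal{L}_{\text{CSD}}(\theta';p_T,w)=0$ and $\theta'\in\Theta_{\text{CSD}}^*$. Conversely, $f_{\theta'}[y_t]-f_T[y_t]=c\neq 0$ for every $y_t$, and since $w(y_t)>0$ and $|\mathcal{V}|\geq 1$, we get $\mathcal{L}_{\text{DLD}}(\theta';p_T,w)=\tfrac{c^2}{2}\sum_{y_t}w(y_t)>0$, so $\theta'\notin\Theta_{\text{DLD}}^*$.

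The main obstacle is being careful with the ``sufficient model capacity'' assumption: the argument needs that the parametric family $\{f_\theta\}$ is rich enough to realize both $f_T$ itself (making $\Theta_{\text{DLD}}^*$ nonempty, so that the inclusion is nonvacuous) and a shifted copy $f_T+c\,\mathbf{1}$ with $c\neq 0$ (to exhibit strictness). Both are fair consequences of the standard expressivity assumption already invoked in Proposition~\ref{prop}, but I would state them explicitly to make the superset strict rather than merely an equality. A minor caveat worth flagging is that the shift $c$ may vary with $(\rvc,\rvy_{<t})$; this is harmless because the two losses decouple across contexts in the outer expectation, so exhibiting a nontrivial shift at any single context suffices.
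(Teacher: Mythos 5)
Your proof is correct and follows essentially the same route as the paper: characterize $\Theta_{\text{DLD}}^*$ as the set where $f_\theta=f_T$ exactly, characterize $\Theta_{\text{CSD}}^*$ as the set where $f_\theta=f_T+C\mathbf{1}$ for some scalar $C$, and observe that the former is the $C=0$ slice of the latter while any $C\neq 0$ witnesses strictness. You are somewhat more careful than the paper in making the two inclusions explicit and in spelling out that ``sufficient model capacity'' must cover the shifted copy $f_T+c\mathbf{1}$, which is a reasonable tightening of the same argument rather than a different one.
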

Please see \Cref{sec:A.2} for the proof. \Cref{thma} implies that all solutions obtainable by $\mathcal{L}_{\text{DLD}}$ can also be recovered by $\mathcal{L}_{\text{CSD}}$. This is because $\mathcal{L}_{\text{CSD}}$ is invariant to constant shifts in logits; for example, when $f_{\theta}[y_t] = f_{T}[y_t] + C$ for all $y_t \in \vocab$, the probabilities are identical and the $\mathcal{L}_{\text{CSD}}$ is zero, whereas the $\mathcal{L}_{\text{DLD}}$ is not optimal. This advantage could be pronounced under limited model capacity, where the larger solution set of $\mathcal{L}_{\text{CSD}}$ enables more faithful approximation of the teacher’s knowledge.

\subsection{Gradient Computation and Analysis}
\label{sec:3.2}
\SetKwFor{With}{with}{ }{end}
\begin{algorithm}[t]
    \caption{Gradient computation of \textit{Concrete Score Distillation}}  
    \label{alg:1}
    \SetCustomAlgoRuledWidth{\linewidth}
    \KwInput{Student $f_{\theta}$, teacher $f_T$, prompt $\rvc$, prefix $\rvy_{<t}$, function $w(\cdot,\cdot) = w_1(\cdot)w_2(\cdot)$.}
    Compute the student logit $f_{\theta}[y_t]=f_{\theta}(\rvc,\rvy_{<t})[y_t], \forall y_t \in \mathcal{V}$. \\
    \With{no\_grad:}{Compute the teacher logit $f_{T}[y_t]=f_{T}(\rvc,\rvy_{<t})[y_t], \forall y_t \in \mathcal{V}$. \\
    Compute the weighted average logits:\\
    \quad $\bar{f}^{w_1}_{\theta}= \sum_{y_t\in\mathcal{V}}[w_1(y_t)\times f_{\theta}[y_t].\texttt{detach}], \bar{f}^{w_2}_{\theta}= \sum_{y_t\in\mathcal{V}}[w_2(y_t)\times f_{\theta}[y_t].\texttt{detach}]$\\
    \quad $\bar{f}^{w_1}_{T}= \sum_{y_t\in\mathcal{V}}[w_1(y_t)\times f_{T}[y_t]], \bar{f}^{w_2}_{T}= \sum_{y_t\in\mathcal{V}}[w_2(y_t)\times f_{T}[y_t]]$\\
    Compute the weighted normalized logits: \\
    \quad $\tilde{f}_{\theta}^{w_1}[y_t]=f_{\theta}[y_t]-\bar{f}^{w_1}_{\theta}$, $\tilde{f}_{\theta}^{w_2}[y_t]=f_{\theta}[y_t]-\bar{f}^{w_2}_{\theta}, \forall y_t \in \mathcal{V}$.\\
    \quad $\tilde{f}_{T}^{w_1}[y_t]=f_{T}[y_t]-\bar{f}^{w_1}_{T}$, $\tilde{f}_{T}^{w_2}[y_t]=f_{T}[y_t]-\bar{f}^{w_2}_{T}, \forall y_t \in \mathcal{V}$.\\
    $w_{\text{grad}}(y_t) = \left[w_1(y_t)\left[\tilde{f}_{\theta}^{w_2}[y_t] - \tilde{f}_{T}^{w_2}[y_t]\right] + w_2(y_t)\left[\tilde{f}_{\theta}^{w_1}[y_t] - \tilde{f}_{T}^{w_1}[y_t]\right]\right], \forall y_t \in \mathcal{V}$ \\
    }
    $\nabla_{\theta}\mathcal{L}_\text{CSD}\left(\theta; p_{T}, w\right) = \sum_{y_t\in\mathcal{V}} \left[ w_{\text{grad}}(y_t) \nabla_{\theta}f_{\theta}[y_t] \right]$ \\
    \textbf{return} $\nabla_{\theta}\mathcal{L}_\text{CSD}\left(\theta; p_{T}, w\right)$
\end{algorithm}

The remaining challenge of the proposed objective $\mathcal{L}_{\text{CSD}}$ in \cref{eq:csd} lies in its computational cost of $\mathcal{O}(|\vocab|^2)$. Unlike $D_{\text{KL}}$ and $D_{\text{DLD}}$, $D_{\text{CSD}}$ requires a double summation over the vocabulary set $\vocab$. This formulation is infeasible to implement in standard computational environments due to memory constraints. Nevertheless, we show that the gradient of this objective can be computed in linear time:
\begin{restatable}{theorem}{thmb} \label{thmb}
 (Efficient Gradient Computation) Assume $w(y_t, x) = w_1(y_t)w_2(x)$, then the gradient of $\mathcal{L}_\text{CSD}\left(\theta; p_{T}, w\right)$ with respect to $\theta$ could be computed in $\mathcal{O}(|\vocab|)$ as:
    \begin{equation}
\nabla_{\theta}\mathcal{L}_\text{CSD}\left(\theta; p_{T}, w\right) = \sum_{y_t \in \mathcal{V}}\mathbf{w}(y_t)^T \left(\mathbf{\tilde{f}}_{\theta}[y_t]-\mathbf{\tilde{f}}_{T}[y_t]\right) \nabla_{\theta}f_{\theta}[y_t], \label{eq:grad}
 \end{equation}
 where $\mathbf{w}(y_t)=\left(w_1(y_t), w_2(y_t)\right)^T$, $\mathbf{\tilde{f}}_{\theta}[y_t] = \left(\tilde{f}^{w_2}_{\theta}[y_t], \tilde{f}^{w_1}_{\theta}[y_t]\right)^T$, $ \mathbf{\tilde{f}}_{T}[y_t] = \left(\tilde{f}^{w_2}_{T}[y_t], \tilde{f}^{w_1}_{T}[y_t]\right)^T$, with $\tilde{f}^{w}_{\theta}[y_t]=f_{\theta}[y_t] - \mathbb{E}_{w(x)}[f_{\theta}[x]]$, $\tilde{f}^{w}_{T}[y_t]=f_{T}[y_t] - \mathbb{E}_{w(x)}[f_{T}[x]]$ are normalized logits. 
\end{restatable}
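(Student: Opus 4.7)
The plan is to differentiate $\mathcal{L}_\text{CSD}$ termwise, split the resulting double sum according to which of $\nabla_\theta f_\theta[x]$ or $\nabla_\theta f_\theta[y_t]$ it carries, and then use the separability $w(y_t,x)=w_1(y_t)w_2(x)$ together with the normalization $\sum_{y_t} w_1(y_t) = \sum_x w_2(x) = 1$ to collapse each inner sum into a single centered logit. Setting $\Delta(y_t, x) := (f_\theta[x]-f_\theta[y_t]) - (f_T[x]-f_T[y_t])$, the chain rule applied to \cref{eq:csd} gives
\begin{equation*}
\nabla_\theta \mathcal{L}_\text{CSD}(\theta; p_T, w) = \sum_{y_t, x \in \vocab} w_1(y_t)\, w_2(x)\, \Delta(y_t, x)\, \bigl(\nabla_\theta f_\theta[x] - \nabla_\theta f_\theta[y_t]\bigr).
\end{equation*}

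For the piece carrying $\nabla_\theta f_\theta[x]$, I would fix $x$ and exploit that $\Delta(y_t,x)$ is affine in $y_t$: the inner sum $\sum_{y_t} w_1(y_t)\Delta(y_t, x)$ collapses by linearity and $\sum_{y_t} w_1(y_t)=1$ into $(f_\theta[x]-\bar f^{w_1}_\theta)-(f_T[x]-\bar f^{w_1}_T) = \tilde f^{w_1}_\theta[x] - \tilde f^{w_1}_T[x]$. Renaming the surviving dummy index $x$ as $y_t$, this piece becomes $\sum_{y_t} w_2(y_t)\bigl(\tilde f^{w_1}_\theta[y_t]-\tilde f^{w_1}_T[y_t]\bigr)\nabla_\theta f_\theta[y_t]$. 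The symmetric contraction on the piece carrying $\nabla_\theta f_\theta[y_t]$, now summing out $x$ against $w_2$, produces an inner sum of $-(\tilde f^{w_2}_\theta[y_t]-\tilde f^{w_2}_T[y_t])$; the leading minus sign flips this back, contributing $\sum_{y_t} w_1(y_t)\bigl(\tilde f^{w_2}_\theta[y_t]-\tilde f^{w_2}_T[y_t]\bigr)\nabla_\theta f_\theta[y_t]$. Adding both pieces and repackaging the scalar coefficient as the inner product $\mathbf{w}(y_t)^T\bigl(\mathbf{\tilde f}_\theta[y_t]-\mathbf{\tilde f}_T[y_t]\bigr)$ yields exactly \cref{eq:grad}.

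The complexity claim then follows at once: computing the four weighted means $\bar f^{w_k}_{\theta/T}$ each costs $\mathcal{O}(|\vocab|)$, forming the centered logits $\tilde f^{w_k}_{\theta/T}[y_t]$ and the scalar coefficients $w_\text{grad}(y_t)$ is another $\mathcal{O}(|\vocab|)$ pass, and the final step reduces to backpropagating through the scalar $\sum_{y_t} w_\text{grad}(y_t) f_\theta[y_t]$, which adds no asymptotic overhead beyond the ordinary forward/backward pass through $f_\theta$.

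The main obstacle is the sign and index bookkeeping when splitting the double sum: the cross pairing in the final expression, in which $w_1$ multiplies the $w_2$-centered logit and $w_2$ multiplies the $w_1$-centered logit, emerges precisely because the index that is summed out against one weight is the complement of the index that survives paired with the other weight. A single misplaced sign when collapsing the $x$-sum in the second piece would destroy the sign flip that produces the clean inner-product form; once that subtle asymmetry is handled carefully, the remainder of the argument is just linearity of summation and the sum-to-one property of $w_1$ and $w_2$.
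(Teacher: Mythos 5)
Your proof is correct and follows essentially the same route as the paper: split the differentiated double sum by which gradient factor ($\nabla_\theta f_\theta[x]$ or $\nabla_\theta f_\theta[y_t]$) it carries, then collapse the complementary inner sum using separability of $w$ and the sum-to-one normalization. The only cosmetic difference is that the paper first expands $\Delta$ into four terms $\textcircled{1}$--$\textcircled{4}$ before regrouping into $\textcircled{1}+\textcircled{3}$ and $\textcircled{2}+\textcircled{4}$, which are exactly your two pieces A and B, so the algebra and the sign-handling are identical.
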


The proof is provided in \Cref{sec:A.3}. \textcolor{black}{For the actual training time and memory usage, please refer to \Cref{tab:memory} in \Cref{sec:D}}. These results follow from factorizing the independent variables. \Cref{alg:1} further details the gradient computation of \cref{eq:grad} step by step, with each step requiring only linear time over the vocabulary. An alternative approach is to use Monte Carlo estimation \textcolor{black}{as described in \Cref{alg:2} of \Cref{sec:B}}. Instead of taking a weighted sum over all possible states of $y_t$ with $w_1$, one can draw a single sample of $y_t$ according to probability $w_1$ and compute the loss in expectation. \textcolor{black}{The Monte Carlo estimation, unlike the analytic gradient form, does not require assuming independence between the two variables of $w$, allowing it to model the joint weighting function space directly. However, defining a joint weighting function over two discrete vocabulary spaces is generally difficult. We found that independent weighting functions capture various behaviors in \Cref{sec:4.3}. In these cases, Monte Carlo estimation increases the variance within batched samples, which slightly slows the convergence compared to the analytic computation (see \Cref{fig:5.a}).}

\textbf{Gradient analysis:} The gradient of $\mathcal{L}_{\text{CSD}}$ in \cref{eq:grad} has a structure similar to that of $D_{\text{KL}}$. For intuitive understanding, let us consider the case where the weighting function of CSD is the uniform distribution $U$. Then, the gradient of each loss becomes:
\begin{align}
& \nabla_{\theta} D_\text{KL}\left(p_{T} || q_{\theta}\right) =\sum_{y_{t}\in\mathcal{V}}\bigg(\underbrace{\frac{\text{exp}(f_{\theta}[y_t])}{\sum_{x\in\mathcal{V}}\text{exp}(f_{\theta}[x])}}_{\text{normalized student logit}} - \underbrace{\frac{\text{exp}(f_{T}[y_t])}{\sum_{x\in\mathcal{V}}\text{exp}(f_{T}[x])}}_{\text{normalized teacher logit}}\bigg)\nabla_{\theta}f_{\theta}[y_t], \nonumber\\ 
& \nabla_{\theta}\mathcal{L}_\text{CSD}\left(\theta; p_{T}, U\right) = \sum_{y_t \in \mathcal{V}} \frac{2}{|\vocab|} \Bigg(\underbrace{\left(f_{\theta}[y_t]-\frac{\sum_{x\in\vocab}f_{\theta}[x]}{|\vocab|}\right)}_{\text{normalized student logit}} - \underbrace{\left(f_{T}[y_t]-\frac{\sum_{x\in\vocab}f_{T}[x]}{|\vocab|}\right)}_{\text{normalized teacher logit}}\Bigg) \nabla_{\theta}f_{\theta}[y_t]. \nonumber
 \end{align}
In gradient descent, both losses decrease the student’s logit $f_{\theta}[y_t]$ where the student’s normalized logits are large, and increase $f_{\theta}[y_t]$ where the teacher’s normalized logits are large. The only difference lies in how the logit coefficients are normalized over the vocabulary set: $D_{\text{KL}}$ inherits the softmax form, which, as noted in \Cref{fig:1.b}, poses a major problem for transferring the teacher’s knowledge. In contrast, our $\mathcal{L}_\text{CSD}$ uses centering normalization, allowing the student to directly capture the teacher’s logit information. Moving beyond the uniform weighting case study, the formulation in \cref{eq:grad} further provides a design space for logit normalization through ($w_1$, $w_2$), where $w_1$ controls the weighting of vocabulary tokens during gradient updates and $w_2$ governs coefficient normalization, with their roles applied again in reverse order ($w_2$, $w_1$).

\section{Experiments}
\label{sec:4}
This section comprehensively validates the effectiveness of the proposed \textit{Concrete Score Distillation} (CSD) across various experimental setups. \Cref{sec:4.1} shows results on task-agnostic instruction-following distillation, comparing CSD with alternative loss functions and assessing its performance when combined with on-policy methods. \Cref{sec:4.2} further examines task-specific settings, including math, summarization, and translation, to evaluate the applicability of CSD. \textcolor{black}{\Cref{sec:4.4} evaluates scalability by examining whether general conversational abilities can also be distilled.} Finally, \Cref{sec:4.3} establishes the contribution of each component in CSD through ablation studies.

\subsection{Task-Agnostic Instruction-Following Distillation}
\label{sec:4.1}
We follow the training setup of DistiLLM~\citep{pmlr-v235-ko24c}.
For the distillation dataset $\mathcal{D}$, we use \texttt{databricks-dolly-15k}~\citep{conover2023free}. We first fine-tune the teacher on this dataset and then distill it into student models. We use the detached student probability as the default choice for both $w_1$ and $w_2$, \textcolor{black}{and we apply it similarly to the weights in DLD. Please refer to \Cref{sec.B.1} for further details on backbone, training configuration, baseline, and the evaluation protocol.}

\setlength{\tabcolsep}{5pt}
\renewcommand{\arraystretch}{1.0}
\begin{table}[t]
    \centering
    \vspace{0mm}
   \footnotesize
    \caption{Comparison of loss functions for distilling \texttt{GPT-2-1.5B} into \texttt{GPT-2-0.1B}. Every result is from our implementation with the same teacher, purely using the distillation objective. ROUGE-L scores were averaged over five random seeds; best scores are \textbf{boldfaced}, second-best \underline{underlined}.}
    \vspace{-3mm}
        \begin{tabular}{lcccccc}
        \toprule
         Loss  &  {\scriptsize Dolly Eval }  &  {\scriptsize Self-Instruct} & {\scriptsize Vicuna Eval }  & {\scriptsize Super-NI } & {\scriptsize UnNI } & {\scriptsize Avg. ($\uparrow$)}\\
        \midrule
        Teacher & \rateinline{27.00}{0.19}&\rateinline{14.07}{0.37}&\rateinline{16.31}{0.32}&\rateinline{26.46}{0.41}&\rateinline{31.10}{0.06} &22.99 \\
        \midrule
        KL&  \rateinline{23.52}{0.25} &\rateinline{10.02}{0.58} &\rateinline{14.57}{0.32} &\rateinline{16.76}{0.17} &\rateinline{18.55}{0.13} & 16.68 \\
       RKL{\scriptsize~\citep{gu2024minillm}} &  \rateinline{24.26}{0.11} &\rateinline{11.19}{0.17} &\rateinline{15.80}{0.26} &\rateinline{20.17}{0.15} &\rateinline{22.99}{0.14} & 18.88\\
        Sym-KL &  \rateinline{23.29}{0.20} &\rateinline{10.24}{0.31} &\rateinline{15.25}{0.43} &\rateinline{17.46}{0.11} &\rateinline{20.60}{0.08} &17.37\\
        Jeffrey &  \rateinline{23.00}{0.38} &\rateinline{10.82}{0.44} &\rateinline{15.00}{0.50} &\rateinline{18.19}{0.11} &\rateinline{20.07}{0.11} &17.42\\
        TV{\scriptsize~\citep{wen-etal-2023-f}} &  \rateinline{23.88}{0.30} &\rateinline{11.03}{0.51} &\rateinline{15.13}{0.44} &{\rateinline{\underline{24.58}}{0.25}} &\rateinline{{25.24}}{0.06} & 19.97\\
        GJS (0.9){\scriptsize~\citep{agarwal2024onpolicy}} &\rateinline{24.10}{0.24} &\rateinline{11.40}{0.39} &{\rateinline{\textbf{16.02}}{0.57}} &\rateinline{20.28}{0.13} &\rateinline{22.55}{0.12}&18.87\\
        SKL (0.1){\scriptsize~\citep{pmlr-v235-ko24c}}  &\rateinline{24.17}{0.24} &\rateinline{11.21}{0.53} &\rateinline{15.29}{0.24} &\rateinline{22.65}{0.14} &\rateinline{24.69}{0.11} &19.60\\
        SRKL (0.1){\scriptsize~\citep{pmlr-v235-ko24c}} &{\rateinline{\underline{24.53}}{0.21}} &{\rateinline{\textbf{12.19}}{0.29}} &\rateinline{15.63}{0.22} &\rateinline{23.37}{0.27} &\rateinline{24.28}{0.18}&\underline{20.00}\\
        AB (0.2, 0.7){\scriptsize~\citep{wang2025abkd}}   &\rateinline{24.20}{0.12} &\rateinline{11.82}{0.29} &{\rateinline{\underline{15.87}}{0.36}} &\rateinline{21.44}{0.20} &{\rateinline{\underline{25.59}}{0.09}} &19.78 \\
         \rowcolor{gray!25} CSD (Ours)  &{\rateinline{\textbf{24.94}}{0.29}} &{\rateinline{\underline{12.06}}{0.46}} &\rateinline{15.78}{0.49} &{\rateinline{\textbf{24.60}}{0.31}} &{\rateinline{\textbf{25.88}}{0.13}} &\textbf{20.65} \\
       \bottomrule
    \end{tabular}
    \vspace{-4mm}
    \label{tab:main1}
\end{table}
\begin{figure*}
\vspace{-3mm}
    \begin{minipage}[h]{0.49\textwidth}
             \begin{subfigure}{1.0\textwidth}
                 \includegraphics[width=\linewidth]{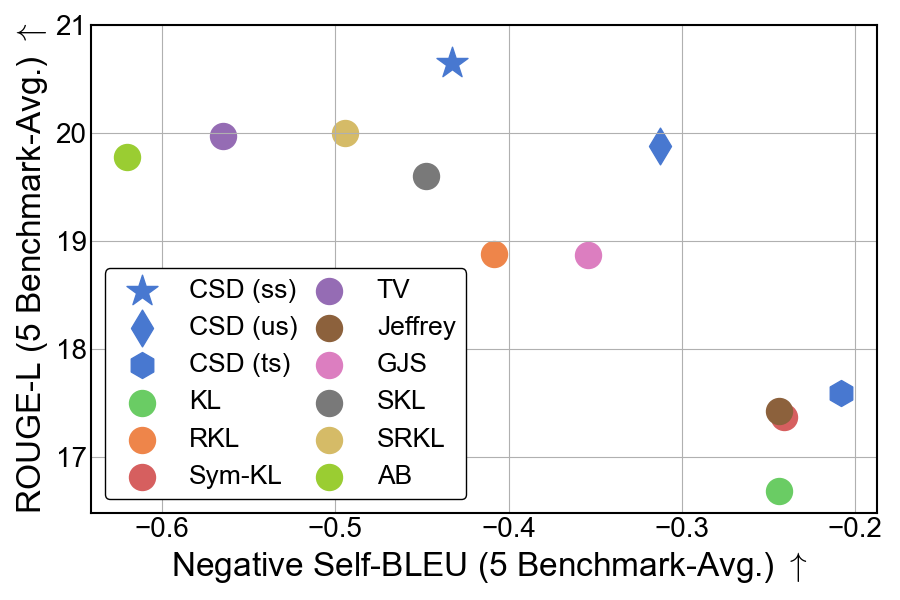}
                 \vspace{-2.5mm}
                 \caption{Fidelity vs. Diversity trade-off.}
                 \label{fig:3.b}
             \end{subfigure}
        \end{minipage}
        \begin{minipage}[h]{0.49\textwidth}
             \begin{subfigure}{1.0\textwidth}
             \centering
            \includegraphics[width=\linewidth]{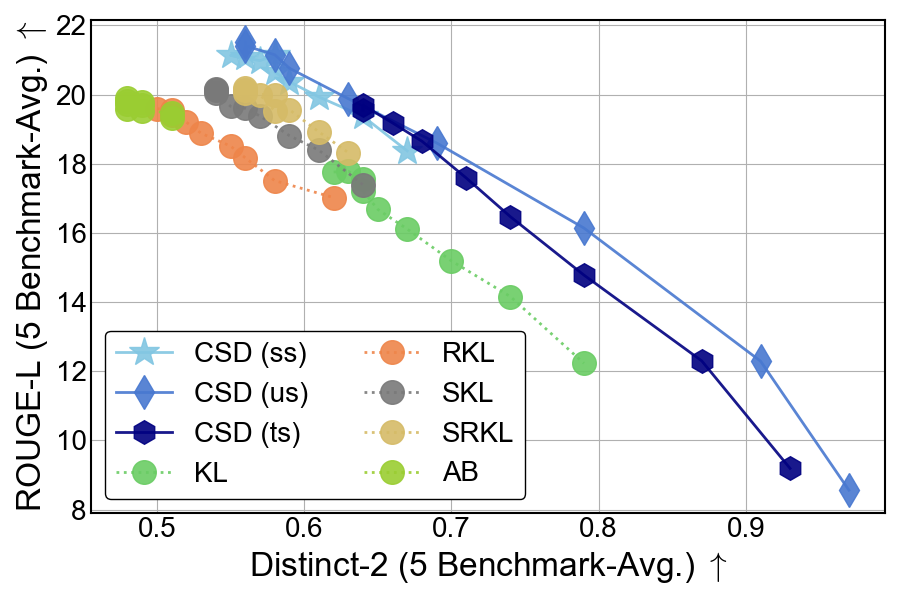}
            \vspace{-2.5mm}
                 \caption{Decoding temperature ablation in $[0.2, 1.8]$.}
                 \label{fig:3.c}
             \end{subfigure}
        \end{minipage}
        \vspace{-2.5mm}
        \caption{An in-depth analysis of the distributional behavior of different loss functions.}
        \vspace{-9.5mm}
            \label{fig:3}
\end{figure*}
\renewcommand{\arraystretch}{1.0}


\setlength{\tabcolsep}{3.0pt}
\begin{table}[t]
    \centering
    \vspace{0mm}
   \small
\caption{Instruction-following performance of CSD with on-policy techniques for various backbones. $\mathcal{D}$ denotes the distillation dataset. ROUGE-L scores are averaged over five random seeds, with the best score for each student highlighted in \textbf{bold}. \textcolor{black}{CSD and DLD use the student probability weighting.}}
    \vspace{-3mm}
        \begin{tabular}{lcccccccc}
        \toprule
          Method &Loss& $\mathcal{D}$&{\scriptsize Dolly Eval }  &  {\scriptsize Self-Instruct} & {\scriptsize Vicuna Eval }  & {\scriptsize Super-NI } & {\scriptsize UnNI } & {\scriptsize Avg. ($\uparrow$)}\\
        \midrule
         \multicolumn{2}{l}{Teacher (\texttt{GPT-2-1.5B})} && \rateinline{27.00}{0.19} & \rateinline{14.07}{0.37}  &\rateinline{16.31}{0.32} &\rateinline{26.46}{0.41}  &\rateinline{31.10}{0.06} &22.99 \\
         \multicolumn{2}{l}{Teacher (\texttt{OpenLLaMA-7B})} &&\rateinline{27.60}{0.34} &\rateinline{18.17}{0.80}  &\rateinline{17.85}{0.48} &\rateinline{31.05}{0.31}  &\rateinline{32.40}{0.28}  &25.41 \\
        \midrule
        \midrule
        \multicolumn{4}{l}{\texttt{GPT-2-1.5B} $\rightarrow$ \texttt{GPT-2-0.1B}}  \\
        \midrule
        GKD{\scriptsize~\citep{agarwal2024onpolicy}}&GJS &Mix&  \rateinline{22.48}{0.20} & \rateinline{10.08}{0.67} & \rateinline{15.61}{0.08} & \rateinline{13.88}{0.21} & \rateinline{16.59}{0.13} & 15.73\\
        DistiLLM{\scriptsize~~\citep{pmlr-v235-ko24c}}&SKL &Ada&  \rateinline{25.28}{0.28} &\rateinline{12.04}{0.49} &\rateinline{16.66}{0.34} &\rateinline{22.13}{0.31} &\rateinline{24.32}{0.14} &20.09\\
        ImitKD{\scriptsize~\citep{lin-etal-2020-autoregressive}}&KL&On&\rateinline{21.79}{0.18} &\rateinline{10.25}{0.37} &\rateinline{14.65}{0.62} &\rateinline{17.35}{0.12} &\rateinline{19.43}{0.13} &16.69\\ 
         \textcolor{black}{GKD + DLD}&DLD &Mix 
            &\rateinline{25.29}{0.50}
            &\rateinline{\textbf{12.51}}{0.62}
            &\rateinline{16.59}{0.28}
            &\rateinline{20.87}{0.37}
            &\rateinline{22.63}{0.14}
            &19.58\\
            \textcolor{black}{DistiLLM + DLD}&DLD &Ada
            &\rateinline{24.23}{0.23}
            &\rateinline{11.86}{0.51}
            &\rateinline{\textbf{17.69}}{0.18}
            &\rateinline{19.60}{0.13}
            &\rateinline{22.77}{0.22}
            &19.23\\
            \textcolor{black}{ImitKD + DLD} &DLD&On
            &\rateinline{24.69}{0.28}
            &\rateinline{12.10}{0.40}
            &\rateinline{16.77}{0.55}
            &\rateinline{21.58}{0.36}
            &\rateinline{23.93}{0.08}
            &19.81\\
        \rowcolor{gray!25}GKD + Ours &CSD&Mix&  \rateinline{25.50}{0.34} &\rateinline{12.03}{0.65} &\rateinline{16.65}{0.45} &\rateinline{21.39}{0.14} &\rateinline{23.48}{0.03} & 19.81\\
        \rowcolor{gray!25}DistiLLM + Ours &CSD& Ada& \rateinline{25.34}{0.27} &\rateinline{11.93}{0.36} &\rateinline{16.99}{0.29} &\rateinline{\textbf{22.96}}{0.24} &\rateinline{24.72}{0.09} &20.39\\
        \rowcolor{gray!25}ImitKD + Ours &CSD&On& \rateinline{\textbf{25.70}}{0.23} &\rateinline{{12.40}}{0.48} &\rateinline{{17.18}}{0.52} &\rateinline{22.91}{0.46} &\rateinline{\textbf{25.47}}{0.17} &\textbf{20.73}\\
        \midrule
        \midrule
        \multicolumn{4}{l}{\texttt{GPT-2-1.5B} $\rightarrow$ \texttt{GPT-2-0.3B}}  \\
        \midrule
        GKD{\scriptsize~\citep{agarwal2024onpolicy}}&GJS&Mix & \rateinline{25.15}{0.41} &\rateinline{11.22}{0.33} &\rateinline{16.45}{0.48} &\rateinline{17.35}{0.29} &\rateinline{22.25}{0.05} &18.48\\
        DistiLLM{\scriptsize~\citep{pmlr-v235-ko24c}}&SRKL&Ada & \rateinline{26.92}{0.23} &\rateinline{13.75}{0.29} &\rateinline{16.90}{0.25} &\rateinline{26.12}{0.27} &\rateinline{29.65}{0.14} &22.67\\
        ImitKD{\scriptsize~\citep{lin-etal-2020-autoregressive}}&KL&On& \rateinline{23.61}{0.34} &\rateinline{12.37}{0.26} &\rateinline{15.53}{0.27} &\rateinline{20.20}{0.20} &\rateinline{24.42}{0.29} &19.23\\
        \textcolor{black}{GKD + DLD}&DLD &Mix
        &\rateinline{26.06}{0.31}
        &\rateinline{13.51}{0.35}
        &\rateinline{16.94}{0.24}
        &\rateinline{23.91}{0.50}
        &\rateinline{27.33}{0.09}
        &21.55\\

        \textcolor{black}{DistiLLM + DLD}&DLD &Ada
        &\rateinline{25.43}{0.37}
        &\rateinline{12.64}{0.40}
        &\rateinline{16.91}{0.61}
        &\rateinline{22.69}{0.24}
        &\rateinline{25.60}{0.10}
        &20.65\\

        \textcolor{black}{ImitKD + DLD} &DLD&On
        &\rateinline{25.82}{0.37}
        &\rateinline{13.64}{0.33}
        &\rateinline{\textbf{17.55}}{0.16}
        &\rateinline{25.51}{0.21}
        &\rateinline{29.07}{0.08}
        &22.32\\

        \rowcolor{gray!25}GKD + Ours &CSD &Mix& \rateinline{27.11}{0.42} &\rateinline{13.71}{0.45} &\rateinline{16.98}{0.29} &\rateinline{25.49}{0.35} &\rateinline{30.16}{0.13} &22.69\\
        \rowcolor{gray!25}DistiLLM + Ours &CSD&Ada&\rateinline{26.77}{0.18} &\rateinline{13.96}{0.62} &\rateinline{{17.05}}{0.34} &\rateinline{\textbf{26.29}}{0.08} &\rateinline{29.56}{0.09} &22.72\\
        \rowcolor{gray!25}ImitKD + Ours &CSD&On& \rateinline{\textbf{27.14}}{0.28} &\rateinline{\textbf{14.85}}{0.66} &\rateinline{16.88}{0.18} &\rateinline{26.28}{0.21} &\rateinline{\textbf{30.43}}{0.04} &\textbf{23.12}\\
        \midrule
        \midrule
        \multicolumn{4}{l}{\texttt{OpenLLaMA-7B} $\rightarrow$ \texttt{OpenLLaMA-3B}}\\
        \midrule
        TAID{\scriptsize~\citep{shing2025taid}} &tKL&Ada& \rateinline{26.53}{0.23} & \rateinline{17.73}{0.69} &\rateinline{18.14}{0.39} &\rateinline{31.93}{0.23} &\rateinline{31.55}{0.12}  &25.18\\
        DistiLLM{\scriptsize~\citep{pmlr-v235-ko24c}} &SKL&Ada&\rateinline{28.63}{0.28}  &\rateinline{20.20}{0.66} &\rateinline{19.15}{0.32}  &\rateinline{35.31}{0.19} &\rateinline{34.74}{0.10}  &27.61\\
        DistiLLM{\scriptsize~\citep{pmlr-v235-ko24c}}  &SRKL&Ada&\rateinline{28.83}{0.41} &\rateinline{20.76}{0.37}  &\rateinline{19.37}{0.15} &\rateinline{\textbf{36.82}}{0.14} &\rateinline{35.76}{0.13}  &28.31\\
        \textcolor{black}{ImitKD + DLD}  &DLD&On
        &\rateinline{29.07}{0.43}
        &\rateinline{20.07}{0.60}
        &\rateinline{20.05}{0.37}
        &\rateinline{36.30}{0.41}
        &\rateinline{35.71}{0.14}
        &28.24\\
        \textcolor{black}{ImitKD + DLD-mean}  &DLD&On
        &\rateinline{28.13}{0.36}
        &\rateinline{19.91}{0.50}
        &\rateinline{19.58}{0.55}
        &\rateinline{35.85}{0.50}
        &\rateinline{35.49}{0.12}
        &27.79\\

        \rowcolor{gray!25}ImitKD + Ours &CSD&On& \rateinline{\textbf{29.63}}{0.40}&\rateinline{\textbf{21.81}}{0.47}& \rateinline{\textbf{20.37}}{0.51} &\rateinline{36.49}{0.13}  &\rateinline{\textbf{36.86}}{0.10}  &\textbf{29.03}\\
        \midrule
       \bottomrule
    \end{tabular}
    \label{tab:main2}
    \vspace{-2mm}
\end{table}
\renewcommand{\arraystretch}{1.0}
\setlength{\tabcolsep}{4pt}
\begin{table}[t]
 \vspace{-2mm}
    \begin{minipage}[t]{0.49\textwidth}
    \setlength{\abovecaptionskip}{1pt}
    \centering
    \raisebox{-15mm}{\includegraphics[width=0.9\textwidth]{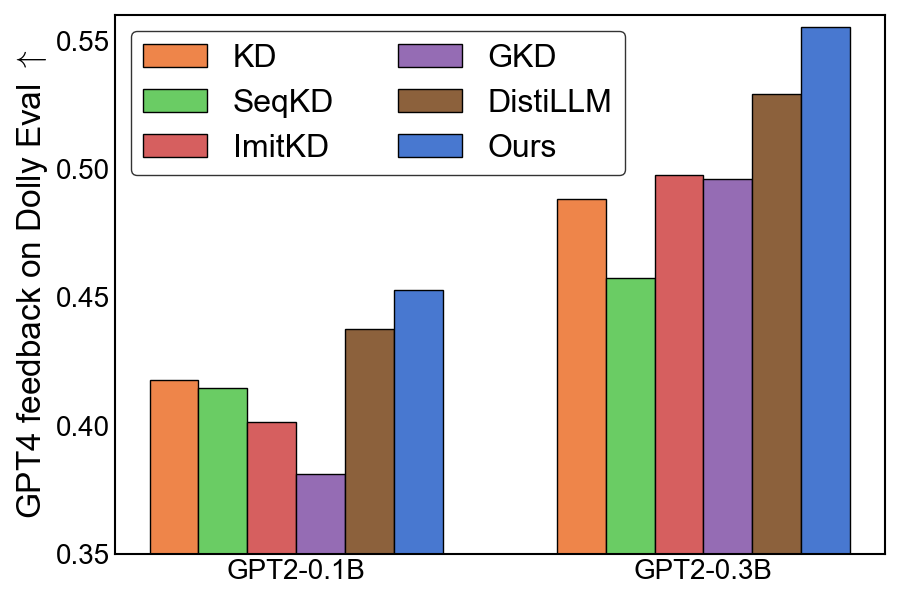}}
    \captionof{figure}{GPT-4 feedback performance, showing the proportion of responses judged correct relative to the golden answers. The teacher’s score is 0.61.}
    \label{fig:4}
    \end{minipage}
    \hfill
    \begin{minipage}[h]{0.49\textwidth}
    \setlength{\abovecaptionskip}{0pt}
    \centering
    \caption{Task-specific distillation from \texttt{Gemma-7B-IT} to \texttt{Gemma-2B-IT}.}
     \adjustbox{max width=\textwidth}{%
        \begin{tabular}{lccc}
        \toprule
          & Summarization & Translation & GSM8K \\
        \cmidrule(lr){2-4} 
         Loss&ROUGE-L&COMET&Accuracy\\
        \midrule
        Teacher & 37.09 & 79.23 & 60.27 \\
        \midrule
        KL& 35.02 & 73.96 & 24.03 \\
        JS& 35.60 & 74.05 & 23.73 \\
        TV& 27.49 & 73.73 & 0.00 \\
        Jeffrey& 35.29 & 74.02 & 23.28 \\
        SKL& 25.86 & 59.65 & 0.00 \\
        SRKL& 26.68 & 73.10 & 0.00 \\
        \textcolor{black}{RKL}& 0.00 & 45.02 & 0.00 \\
        \textcolor{black}{DLD $(S)$}& 0.00 & 21.52 & 0.00 \\
        \textcolor{black}{DLD-max $(T)$}& 32.54 & 65.28 & 17.74 \\
        \rowcolor{gray!25}CSD $(T,S)$& \textbf{35.67} & \textbf{74.14} & \textbf{25.78} \\
       \bottomrule
    \end{tabular}
    \label{tab:main3}
    }
    \end{minipage}
    \vspace{-5mm}
\end{table}

\textbf{Loss-level comparison:} To purely analyze the effect of the distillation loss itself, this comparison excludes the use of pretraining losses, initialization with an SFT-tuned student, and any on-policy techniques. \Cref{tab:main1} shows that the proposed CSD objective outperforms the other nine objectives, ranking first on three of the five benchmarks, second on one, and achieving the highest average score. SKL~\citep{pmlr-v235-ko24c} and AB~\citep{wang2025abkd} exhibit slightly lower performance than previously reported, likely due to their reliance on pretraining losses or on-policy techniques. \Cref{fig:3.b} shows the fidelity–diversity trade-off based on ROUGE-L and Self-BLEU scores. Traditionally, KL favors diversity, whereas RKL favors mode-seeking. Within this trade-off, SKL, SRKL, TV, and AB achieve higher ROUGE-L scores than RKL, but at the cost of reduced diversity, reflecting a stronger emphasis on fidelity. Diversity, however, remains an important aspect of user experience in instruction-following, and it becomes a valuable metric as it enhances performance when combined with best-of-N sampling. The proposed CSD provides an additional lever to control the fidelity–diversity trade-off. By default, using the detached student probabilities ($S, S$) yields the highest fidelity. Replacing one side with uniform ($U, S$) or with the teacher ($T, S$) gradually increases diversity. This is likely because the ($S, S$) makes the model focus only on regions where the student already assigns a high likelihood, limiting its exploratory ability. 
The trade-off offered by CSD envelopes those of existing losses, and we expect that even better operating points may exist within the design space of $w_1$ and $w_2$. \Cref{fig:3.c} presents an ablation on temperature, which enables easy adjustment of the trade-off during inference. Even within a reasonable range of decoding temperature, CSD achieves better trade-off points than other losses. In particular, CSD ($U, S$) demonstrates a well-balanced exchange between diversity and fidelity through temperature adjustment.

\renewcommand{\arraystretch}{1.0}
\textbf{Orthogonal improvement with recent on-policy advances:} 
\Cref{tab:main2} reports the performance of recent distillation baselines augmented with the CSD loss, demonstrating its orthogonal applicability. We applied the CSD ($S,S$) \textcolor{black}{and DLD ($S$)} loss to ImitKD~\citep{lin-etal-2020-autoregressive}, GKD~\citep{agarwal2024onpolicy}, and DistiLLM~\citep{pmlr-v235-ko24c}. \textcolor{black}{DLD-mean refers to the mean-centered DLD variant, as described in \Cref{sec:A.4}.} The primary distinction among these methods, apart from their losses, lies in the choice of dataset $\mathcal{D}$: ImitKD uses purely student-generated on-policy data, GKD combines fixed data with student outputs, and DistiLLM adaptively selects between them based on validation loss. As a result, the average ROUGE-L score improved for both \texttt{GPT-2-0.1B} and \texttt{GPT-2-0.3B} students in all settings, compared to both the baseline \textcolor{black}{and the corresponding DLD versions.} The best result on each benchmark was also achieved by our method, with particularly strong performance under pure on-policy settings. We also evaluated using GPT-4 as the judge in \Cref{fig:4}, where our best model was judged superior to other baselines. There may exist CSD variants other than ($S,S$) that perform better for specific $\mathcal{D}$, but we leave this exploration to future work. Finally, applying our best setting to a larger \texttt{OpenLLaMA} also outperformed baselines, demonstrating the scalability of CSD with respect to model size. We provide comparisons with more baselines in \Cref{tab:main9} of \Cref{sec:D}.

\subsection{Task-Specific Distillation}
\label{sec:4.2}

We evaluate the effectiveness of CSD across dialogue summarization, low-resource translation, and arithmetic reasoning tasks. Distillation was conducted on \textcolor{black}{1,000 samples} from the DialogSum~\citep{chen-etal-2021-dialogsum}, Flores-200~\citep{costa2022no}, and GSM8K~\citep{cobbe2021training} datasets, following the experimental setup of \citet{xu2025speculative} (Please refer to \Cref{sec.B.2} for further details). \Cref{tab:main3} compares performance across the three tasks against baseline loss functions. Under identical experimental conditions, the proposed CSD objective achieved the best results on all tasks. For the arithmetic reasoning task, we observed several cases in which certain losses yielded zero accuracy. \textcolor{black}{A case study in \Cref{tab:main5,tab:main6,tab:main7,tab:main8,tab:qualitative_dld} of \Cref{sec:D} }shows that these models often produce excessively long chains of thought without arriving at a final answer, indicating a failure to learn proper formatting; furthermore, much of the reasoning itself is incorrect. As illustrated in \Cref{fig:3.b}, the RKL, TV, SKL, and SRKL losses exhibit mode-seeking tendencies, which we conjecture may have caused collapses into suboptimal modes \textcolor{black}{under these limited data distillation settings. Similarly, CSD $(S, S)$, which also shows mode-seeking behavior, performed poorly as 21.09, 63.78, and 0.00 on summarization, translation, and GSM8K, respectively. CSD achieved stable performance in this case by using the $(T, S)$ weighting. DLD likewise performed poorly under student weighting and became only marginally stable when teacher weighting was applied. A full ablation is provided in \Cref{tab:main300} of \Cref{sec:D} for DLD variants. Across all cases, DLD remained significantly weaker than CSD, likely due to its restricted solution space being more detrimental under limited-data distillation.}

\setlength{\tabcolsep}{2pt}
\begin{table}[t]
\centering
\small
\caption{\textcolor{black}{Benchmarking result for general chat capability. The best score is highlighted in \textbf{bold}.}}
    \vspace{-3mm}
\begin{tabular}{l|cccccc}
\toprule
 & 
\multicolumn{3}{c}{\textbf{{\footnotesize~Qwen2.5-7B-IT → Qwen2.5-1.5B-IT}}} &
\multicolumn{3}{c}{\textbf{{\footnotesize~Gemma2-9B-IT → Gemma2-2B-IT}}} \\
 \cmidrule(lr){2-4} \cmidrule(lr){5-7}
Benchmark & \multicolumn{2}{c}{{\small~MT-Bench (0-10)}} & {\small~AlpacaEval (WR)} &
  \multicolumn{2}{c}{{\small~MT-Bench (0-10)}} & {\small~AlpacaEval (WR)} \\
   \cmidrule(lr){2-4} \cmidrule(lr){5-7}
Judge & {\small~GPT4} & {\small~GPT4-Turbo} & {\small~GPT4-Turbo} & {\small~GPT4} & {\small~GPT4-Turbo} & {\small~GPT4-Turbo} \\
\midrule
\textcolor{black}{Teacher} & 8.59 & 7.52 & 88.69 & 8.91 & 7.66 & 94.60 \\
\midrule
\textcolor{black}{DPKD}{\scriptsize~\citep{li2024direct}}  & 1.04 & 1.09 & 0.32 & 6.30 & 4.89 & 71.18 \\
\textcolor{black}{DistiLLM-2}{\scriptsize~\citep{ko2025distillm}} & 7.28 & 5.75 & \textbf{70.42} & 7.81 & 6.45 & 89.91 \\
\textcolor{black}{DLD ($T$) }& 7.25 & 5.56 & 69.80 & 5.85 & 4.45 & 31.24 \\
\textcolor{black}{DLD ($S$)} & 7.28 & 5.74 & 67.67 & 7.58 & 6.53 & 89.84 \\
\rowcolor{gray!25}\textcolor{black}{CSD ($T, S$) }& 7.42 & 5.90 & \textbf{70.42} & \textbf{7.85} & \textbf{6.55} & 89.92 \\
\rowcolor{gray!25}\textcolor{black}{CSD ($S, S$)}& \textbf{7.69} & \textbf{5.95} & 69.64 & 7.77 & 6.43 & \textbf{90.05} \\
\bottomrule
\end{tabular}
    \vspace{-7mm}
\label{tab:main400}
\end{table}

\color{black}
\subsection{General Chat Capability Distillation}
\label{sec:4.4}
To evaluate general chat performance, we conducted distillation experiments using the latest instruction-tuned models, Qwen2.5-Instruct~\citep{team2024qwen2} and Gemma2-Instruct~\citep{team2024gemma}. We followed DistiLLM-2~\citep{ko2025distillm} and performed distillation using the 50K subset of the UltraChat dataset~\citep{ding2023enhancing} (Please refer to \Cref{sec.B.3} for further details). \Cref{tab:main400} shows that CSD outperformed both DistiLLM-2 and DLD on MT-Bench~\citep{zheng2023judging} and AlpacaEval~\citep{li2023alpacaeval} win rate (against \texttt{text-davinci-003}), demonstrating superior performance. These results further demonstrate the scalability of CSD.

\color{black}
\subsection{Ablation Studies}
\label{sec:4.3}
\textbf{CSD vs DLD:} \Cref{tab:main4} presents comprehensive ablation studies on the weighting function choices in CSD and compares them with direct logit distillation. DLD accommodates only a single weighting function. When comparing DLD with $T$, $U$, and $S$ against CSD with ($T, T$), ($U, U$), and ($S, S$), respectively, our method consistently achieved higher average scores. As shown in \Cref{thma}, we hypothesize that the broader solution space positively contributed to this improvement. \textcolor{black}{We also compared CSD against other shift-aware DLD variants such as DLD-min and DLD-max, as well as ranking-matching variants including DLD-std~\citep{sun2024logit} and DLD-range. However, none of these methods outperformed the naive DLD baseline. \Cref{fig:5.c} shows the effect of temperature scaling on the weighting function of DLD (s). Under the same temperature scaling, CSD consistently outperformed DLD.} \Cref{fig:8} in \Cref{sec:D} shows that DLD restricts solutions to those with a residual constant of zero; CSD adapts residual constants per token, providing evidence that it explores a broader solution set. 

\textbf{Design choice of CSD:} CSD provides a more flexible loss design space through two weighting functions. \textcolor{black}{While $(S, S)$ provides high-fidelity generation, ($U, S$) and ($T, S$) have its own benefits.} As illustrated in \Cref{fig:3.b}, replacing ($S, S$) with ($U, S$) or ($T, S$) reduces ROUGE-L but increases diversity, highlighting that CSD can adapt to tasks requiring either mode-covering or mode-seeking properties. \textcolor{black}{As shown in \Cref{fig:101} in \Cref{sec:D}, the ($U, S$) weighting substantially reduces the gradient concentration on a small subset of vocabulary tokens that is induced by softmax. This allows all vocabulary items to be learned more evenly, which in turn produces the pattern in \Cref{fig:8.c} where the logits residuals are tightly centered around their offsets. When minority-vocabulary logits are well learned, performance improves noticeably in situations where their contribution becomes more significant, such as under high-temperature sampling. This explains why ($U, S$) performs exceptionally well in the high-diversity region of \Cref{fig:3.c}. \Cref{fig:5.b} measures probability calibration performance. While $(S, S)$ provides high-fidelity generation, it becomes overconfident. Therefore, in scenarios where probability calibration is preferred, we recommend using the $(T, S)$ weighting. Better calibration can potentially improve training stability in small-data settings where the optimization landscape is more difficult, as in \Cref{sec:4.2}.} \Cref{fig:5.a} compares the method to resolve $\mathcal{O}(|\vocab|^2)$ computational cost: 1) using analytic gradient computation from \Cref{thmb} and 2) Monte Carlo sampling. \textcolor{black}{In both cases, the performance is far superior to KL, but the analytic CSD shows slightly faster training and better convergence. Thus, we recommend using the analytic gradient whenever possible. However, the Monte Carlo variant allows extensions such as joint weighting functions or using Lp loss instead of L2, making it a viable option for more complex modeling.}

\setlength{\tabcolsep}{5pt}
\begin{table}[t]
    \centering
    \footnotesize
    \caption{Ablation on the logit-level loss design space using \texttt{GPT-2-0.1B} student. $T$, $U$, and $S$ denote teacher, uniform, and detached student probabilities. ROUGE-L scores are averaged over five seeds; best scores are in \textbf{bold}. \textcolor{black}{Please refer to \Cref{sec:A.4} for DLD variants details.}} 
     \vspace{-4mm}
     \adjustbox{max width=\textwidth}{%
        \begin{tabular}{lcc|cccccc}
        \toprule
         Loss &$w_1(\cdot)$&$w_2(\cdot)$& {\scriptsize Dolly Eval }  &  {\scriptsize Self-Instruct} & {\scriptsize Vicuna Eval }  & {\scriptsize Super-NI } & {\scriptsize UnNI } & {\scriptsize Avg. ($\uparrow$)} \\
        \midrule
            &$T$&-
        &\rateinline{0.09}{0.02}
        &\rateinline{0.07}{0.02}
        &\rateinline{0.18}{0.03}
        &\rateinline{0.07}{0.01}
        &\rateinline{0.06}{0.00}
        &0.09\\

        DLD&$U$&-
        &\rateinline{11.25}{0.30}
        &\rateinline{5.55}{0.63}
        &\rateinline{9.10}{0.27}
        &\rateinline{9.02}{0.14}
        &\rateinline{8.24}{0.07}
        &8.63\\

        &$S$&-
        &\rateinline{24.22}{0.24}
        &\rateinline{12.01}{0.40}
        &\rateinline{15.42}{0.31}
        &{\rateinline{\textbf{25.44}}{0.19}}
        &\rateinline{24.88}{0.19}
        &20.39\\

       \midrule
       &$T$&-
       &\rateinline{1.14}{0.01}
       &\rateinline{0.93}{0.05}
       &\rateinline{1.65}{0.13}
       &\rateinline{0.85}{0.01}
       &\rateinline{0.87}{0.01}
       &1.09\\

        \textcolor{black}{DLD-min}&$U$&-
        &\rateinline{7.16}{0.13}
        &\rateinline{4.53}{0.20}
        &\rateinline{7.49}{0.19}
        &\rateinline{7.20}{0.04}
        &\rateinline{5.97}{0.05}
        &6.47\\

        &$S$&-
        &\rateinline{23.89}{0.38}
        &\rateinline{11.11}{0.17}
        &\rateinline{15.43}{0.36}
        &\rateinline{23.78}{0.24}
        &\rateinline{25.87}{0.11}
        &20.02\\

       \midrule
       &$T$&-
       &\rateinline{0.39}{0.02}
       &\rateinline{0.32}{0.03}
       &\rateinline{0.73}{0.05}
       &\rateinline{0.22}{0.01}
       &\rateinline{0.21}{0.01}
       &0.37\\

        \textcolor{black}{DLD-max}&$U$&-
        &\rateinline{6.47}{0.06}
        &\rateinline{4.81}{0.07}
        &\rateinline{6.67}{0.24}
        &\rateinline{6.80}{0.06}
        &\rateinline{5.17}{0.01}
        &5.98\\

        &$S$&-
        &\rateinline{9.65}{0.28}
        &\rateinline{5.81}{0.11}
        &\rateinline{8.66}{0.43}
        &\rateinline{11.73}{0.24}
        &\rateinline{11.62}{0.12}
        &9.49\\

       \midrule
       &$T$&-
       &\rateinline{5.98}{0.18}
       &\rateinline{4.80}{0.14}
       &\rateinline{7.98}{0.15}
       &\rateinline{4.93}{0.06}
       &\rateinline{4.85}{0.05}
       &5.71\\

        \textcolor{black}{DLD-std}&$U$&-
        &\rateinline{21.45}{0.29}
        &\rateinline{10.55}{0.59}
        &\rateinline{\textbf{15.90}}{0.17}
        &\rateinline{18.85}{0.26}
        &\rateinline{20.82}{0.09}
        &17.51\\

        &$S$&-
        &\rateinline{9.74}{0.22}
        &\rateinline{5.67}{0.11}
        &\rateinline{12.29}{0.27}
        &\rateinline{7.07}{0.11}
        &\rateinline{6.97}{0.07}
        &8.35\\

       \midrule
       &$T$&-
       &\rateinline{0.85}{0.03}
       &\rateinline{0.73}{0.04}
       &\rateinline{1.44}{0.06}
       &\rateinline{0.53}{0.02}
       &\rateinline{0.51}{0.00}
       &0.81\\

        \textcolor{black}{DLD-range}&$U$&-
        &\rateinline{10.84}{0.12}
        &\rateinline{7.49}{0.14}
        &\rateinline{12.82}{0.30}
        &\rateinline{9.97}{0.04}
        &\rateinline{7.89}{0.01}
        &9.80\\

        &$S$&-
        &\rateinline{8.90}{0.17}
        &\rateinline{4.74}{0.15}
        &\rateinline{7.70}{0.56}
        &\rateinline{8.70}{0.07}
        &\rateinline{8.90}{0.04}
        &7.79\\

        \midrule
        \rowcolor{gray!25}&$T$&$T$ &\rateinline{6.82}{0.16} &\rateinline{4.24}{0.12}    &\rateinline{9.16}{0.25}    &\rateinline{4.53}{0.02}    &\rateinline{4.83}{0.02}    &5.91 \\
        \rowcolor{gray!25}&$U$&$U$  &\rateinline{17.21}{0.30}   &\rateinline{8.08}{0.39}  &\rateinline{14.27}{0.40} &\rateinline{13.19}{0.27}   &\rateinline{14.07}{0.04}   &13.37 \\

       \rowcolor{gray!25}CSD&$S$&$S$ &{\rateinline{\textbf{24.94}}{0.29}} &\rateinline{12.06}{0.46}   &{\rateinline{{15.78}}{0.49}}  &\rateinline{{24.60}}{0.31}   &{\rateinline{\textbf{25.88}}{0.13}}  &\textbf{20.65} \\

       \rowcolor{gray!25}(Ours)&$U$&$S$ &\rateinline{24.15}{0.55}   &\rateinline{\textbf{12.25}}{0.47}  &\rateinline{15.25}{0.41}   &\rateinline{22.55}{0.09}   &\rateinline{25.19}{0.12} &19.88 \\

       \rowcolor{gray!25} &$T$&$S$
       &\rateinline{22.77}{0.25}
       &\rateinline{10.62}{0.32}
       &\rateinline{14.06}{0.25}
       &\rateinline{18.81}{0.40}
       &\rateinline{21.71}{0.18}
       &17.59 \\

       \bottomrule
    \end{tabular}
    \vspace{-4mm}
    \renewcommand{\arraystretch}{1.0}
    \label{tab:main4}
    }
\end{table}
\begin{figure*}[t]
\vspace{-4mm}
\begin{minipage}[h]{0.32\textwidth}
             \begin{subfigure}{1.0\textwidth}
             \centering
            \includegraphics[width=\linewidth]{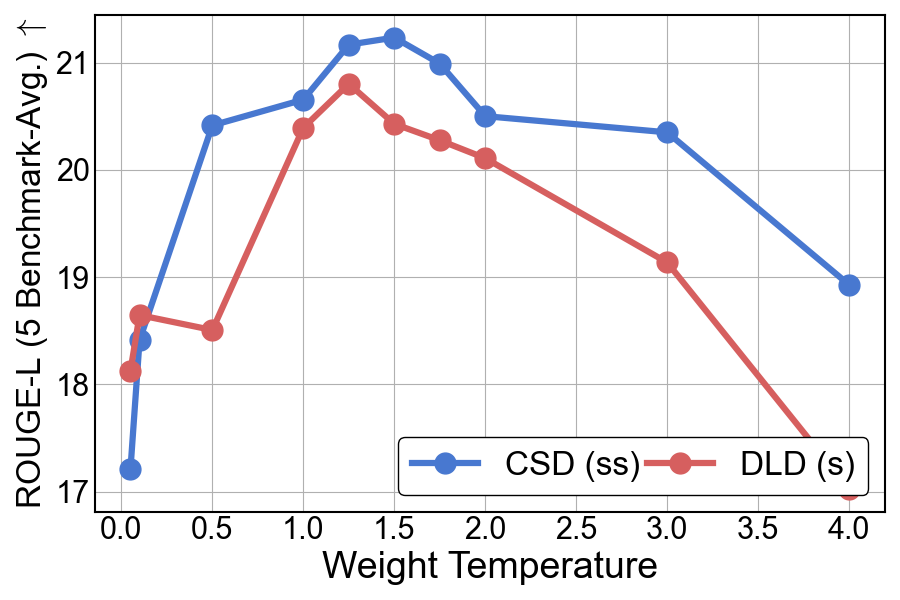}
                 \caption{\textcolor{black}{Weight scaling}}
                 \label{fig:5.c}
             \end{subfigure}
        \end{minipage}
        \begin{minipage}[h]{0.32\textwidth}
             \begin{subfigure}{1.0\textwidth}
             \centering
            \includegraphics[width=\linewidth]{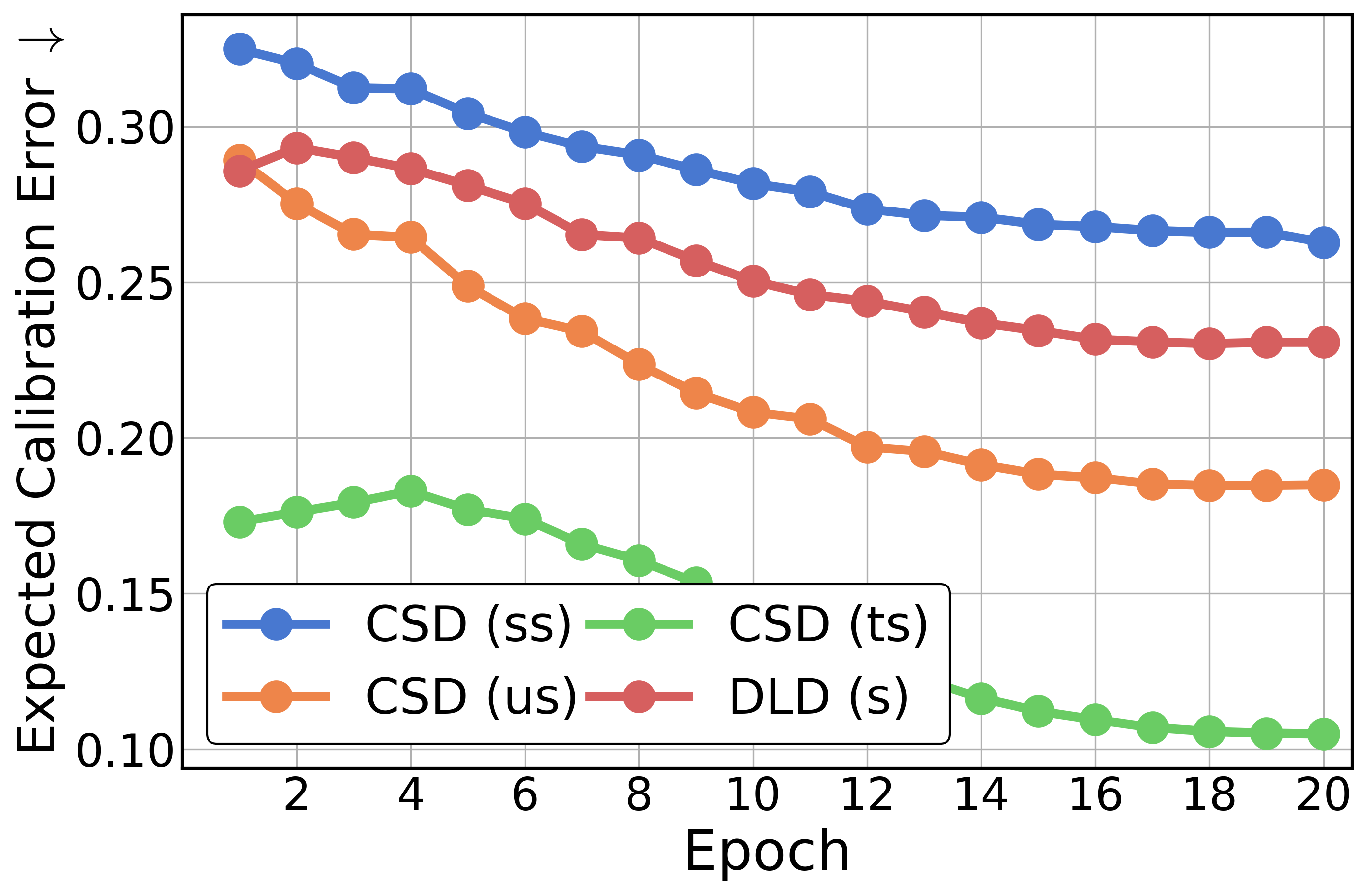}
                 \caption{\textcolor{black}{Calibration error}}
                 \label{fig:5.b}
             \end{subfigure}
        \end{minipage}
        \begin{minipage}[h]{0.32\textwidth}
             \begin{subfigure}{1.0\textwidth}
                 \includegraphics[width=\linewidth]{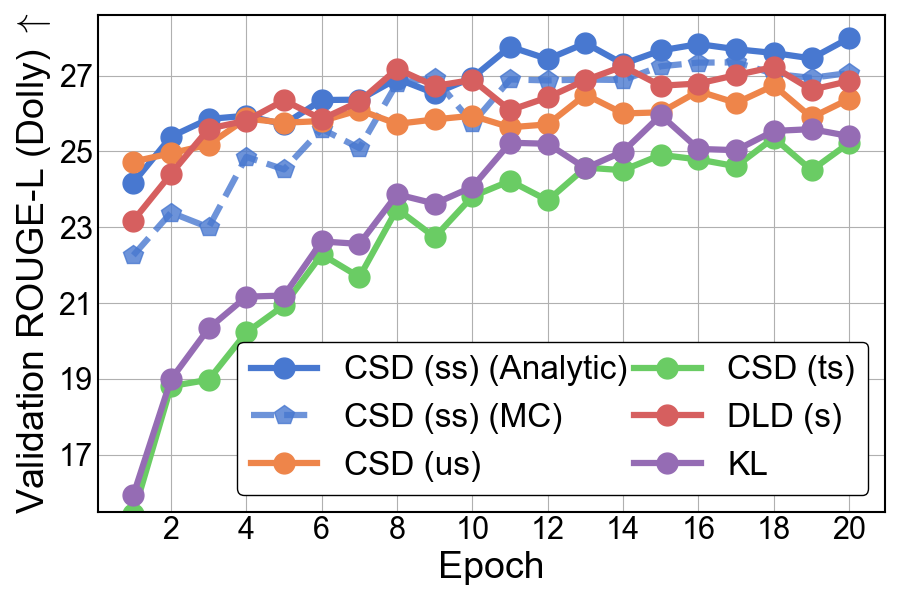}
                 \caption{\textcolor{black}{Training curve}}
                 \label{fig:5.a}
             \end{subfigure}
        \end{minipage}
        \vspace{-2.0mm}
        \caption{\textcolor{black}{Ablation studies for logit-level distillation loss design space.}}
            \label{fig:5}
            \vspace{-5.5mm}
\end{figure*}

\section{Conclusion}
We introduced \textit{Concrete Score Distillation} (CSD), a novel design space for distillation losses in large language models. CSD simultaneously addresses the challenges of softmax-induced smoothing and restrictions on the optimal solution set, which prior methods have failed to resolve together. Within this framework, we presented instances of both mode-covering and mode-seeking, and demonstrated scalability by consistently surpassing prior work across diverse tasks and model backbones up to 7B parameters. We anticipate that even better instances can be discovered within the proposed design space, particularly by refining $w_1$ and $w_2$ and adapting them to the type of data (fixed or on-policy). This points to promising directions for future exploration of improved instances.

\newpage

\subsubsection*{Acknowledgments}
This work was supported by the IITP (Institute of Information \& Communications Technology Planning \& Evaluation)-ITRC (Information Technology Research Center) grant funded by the Korea government (Ministry of Science and ICT) (IITP-2026-RS-2024-00437268). (50 $\%$) This research was supported by AI Technology Development for Commonsense Extraction, Reasoning, and Inference from Heterogeneous Data(IITP) funded by the Ministry of Science and ICT(RS-2022-II220077) (50 $\%$)

\bibliography{iclr2026_conference}
\bibliographystyle{iclr2026_conference}

\newpage
\appendix
\section{Proofs and derivations}
\label{sec:A}

\subsection{Proof of \texorpdfstring{\cref{prop}}{Proposition 1}}
\label{sec:A.1}

\propositiona*
\begin{proof}
We have the following objective:
\begin{align}
\mathcal{L}_\text{CSD}\left(\theta; p_T, w\right) :=\frac{1}{2}&\left[\sum_{y_{t}\in\mathcal{V}}\sum_{x\in\mathcal{V}}w(y_t, x)\left(\log{\frac{q_{\theta}(x|\rvc,\rvy_{<t})}{q_{\theta}(y_t|\rvc,\rvy_{<t})}} - \log{\frac{p_{T}(x|\rvc,\rvy_{<t})}{p_{T}(y_t|\rvc,\rvy_{<t})}}\right) ^2\right]\\
=\frac{1}{2}&\sum_{y_{t}\in\mathcal{V}}\sum_{x\in\mathcal{V}}w(y_t, x)\left(f_{\theta}[x] - f_{\theta}[y_t]- f_{T}[x] + f_{T}[y_t]\right)^2. \label{eq:csd_again}
\end{align}
Since the objective is a weighted sum of squares with strictly positive weights $w(\cdot, \cdot) > 0$, the loss attains its minimum if and only if each squared term vanishes, i.e.
\begin{equation}
f_{\theta^*}[x] - f_{\theta^*}[y_t] = f_T[x] - f_T[y_t], \quad \forall\, y_t, x \in \mathcal{V}. \label{eq:opt_con_ours}
\end{equation}
Then, the probability of a student satisfying the following:
\begin{align}
q_{\theta^*}(y_t|\rvc,\rvy_{<t})=\frac{\text{exp}(f_{\theta^*}[y_t])}{\sum_{x\in\mathcal{V}}\text{exp}(f_{\theta^*}[x]))} = \frac{\text{exp}(f_{\theta^*}[y_t])}{\sum_{x\in\mathcal{V}}\text{exp}(f_{\theta^*}[y_t]+f_{T}[x]-f_{T}[y_t]))} \\
= \frac{\text{exp}(f_{T}[y_t])}{\sum_{x\in\mathcal{V}}\text{exp}(f_{T}[x]))} = p_{T}(y_t|\rvc,\rvy_{<t}). 
\end{align}
\end{proof}
\subsection{Proof of \texorpdfstring{\cref{thma}}{Theorem 2}}
\label{sec:A.2}
\thma*
\begin{proof}
We have the following objective for direct logit distillation (DLD):
\begin{align}
\mathcal{L}_\text{DLD}\left(\theta;p_T, w\right) =\frac{1}{2}\sum_{y_{t}\in\mathcal{V}}w(y_t)\left(f_{\theta}[y_t] - f_{T}[y_t]\right)^2,
\end{align}
Since the loss is expressed as a strictly positive weighted sum of squares, it achieves its minimum value only when all squared terms are individually zero, i.e.,
\begin{equation}
f_{\theta_{\text{DLD}}^*}[y_t] =f_T[y_t], \quad \forall\, y_t \in \mathcal{V}.
\end{equation}
Unlike DLD, the optimality condition of our loss is more relaxed. Specifically, it is sufficient for $\theta^*$ to satisfy the condition in \cref{eq:opt_con_ours}, i.e.,
\begin{equation}
f_{\theta^*}[y_t] =f_T[y_t] +C, \quad \forall\, y_t \in \mathcal{V}, \quad C\in\mathbb{R}.
\end{equation}
At $C=0$, our objective recovers the solution set of DLD; for an arbitrary choice of $C$, it yields a strictly larger optimal solution set. This arises from the fact that the softmax mapping used to express probabilities is invariant under additive constants, whereas DLD explicitly constrains this constant to coincide with that of the teacher, which consequently reduces the solution set.

\end{proof}

\subsection{Proof of \texorpdfstring{\cref{thmb}}{Theorem 3}}
\label{sec:A.3}
\thmb*
\begin{proof}
We have the following objective:
\begin{align}
\mathcal{L}_\text{CSD}\left(\theta; p_{T}, w\right) =& \frac{1}{2}\sum_{y_{t}\in\mathcal{V}}\sum_{x\in\mathcal{V}}w_{1}(y_t)w_{2}(x)\left(\log{\frac{q_{\theta}(x|\rvc,\rvy_{<t})}{q_{\theta}(y_t|\rvc,\rvy_{<t})}} - \log{\frac{p_{T}(x|\rvc,\rvy_{<t})}{p_{T}(y_t|\rvc,\rvy_{<t})}}\right) ^2\\ 
=&\frac{1}{2}\sum_{y_{t}\in\mathcal{V}}\sum_{x\in\mathcal{V}}w_{1}(y_t)w_{2}(x)\left(f_{\theta}[x] - f_{\theta}[y_t]- f_{T}[x] + f_{T}[y_t]\right)^2.
\end{align}
And its gradient is given by:
\begin{align}
&\nabla_{\theta}\mathcal{L}_\text{CSD}\left(\theta; p_{T}, w\right)=\sum_{y_{t}\in\mathcal{V}}\sum_{x\in\mathcal{V}}w_{1}(y_t)w_{2}(x)\left(f_{\theta}[x] - f_{\theta}[y_t]- f_{T}[x] + f_{T}[y_t]\right) \nabla_{\theta}(f_{\theta}[x] - f_{\theta}[y_t]) \nonumber\\
&= \underbrace{\sum_{y_{t}\in\mathcal{V}}\sum_{x\in\mathcal{V}}w_{1}(y_t)w_{2}(x)\left(f_{\theta}[x] - f_{T}[x]\right) \nabla_{\theta}(f_{\theta}[x])}_{\textcircled{1}}  \underbrace{-\sum_{y_{t}\in\mathcal{V}}\sum_{x\in\mathcal{V}}w_{1}(y_t)w_{2}(x)\left(f_{\theta}[x] - f_{T}[x]\right) \nabla_{\theta}(f_{\theta}[y_t])}_{\textcircled{2}} \nonumber\\
&\underbrace{+\sum_{y_{t}\in\mathcal{V}}\sum_{x\in\mathcal{V}}w_{1}(y_t)w_{2}(x)\left(-f_{\theta}[y_t] + f_{T}[y_t]\right) \nabla_{\theta}(f_{\theta}[x])}_{\textcircled{3}} \underbrace{-\sum_{y_{t}\in\mathcal{V}}\sum_{x\in\mathcal{V}}w_{1}(y_t)w_{2}(x)\left(-f_{\theta}[y_t] + f_{T}[y_t]\right) \nabla_{\theta}(f_{\theta}[y_t])}_{\textcircled{4}}\nonumber
\end{align}

\begin{align}
\textcircled{1} & = \cancelto{1}{\sum_{y_{t}\in\mathcal{V}}w_{1}(y_t)} \times \sum_{x\in\mathcal{V}}w_{2}(x)\left(f_{\theta}[x] - f_{T}[x]\right) \nabla_{\theta}(f_{\theta}[x])= \sum_{y_t\in\mathcal{V}}w_{2}(y_t)\left(f_{\theta}[y_t] - f_{T}[y_t]\right) \nabla_{\theta}(f_{\theta}[y_t]) \nonumber \\
\textcircled{4}  & = \cancelto{1}{\sum_{x\in\mathcal{V}}w_{2}(x)} \times \sum_{y_t\in\mathcal{V}}w_{1}(y_t)\left(f_{\theta}[y_t] - f_{T}[y_t]\right) \nabla_{\theta}(f_{\theta}[y_t]) = \sum_{y_t\in\mathcal{V}}w_{1}(y_t)\left(f_{\theta}[y_t] - f_{T}[y_t]\right) \nabla_{\theta}(f_{\theta}[y_t]) \nonumber\\
\textcircled{2} &= -\left\{\sum_{x\in\mathcal{V}}w_{2}(x) \left(f_{\theta}[x] - f_{T}[x]\right)\right\} \times \left\{ \sum_{y_t\in\mathcal{V}} w_1(y_t)\nabla_{\theta}(f_{\theta}[y_t])\right\} \nonumber\\
&=-\mathbb{E}_{w_{2}(x)}\left[f_{\theta}[x] - f_{T}[x]\right] \times \sum_{y_t\in\mathcal{V}} w_1(y_t)\nabla_{\theta}(f_{\theta}[y_t])\nonumber\\
\textcircled{3} &= -\left\{\sum_{y_t\in\mathcal{V}}w_{1}(y_t) \left(f_{\theta}[y_t] - f_{T}[y_t]\right)\right\} \times \left\{ \sum_{x\in\mathcal{V}} w_2(x)\nabla_{\theta}(f_{\theta}[x])\right\}\nonumber\\
&=-\mathbb{E}_{w_{1}(y_t)}\left[f_{\theta}[y_t] - f_{T}[y_t]\right] \times \sum_{x\in\mathcal{V}} w_2(x)\nabla_{\theta}(f_{\theta}[x])\nonumber\\
&=-\mathbb{E}_{w_{1}(x)}\left[f_{\theta}[x] - f_{T}[x]\right] \times \sum_{y_t\in\mathcal{V}} w_2(y_t)\nabla_{\theta}(f_{\theta}[y_t])\nonumber
\end{align}

\begin{align}
\textcircled{4} + \textcircled{2} &= \sum_{y_t\in\mathcal{V}}{w_1(y_t)}\left(f_{\theta}[y_t] - f_{T}[y_t] - \mathbb{E}_{w_{2}(x)}[f_{\theta}[x] - f_{T}[x]]\right) \nabla_{\theta}(f_{\theta}[y_t])\nonumber\\
& = \sum_{y_t\in\mathcal{V}}{w_1(y_t)}\left( \tilde{f}^{w_2}_{\theta}[y_t] - \tilde{f}^{w_2}_{T}[y_t] \right) \nabla_{\theta}(f_{\theta}[y_t])\nonumber\\
\textcircled{1} + \textcircled{3} &= \sum_{y_t\in\mathcal{V}}{w_2(y_t)}\left(f_{\theta}[y_t] - f_{T}[y_t] - \mathbb{E}_{w_{1}(x)}[f_{\theta}[x] - f_{T}[x]]\right) \nabla_{\theta}(f_{\theta}[y_t])\nonumber\\
& = \sum_{y_t\in\mathcal{V}}{w_2(y_t)}\left( \tilde{f}^{w_1}_{\theta}[y_t] - \tilde{f}^{w_1}_{T}[y_t] \right) \nabla_{\theta}(f_{\theta}[y_t])\nonumber\\
\textcircled{1} + \textcircled{2} +\textcircled{3} + \textcircled{4} &= \sum_{y_t\in\mathcal{V}}\mathbf{w}(y_t)^T\left(\tilde{\mathbf{f}}_{\theta}[y_t] - \tilde{\mathbf{f}}_{T}[y_t] \right) \nabla_{\theta}(f_{\theta}[y_t]) \nonumber
\end{align}\end{proof}

\subsection{\textcolor{black}{The variants of direct logit distillation}}
\label{sec:A.4}
\color{black}
We define the DLD variants used for our comparisons in \Cref{tab:main2}, \Cref{tab:main3}, and \Cref{tab:main4}.
\begin{align}{}
&\mathcal{L}^{\text{min}}_{\text{DLD}}\left(\theta;p_T,w\right) =\frac{1}{2}\sum_{y_{t}\in\mathcal{V}}w(y_t)\left((f_{\theta}[y_t] - \min_{x}f_{\theta}[x]) - (f_{T}[y_t]- \min_{x}f_{T}[x])\right)^2,\nonumber \\
&\mathcal{L}^{\text{max}}_{\text{DLD}}\left(\theta;p_T,w\right) =\frac{1}{2}\sum_{y_{t}\in\mathcal{V}}w(y_t)\left((f_{\theta}[y_t] - \max_{x}f_{\theta}[x]) - (f_{T}[y_t]- \max_{x}f_{T}[x])\right)^2,\nonumber\\
&\mathcal{L}^{\text{std}}_{\text{DLD}}(\theta; p_T, w)
= \frac{1}{2} \sum_{y_t \in \mathcal{V}} 
w(y_t) \left(\frac{f_{\theta}[y_t] - \frac{1}{|\mathcal{V}|}\sum\limits_{x \in \mathcal{V}} f_{\theta}[x]}{\sqrt{\frac{1}{|\mathcal{V}|}\sum\limits_{x \in \mathcal{V}} \left(f_{\theta}[x] - \frac{1}{|\mathcal{V}|}\sum\limits_{x' \in \mathcal{V}} f_{\theta}[x']\right)^2}} -\frac{f_{T}[y_t] - \frac{1}{|\mathcal{V}|}\sum\limits_{x \in \mathcal{V}} f_{T}[x]}
     {\sqrt{\frac{1}{|\mathcal{V}|}\sum\limits_{x \in \mathcal{V}} \left(f_{T}[x] - \frac{1}{|\mathcal{V}|}\sum\limits_{x' \in \mathcal{V}} f_{T}[x']\right)^2}}
\right)^{2},\nonumber\\
&\mathcal{L}^{\text{range}}_{\text{DLD}}\left(\theta; p_T, w\right)
= \frac{1}{2}\sum_{y_t \in \mathcal{V}} w(y_t)
\left(
\left(
\frac{2\left(f_{\theta}[y_t] - \min_{x}f_{\theta}[x])\right)}{\max_{x}f_{\theta}[x] - \min_{x}f_{\theta}[x]} - 1
\right)
-
\left(
\frac{2\left(f_{T}[y_t] - \min_{x}f_{T}[x])\right)}{\max_{x}f_{T}[x] - \min_{x}f_{T}[x]} - 1
\right)
\right)^2,\nonumber\\
&\mathcal{L}^{\text{mean}}_{\text{DLD}}\left(\theta;p_T,w\right) =\frac{1}{2}\sum_{y_{t}\in\mathcal{V}}w(y_t)\left(\left(f_{\theta}[y_t] - \frac{1}{|V|}\sum_{x\in\mathcal{V}}f_{\theta}[x]\right) - \left(f_{T}[y_t]- \frac{1}{|V|}\sum_{x\in\mathcal{V}}f_{T}[x]\right)\right)^2,\nonumber\\
&\mathcal{L}^{\text{w-mean}}_{\text{DLD}}\left(\theta;p_T,w\right) =\frac{1}{2}\sum_{y_{t}\in\mathcal{V}}w(y_t)\left(\left(f_{\theta}[y_t] - \sum_{x\in\mathcal{V}}w(x)f_{\theta}[x]\right) - \left(f_{T}[y_t]- \sum_{x\in\mathcal{V}}w(x) f_{T}[x]\right)\right)^2. \nonumber
\end{align}
Here, $\mathcal{L}^{\text{w-mean}}_{\text{DLD}}$ is recovered by CSD as a special case shown by the following remark.

\begin{restatable}{remark}{remarkc} \label{remarkc}
Let CSD objective as $\mathcal{L}_\text{CSD}\left(\theta; p_T, w\right)$ with $w(y_t,x)=w_1(y_t)w_2(x)$. If $w_1(\cdot)=w_2(\cdot)$, $\nabla_{\theta} \mathcal{L}_\text{CSD}\left(\theta; p_T, w\right) = \nabla_{\theta}  \mathcal{L}^{\text{w-mean}}_{\text{DLD}}\left(\theta;p_T,w_1\right)$.
\end{restatable}

\begin{proof}
We use $\sum_{y_{t}\in\mathcal{V}}w_1(y_t)\tilde{f}^{w_1}_{\theta}[y_t] = \sum_{y_{t}\in\mathcal{V}}w_1(y_t)\left( f_{\theta}[y_t] - \sum_{x\in\mathcal{V}}w_1(x)f_{\theta}[x]\right) = 0$.
\begin{align}
 &\nabla_{\theta} \mathcal{L}^{\text{w-mean}}_{\text{DLD}}\left(\theta;p_T,w_1\right) = \sum_{y_{t}\in\mathcal{V}}w_1(y_t)\left(\tilde{f}^{w_1}_{\theta}[y_t] - \tilde{f}^{w_1}_{T}[y_t]\right) \left(\nabla_{\theta}f_{\theta}[y_t] - \sum_{x\in\mathcal{V}}w_1(x)\nabla_{\theta}f_{\theta}[x]\right)\nonumber\\
 & = \sum_{y_{t}\in\mathcal{V}}w_1(y_t)\left(\tilde{f}^{w_1}_{\theta}[y_t] - \tilde{f}^{w_1}_{T}[y_t]\right) \nabla_{\theta}f_{\theta}[y_t] -  \cancelto{0}{\left(\sum_{y_{t}\in\mathcal{V}}w_1(y_t)\left(\tilde{f}^{w_1}_{\theta}[y_t] - \tilde{f}^{w_1}_{T}[y_t]\right) \right)}\left(\sum_{x\in\mathcal{V}}w_1(x)\nabla_{\theta}f_{\theta}[x]\right)\nonumber\\
 & = \nabla_{\theta} \mathcal{L}_\text{CSD}\left(\theta; p_T, w\right) \nonumber
\end{align}
\end{proof}
\color{black}

\subsection{\textcolor{black}{Weighting for divergence-based loss}}
\color{black}
Unlike the L2 loss, a divergence-based loss does not guarantee convergence to the target distribution when a weighting is applied. Here, we examine the effect of applying $w$ weighting to the KL divergence. Define $p^{w}_{T}(y_t)=\frac{p_{T}(y_t) \times w(y_t)}{Z_{wT}},$ where $Z_{wT} = \sum_{y_t \in \vocab}(p_{T}(y_t) \times w(y_t))$ is a partition function. Then we have:
\begin{align}
D^w_\text{KL}\left(p_{T} || q_{\theta}\right) &:=\sum_{y_{t}\in\mathcal{V}}w(y_t)p_{T}(y_t)\log{\frac{p_{T}(y_t)}{q_{\theta}(y_t)}}. \nonumber\\
 &=\sum_{y_{t}\in\mathcal{V}}w(y_t)p_{T}(y_t)\left(\log{\frac{w(y_t)p_{T}(y_t)}{q_{\theta}(y_t)}} - \log{w(y_t)}\right). \nonumber \\
  &=Z_{wT} \times \sum_{y_{t}\in\mathcal{V}}\frac{w(y_t)p_{T}(y_t)}{Z_{wT}}\left(\log{\frac{w(y_t)p_{T}(y_t)}{Z_{wT} \times q_{\theta}(y_t)}+\log{Z_{wT}}}  - \log{w(y_t)}\right) \nonumber \\
  &=Z_{wT} \times \sum_{y_{t}\in\mathcal{V}}p^{w}_{T}(y_t)\left(\log{\frac{p^{w}_{T}(y_t)}{q_{\theta}(y_t)}+\log{Z_{wT}}}  - \log{w(y_t)}\right) \nonumber \\
  &=Z_{wT}\times\sum_{y_{t}\in\mathcal{V}}p^{w}_{T}(y_t)\log{\frac{p^{w}_T(y_t)}{q_{\theta}(y_t)}} + C \nonumber\\
    &=Z_{wT}D_\text{KL}\left(p^{w}_{T} || q_{\theta}\right) + C, \nonumber\\\nonumber
\end{align}
where $C$ is constant with respect to $\theta$. Thus, in this case, the student $q_{\theta}$ does not converge to the teacher distribution but instead converges to $p^{w}_{T}$, meaning that the target distribution is altered by the weighting $w$. In contrast, the proposed CSD theoretically guarantees convergence to the target distribution for any choice of $w$ proved by \Cref{prop}.

\color{black}

\section{Related Works}
\label{sec:C}
\textbf{Distillation loss:} The choice of discrepancy metric between teacher and student probability distributions is central to knowledge distillation for large language models (LLMs). Prior work has predominantly employed either forward KL divergence~\citep{hinton2015distilling} or reverse KL divergence~\citep{gu2024minillm}. These divergences, however, exhibit distinct biases: forward KL is inherently mode-covering, while reverse KL is mode-seeking. Consequently, optimization under either measure imposes an unavoidable trade-off between fidelity and diversity. To address this limitation, recent studies have explored alternative measures, including (generalized) Jensen–Shannon divergence~\citep{wen-etal-2023-f,agarwal2024onpolicy}, adaptive KL divergence~\citep{wu-etal-2025-rethinking}, and $\alpha$–$\beta$ divergence~\citep{wang2025abkd}. Complementarily, \citet{pmlr-v235-ko24c} introduced skew KL and skew reverse KL divergences to improve optimization stability. Beyond the KL family, total variation distance has also been investigated~\citep{wen-etal-2023-f}. AMiD~\citep{shin2026amid} extends the distillation objective by aligning either the student or teacher distribution with a generalized assistant distribution under the arbitrary divergence framework. Broadly, existing approaches extend in two directions: (i) instantiating different generating functions within the $f$-divergence family, or (ii) constructing hybrid objectives that combine multiple divergences. In contrast, we propose a novel logit-level distillation framework grounded in concrete-score matching~\citep{meng2022concrete}, which departs from the $f$-divergence family and offers both extensibility and originality. Furthermore, we introduce a loss design space with multiple instances, including instances that envelope the diversity–fidelity trade-off exhibited by previous methods.

\textbf{Distillation dataset:} Concurrently, a complementary line of work has examined dataset composition to mitigate the distribution mismatch between training and inference. Several studies have explored on-policy strategies, either using only student-generated outputs~\citep{lin-etal-2020-autoregressive} or combining them with a fixed dataset~\citep{agarwal2024onpolicy} and teacher-generated outputs~\citep{gu2024minillm,xu2025speculative}. To reduce the computational overhead of on-policy training, \citet{pmlr-v235-ko24c} proposed an adaptive off-policy method with a replay buffer. By contrast, our contribution focuses on developing a novel discrepancy metric, which is orthogonal to these dataset composition strategies and can be seamlessly integrated with them as shown in \Cref{tab:main2}.

\textbf{Autoregressive model meets score function:}
BPO~\citep{kim2025preference} generalizes the preference optimization objective for LLMs by interpreting it as sequence-level concrete score matching. Unlike CSD, which formulates concrete score matching at the per-token level over the entire vocabulary to match a student model to a teacher model, BPO defines concrete scores at the sequence level between preferred and dispreferred responses and matches them to preference data. To enable tractable learning from such data, BPO further extends the objective within a Bregman divergence framework. ARD~\citep{kim2025autoregressive}, in contrast, extends the distillation of score-based diffusion models to an autoregressive model. Specifically, it employs an autoregressive student to learn, through regression, the integration of the ODE defined by the score function.

\section{Experimental Details}
\label{sec:B}

\subsection{Task-agnostic Instruction Following Distillation in \Cref{sec:4.1}.}
\label{sec.B.1}
\textbf{Experimental setup:} We follow the training setup of DistiLLM~\citep{pmlr-v235-ko24c}. 
For the distillation dataset $\mathcal{D}$, we use \texttt{databricks-dolly-15k}~\citep{conover2023free}, containing about 14,000 samples for training, with 500 held out for validation and 500 for evaluation. For comparison with the baseline, we optionally add a pretraining loss using the pretraining dataset \texttt{OpenWebText}~\citep{Gokaslan2019OpenWeb} in some cases of \Cref{tab:main2}. We first fine-tune the \texttt{GPT-2-1.5B}~\citep{radford2019language} teacher on the dataset, 
and then distill it into \texttt{GPT-2-0.1B} and \texttt{GPT-2-0.3B} students. Similarly, we distill \texttt{OpenLLaMA-7B}~\citep{openlm2023openllama} into \texttt{OpenLLaMA-3B}. We determined the learning rate and batch size by referring to the search ranges used in prior studies~\citep{gu2024minillm,pmlr-v235-ko24c}. We use the detached student probability as the default choice for both $w_1$ and $w_2$, and analyze alternative choices through ablation studies. 

All experiments were conducted primarily on four RTX 3090 GPUs. We searched learning rates in [5e-4, 1e-4, 5e-5] and batch sizes in [8, 16, 32]. Each configuration was trained for 20 epochs, saving a checkpoint at every epoch, and evaluated using the checkpoint with the highest validation ROUGE-L score. We used the same five evaluation seeds [10, 20, 30, 40, 50] as in prior work to compute the mean and standard deviation of the evaluation metric. The baselines in \Cref{tab:main2} were run with the official code settings of prior work~\citep{pmlr-v235-ko24c}, with additional tuning for the batch size. In the OpenLLaMA experiments, all baselines and ours were standardized to a batch size of 8, the maximum supported in our environment. Baselines used the learning rates from their official code, while we fixed the learning rate to 1e-4 (commonly effective for GPT-2) with CSD, without further tuning. For ablation studies in \Cref{tab:main4} and \Cref{fig:5}, we used the same configuration: learning rate 1e-4 and batch size 8. For GPT-4 feedback in \Cref{fig:4}, we use the following templates following prior work~\citep{zheng2023judging,pmlr-v235-ko24c} as shown below. We computed the ratio between the model answer and the golden answer for each of the 500 samples from Dolly Eval, and reported the average over all samples. We provide the reference implementation for CSD in Code \ref{lst:CSD}.

\textbf{Baselines:} Since our main focus is on the loss function, we compared our method with existing objectives using the same teacher checkpoint. The baselines include KL, reverse KL (RKL)~\citep{gu2024minillm}, symmetric KL (the mean of KL and RKL), Jeffrey’s divergence, Total Variation~\citep{wen-etal-2023-f}, Generalized Jensen–Shannon (GJS)~\citep{agarwal2024onpolicy} with smoothing parameter 0.9, Skewed KL (SKL)~\citep{pmlr-v235-ko24c}, Skewed reverse KL (SRKL)~\citep{pmlr-v235-ko24c} with smoothing parameter 0.1, and $\alpha$--$\beta$ divergence (AB)~\citep{wang2025abkd} with parameters (0.2, 0.7). We followed the hyperparameter choices reported in each paper and the implementation of DistiLLM. For KL, we performed a full-range hyperparameter search, as in our method. For losses not specified in prior work, we adopted the same settings as for KL. 

\textbf{Evaluation metrics and setups:} We evaluated on five instruction-following benchmarks: 
1) the test set of Dolly, 2) Self-Instruct~\citep{wang-etal-2023-self-instruct}, 
3) Vicuna Eval~\citep{vicuna2023}, 4) Super-Natural Instructions (Super-NI)~\citep{wang-etal-2022-super}, and 5) Unnatural Instructions (UnNI)~\citep{honovich-etal-2023-unnatural}. 
ROUGE-L~\citep{lin-2004-rouge}, which measures similarity to the golden answer, was used as the primary metric. We additionally employed Self-BLEU~\citep{zhu2018texygen} and Distinct-N~\citep{li2016diversity} as diversity metrics. Furthermore, GPT-4 feedback~\citep{zheng2023judging} was used as a proxy for human judgment. Checkpoints were saved at each epoch, with evaluation performed on the one achieving the best validation ROUGE-L. The decoding temperature was set to 1 by default, and following prior work, reduced to 0.7 for GPT-judge evaluation.


\subsection{Task-specific Distillation in \Cref{sec:4.2}.}
\label{sec.B.2}

\textbf{Experimental setup:}
We verify the effectiveness of CSD across diverse tasks, including dialogue summarization, low-resource translation, and arithmetic reasoning. Distillation was conducted on  DialogSum~\citep{chen-etal-2021-dialogsum}, Flores-200~\citep{costa2022no}, and GSM8K~\citep{cobbe2021training} datasets, following the experimental setup of \cite{xu2025speculative} with a fixed dataset. We used \texttt{Gemma-7B-IT}~\citep{team2024gemma}, fine-tuned with SFT as the teacher and \texttt{Gemma-2B-IT} as the student. We compared with the baselines using the same teacher, changing only the loss function. 

For teacher SFT, we trained summarization and arithmetic reasoning for 3 epochs and translation for 10 epochs, using the full datasets. Model evaluation was performed every 16 steps, and the checkpoint with the lowest validation loss was selected. The batch size was fixed to 128 for all tasks, with the learning rate set to 1e-5. For each task in distillation, we distilled both the baselines and our method from the same teacher checkpoint with a fixed learning rate of 1e-5 and batch size of 8, using about 1,000 samples. We trained for 3 epochs on summarization and arithmetic reasoning, and 10 epochs on translation. For the baselines, checkpoints were saved every 25 steps, and the one with the lowest validation loss was used for evaluation. For CSD, since the loss itself cannot be directly computed and training relies on its gradient, validation loss was unavailable; thus, we evaluated using the final checkpoint. For all tasks, we set $w_1$ and $w_2$ using the teacher’s and student’s probabilities. For evaluation, we used task-specific metrics: COMET \citep{rei-etal-2022-comet} for translation, ROUGE-L \citep{lin-2004-rouge} for summarization, and answer accuracy for arithmetic reasoning, all evaluated on each task’s test dataset.

\color{black}
\subsection{General Chat Capability Distillation in \Cref{sec:4.4}.}
\label{sec.B.3}
\textbf{Experimental setup:} We closely followed the official code of DistiLLM-2~\citep{ko2025distillm}  for our distillation setup. For 50k UltraChat prompts, we generated samples from both the student and the teacher. We then applied the CSD and DLD matching losses to each pair of generated samples. Although methods such as DPKD~\citep{li2024direct} and DistiLLM-2 define different losses depending on which model produced the sample, CSD could also benefit from adopting such a strategy, suggesting room for further improvement. For evaluation, we followed the SimPO~\citep{meng2024simpo} protocol. We used the same learning rate and batch size as DistiLLM-2.
\color{black}

\newpage
\begin{tcolorbox}[title=GPT-4 feedback template]
[System] Please act as an impartial judge and evaluate the quality of the response provided by an AI assistant to the user question displayed below. Your evaluation should consider factors such as the helpfulness, relevance, accuracy, depth, creativity, and level of detail of the response. Begin your evaluation by providing a short explanation. Be as objective as possible. After providing your explanation, please rate the response on a scale of 1 to 10 by strictly following this format: “[[rating]]”, for example: “Rating: [[5]]”.\\

[Question]

\{question\}\\

[The Start of Assistant’s Answer]

\{answer\}

[The End of Assistant’s Answer]

\end{tcolorbox}

\begin{lstlisting}[caption={CSD loss function implementation}, label={lst:CSD}]
import torch
import torch.nn.functional as F

def CSD_loss(student_logits, teacher_logits, mode):
    student_probs = F.softmax(student_logits, dim=-1)
    teacher_probs = F.softmax(teacher_logits, dim=-1)
    
    if mode == "SS":
        loss = (student_logits - teacher_logits - torch.sum(student_probs * (student_logits - teacher_logits), dim=-1,keepdim=True)).detach() * student_probs.detach() * student_logits
    
    elif mode == "TS":
        loss1 = (student_logits - teacher_logits - torch.sum(teacher_probs * (student_logits - teacher_logits), dim=-1,keepdim=True)).detach() * student_probs.detach() * student_logits
        loss2 = (student_logits - teacher_logits - torch.sum(student_probs * (student_logits - teacher_logits), dim=-1,keepdim=True)).detach() * teacher_probs * student_logits
        loss = (loss1 + loss2) / 2
        
    distil_loss = torch.sum(loss, dim=-1) ## summation over vocab

    return distil_loss
\end{lstlisting}
\begin{algorithm}[t]
    \caption{\textcolor{black}{Monte Carlo estimation to compute $\mathcal{L}_{\text{CSD}}$ in \cref{eq:csd}}}  
    \label{alg:2}
    \SetCustomAlgoRuledWidth{\linewidth}
    \KwInput{Student $f_{\theta}$, teacher $f_T$, prompt $\rvc$, prefix $\rvy_{<t}$, function $w(\cdot,\cdot)=w_1(\cdot)w_2(\cdot|\cdot)$.}
    Compute the student logit $f_{\theta}[y_t]=f_{\theta}(\rvc,\rvy_{<t})[y_t], \forall y_t \in \mathcal{V}$. \\
    Compute the teacher logit $f_{T}[y_t]=f_{T}(\rvc,\rvy_{<t})[y_t], \forall y_t \in \mathcal{V}$. \\
    Sample $y_t$ according to $w_1(\cdot)$.\\
    $\mathcal{L}^{\text{MC}}_\text{CSD}\left(\theta; p_{T}, w\right)=\sum_{x\in\mathcal{V}}[w_2(x|y_t)\times(f_{\theta}[x] - f_{\theta}[y_t]-f_{T}[x] + f_{T}[y_t])^2]$\\
    \textbf{return} $\nabla_{\theta}\mathcal{L}^{\text{MC}}_\text{CSD}\left(\theta; p_{T}, w\right)$
\end{algorithm}

\newpage

\section{Additional Experimental Results}
\label{sec:D}
This section presents additional experimental results. \Cref{fig:6} shows the logit and probability statistics of the \texttt{GPT-2-1.5B} teacher, corresponding to \Cref{fig:1}. \Cref{fig:7} illustrates further fidelity–diversity trade-offs using Distinct-N metrics, corresponding to \Cref{fig:3.b}. \Cref{fig:9} presents validation ROUGE-L scores during training, corresponding to \Cref{tab:main1}. CSD not only converges to a higher point but also achieves faster performance gains in the early stages. \Cref{tab:main9} provides comparisons with additional baselines corresponding to \Cref{tab:main2}, and \Cref{tab:main10} compares CSD with the MSE probability-matching objective under different weighting schemes. Finally,\textcolor{black}{ \Cref{tab:main5,tab:main6,tab:main7,tab:main8,tab:qualitative_dld}} present case studies of model generations for math questions.
\begin{figure*}[h]
\centering
    \begin{minipage}[h]{0.49\textwidth}
             \begin{subfigure}{1.0\textwidth}
                 \includegraphics[width=\linewidth]{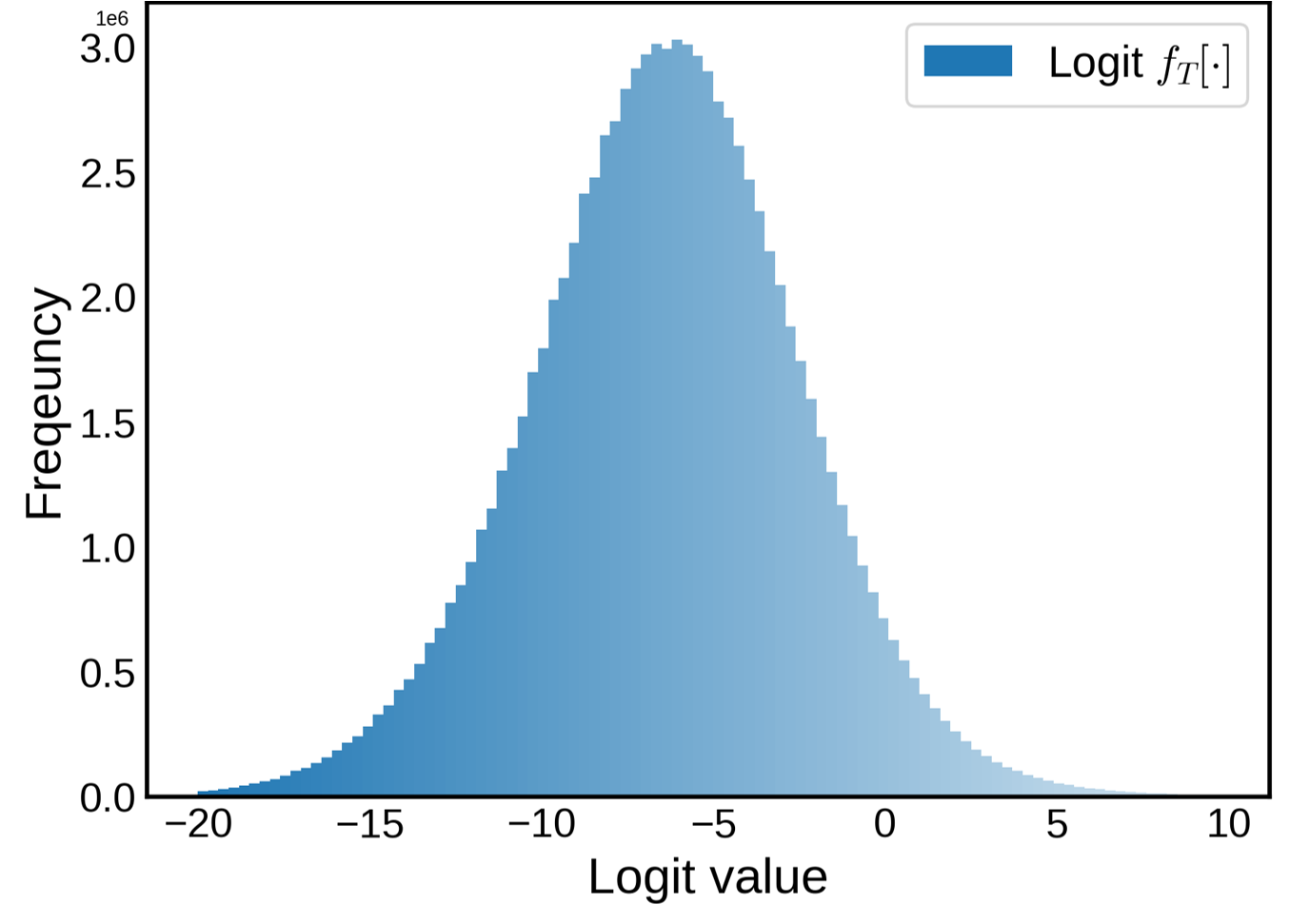}
                 \caption{Logit statistics}
                 \label{fig:6.a}
             \end{subfigure}
        \end{minipage}
        \begin{minipage}[h]{0.47\textwidth}
             \begin{subfigure}{1.0\textwidth}
             \centering
            \includegraphics[width=\linewidth]{figure/Figure_10_1.png}
                 \caption{Probability statistics}
                 \label{fig:6.b}
             \end{subfigure}
        \end{minipage}
        \caption{Comparison between teacher's logit and probability statistics. While the logits span a wide range from $-20$ to $5$ and convey rich information, the probabilities are mostly concentrated near zero.}
            \label{fig:6}
\end{figure*}
\begin{figure*}[h]
\centering
    \begin{minipage}[h]{0.48\textwidth}
             \begin{subfigure}{1.0\textwidth}
                 \includegraphics[width=\linewidth]{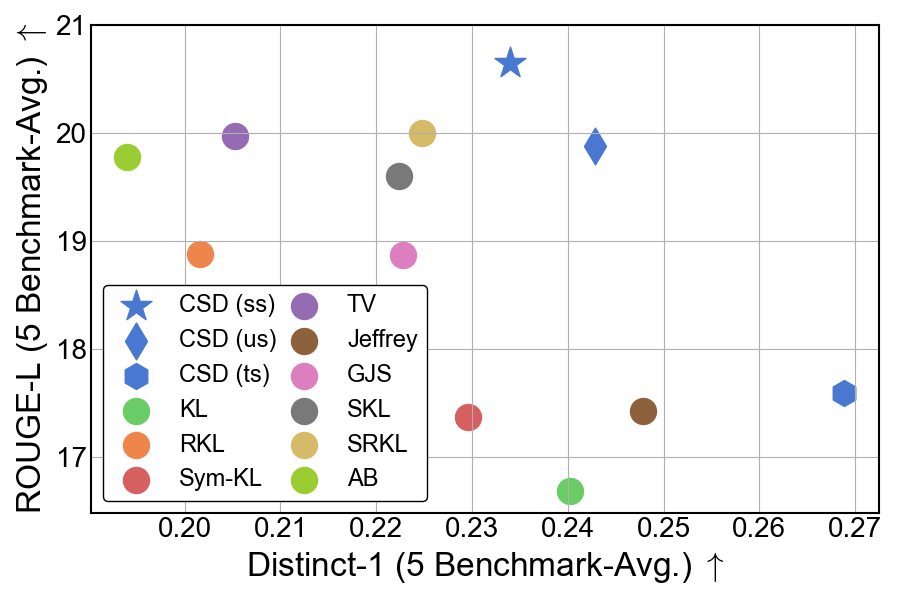}
             \end{subfigure}
        \end{minipage}
        \begin{minipage}[h]{0.48\textwidth}
             \begin{subfigure}{1.0\textwidth}
             \centering
            \includegraphics[width=\linewidth]{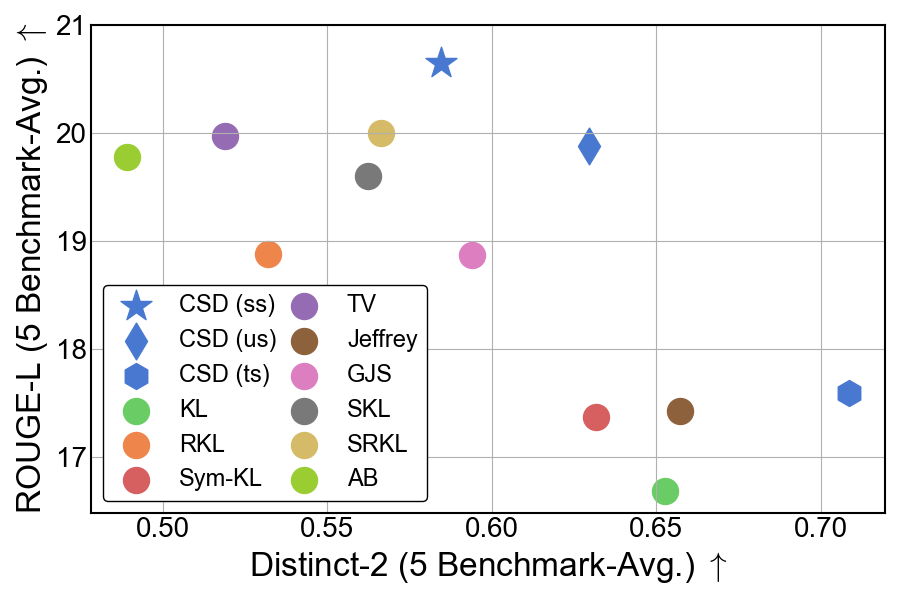}
             \end{subfigure}
        \end{minipage}
        \caption{Fidelity vs. Diversity trade-off with more metrics.}
            \label{fig:7}
\end{figure*}
\begin{figure*}[h]
     \centering
    \includegraphics[width=0.45\linewidth]{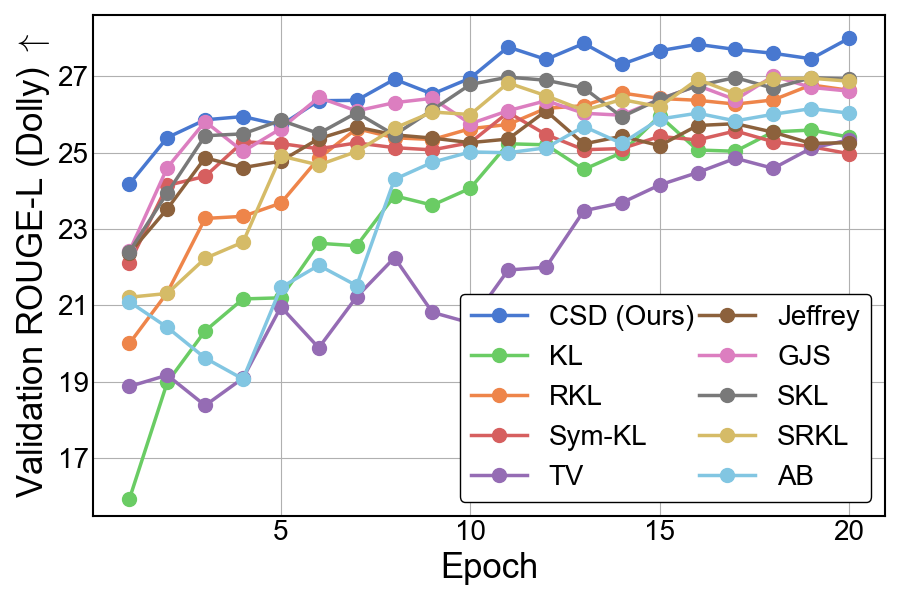}
    \caption{Validation ROUGE-L scores over training epochs.}
    \label{fig:9}
\end{figure*}


\newpage

\begin{figure*}[t]
\centering
    \begin{minipage}[h]{0.49\textwidth}
             \begin{subfigure}{1.0\textwidth}
                 \includegraphics[width=\linewidth]{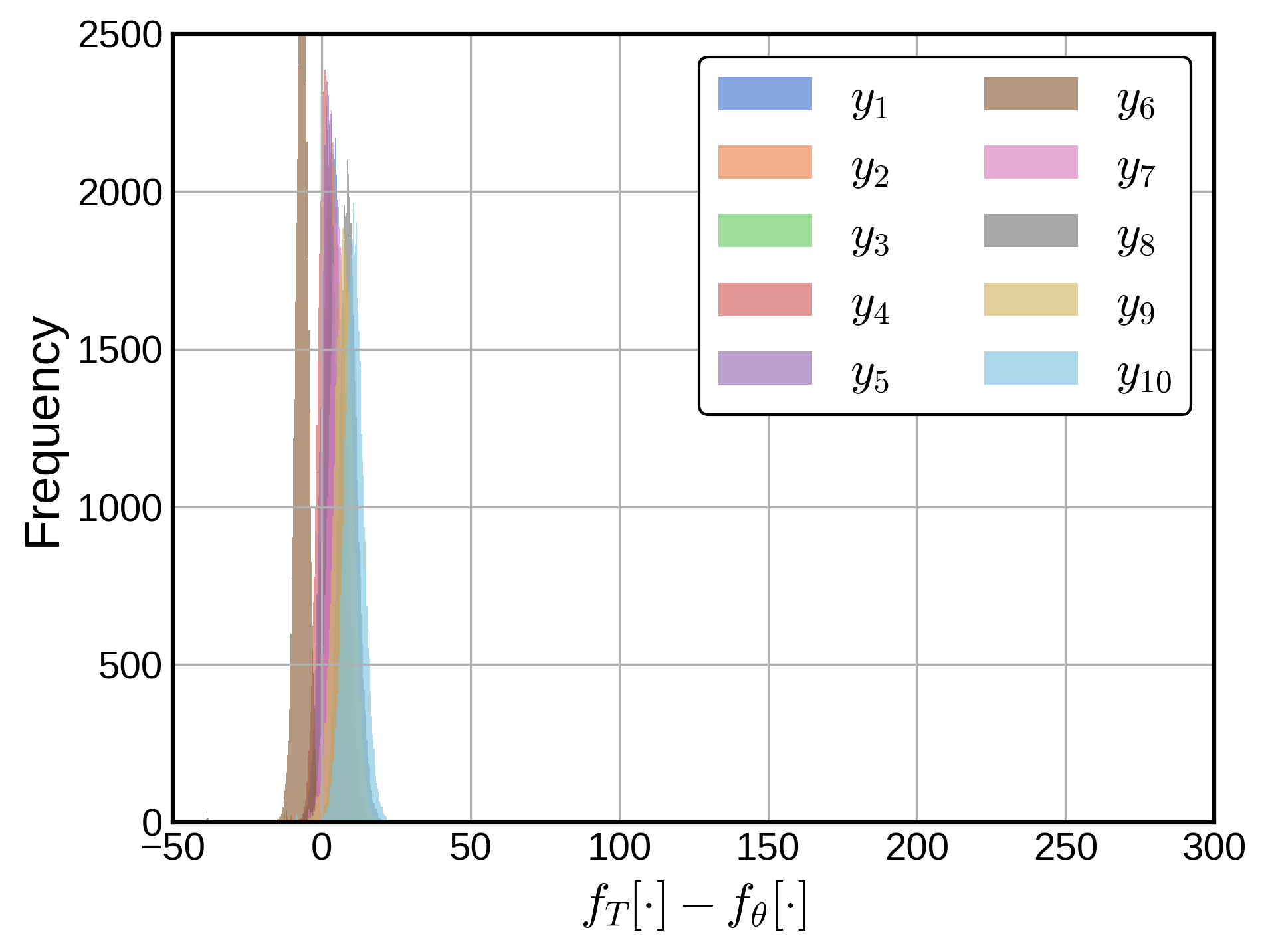}
                 \caption{DLD ($S$) (Avg. ROUGE-L: 20.39)}
                 \label{fig:8.a}
             \end{subfigure}
        \end{minipage}
        \begin{minipage}[h]{0.49\textwidth}
             \begin{subfigure}{1.0\textwidth}
             \centering
            \includegraphics[width=\linewidth]{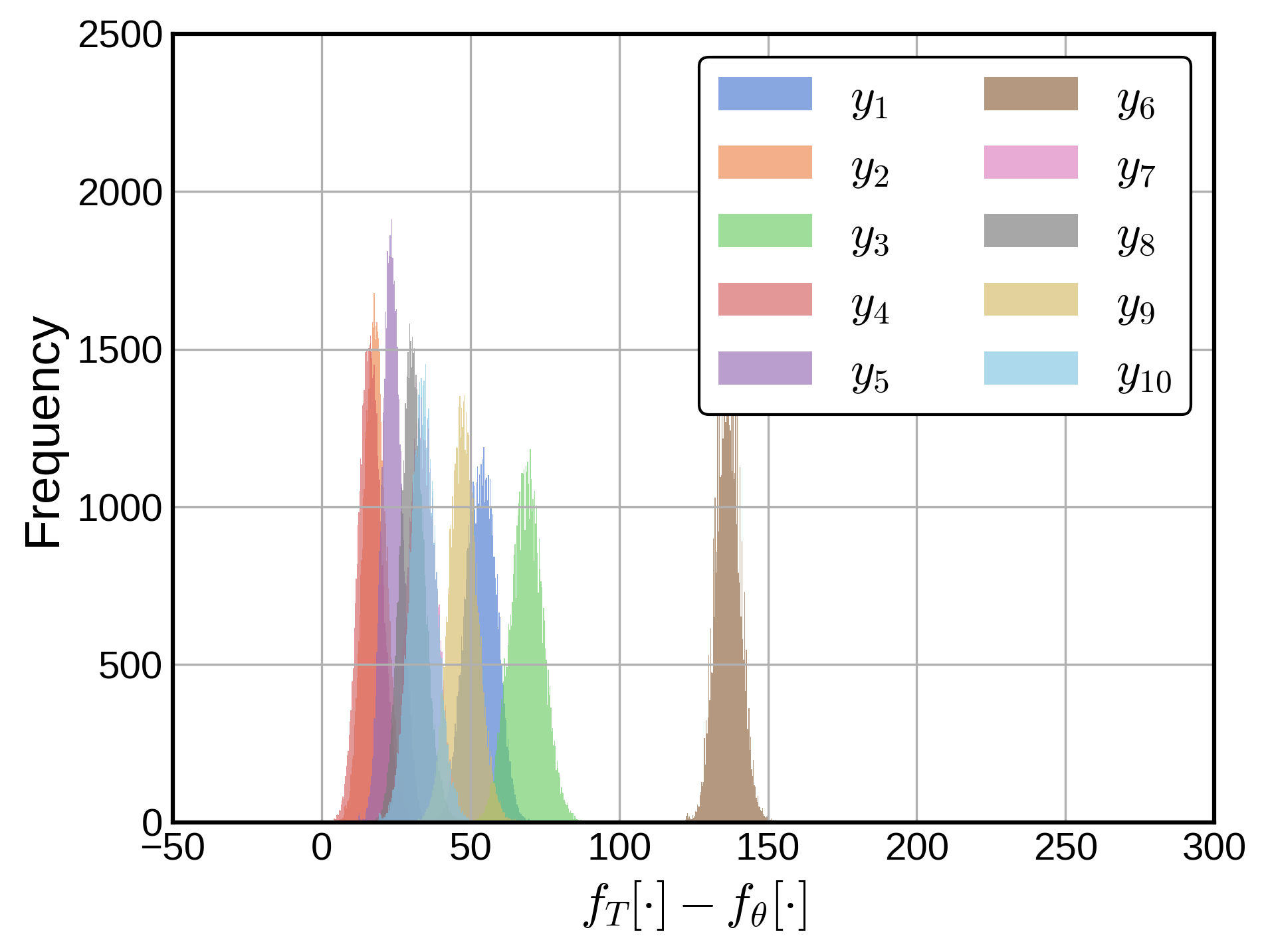}
                 \caption{CSD ($S, S$) (Avg. ROUGE-L: 20.65)}
                 \label{fig:8.b}
             \end{subfigure}
        \end{minipage}
        \begin{minipage}[h]{0.49\textwidth}
             \begin{subfigure}{1.0\textwidth}
             \centering
            \includegraphics[width=\linewidth]{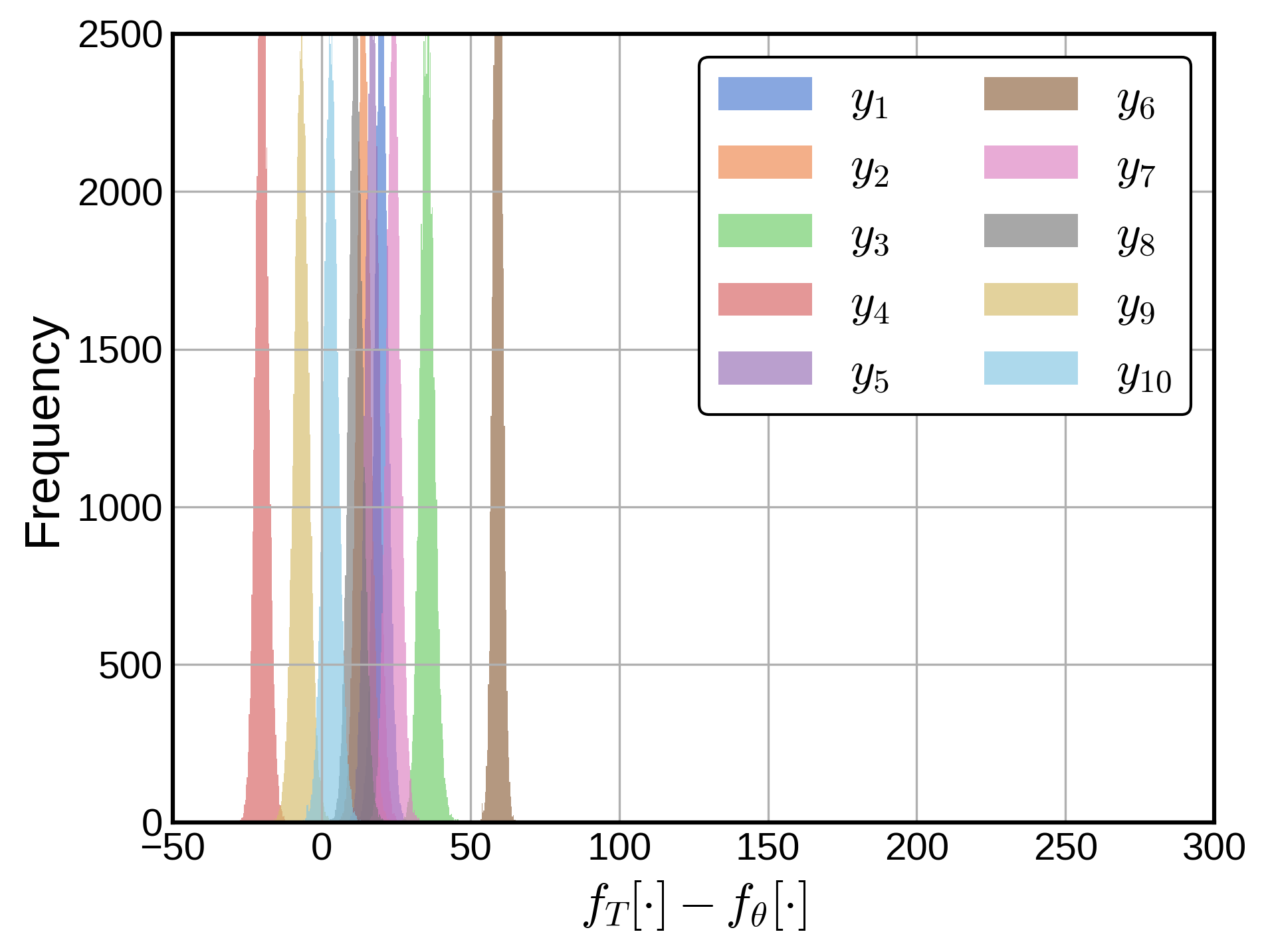}
                 \caption{\textcolor{black}{CSD ($U, S$) (Avg. ROUGE-L: 19.88)}}
                 \label{fig:8.c}
             \end{subfigure}
        \end{minipage}
        \begin{minipage}[h]{0.49\textwidth}
             \begin{subfigure}{1.0\textwidth}
             \centering
            \includegraphics[width=\linewidth]{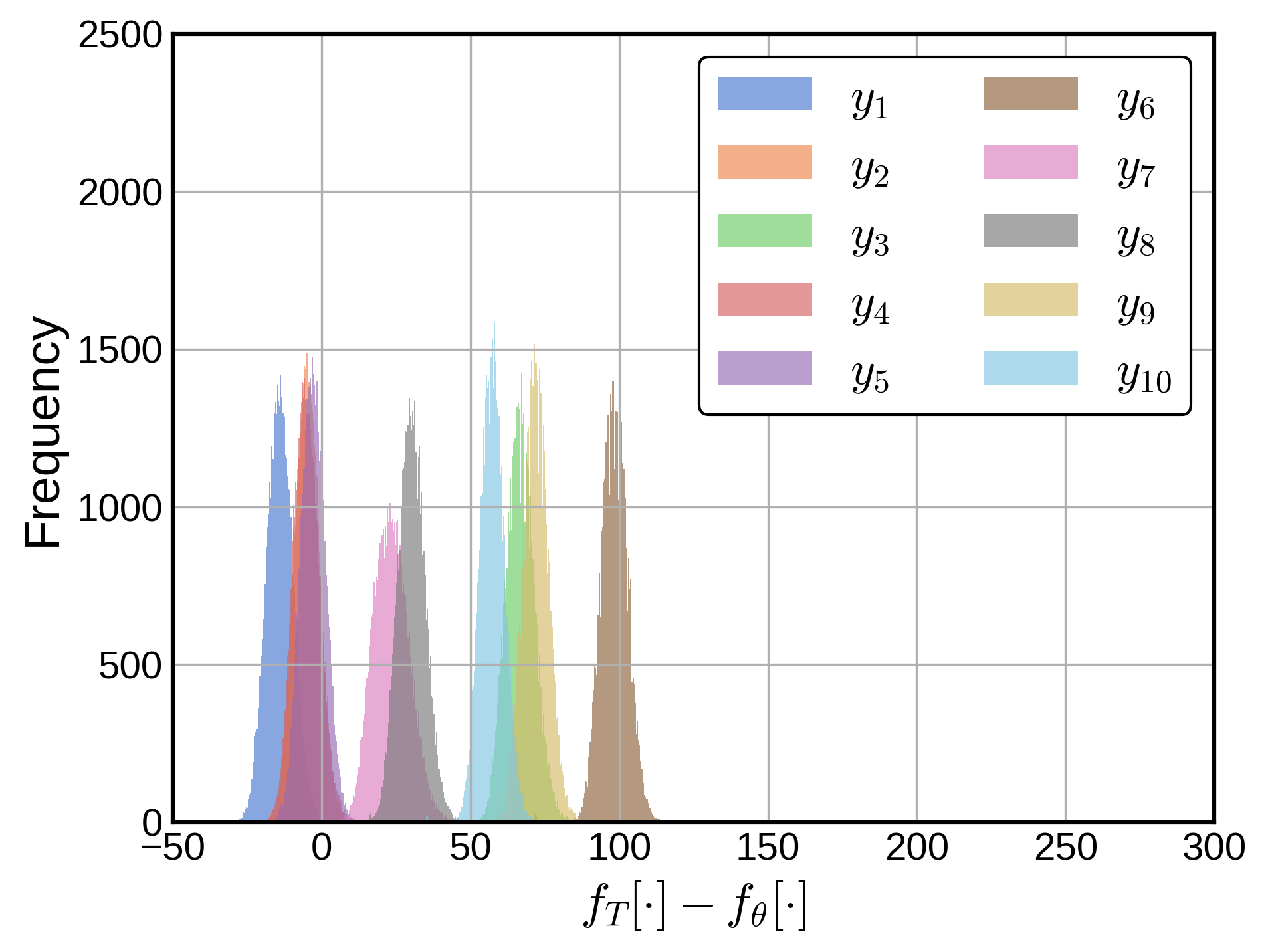}
                 \caption{\textcolor{black}{CSD ($T, S$) (Avg. ROUGE-L: 17.59)}}
                 \label{fig:8.d}
             \end{subfigure}
        \end{minipage}
        \caption{Solution set restriction of direct logit distillation (DLD) and the flexible selection of logit residual constants in \textit{Concrete Score Distillation} (CSD). CSD finds a broader solution space.}
            \label{fig:8}
\end{figure*}
\color{black}
\subsection{Analysis on the logit offsets.}
\Cref{fig:8} shows the logit offsets between the teacher and the student for 10 consecutive tokens within a sentence. This demonstrates that DLD converges only to solutions with zero residual constants, whereas CSD learns token-dependent residual constants. In other words, CSD explores a wide solution space during the training. \Cref{fig:10} also shows how the offset for the same token changes across training epochs. We observe that an appropriate offset for each token is determined early in training, after which the model consistently refines its solution around that offset.

 \Cref{fig:8.c} shows that CSD $(U, S)$ is more tightly centered. As analyzed in \Cref{fig:101}, vocabulary items are learned more uniformly, causing the logits to cluster around the token-wise offset. This indicates that minority vocabulary items are also well learned, which helps explain why the method performs exceptionally well under the high-temperature sampling setting of \Cref{fig:3.c}, where the contribution of minority vocabulary becomes more significant.

\Cref{fig:prob_calibration} presents the averaged KL, probability MSE, and ECE errors across training epochs, and \Cref{tab:calibration} shows the per-instance values corresponding to \Cref{fig:8}. Since the probabilities of specific vocabularies (those with high probabilities in the student or teacher) are more important than the full vocabulary in these metrics, CSD $(T, S)$ performs well because it learns with probability weighting from the teacher and student. 

In contrast, the generative performance was highest with CSD $(S, S)$. This is because generative performance only needs good quality in the regions favored by the student, which is often negatively correlated with probability calibration~\citep{achiam2023gpt,wangbeyond}.

\begin{align}
D_\text{KL}\left(p_{T} || q_{\theta}\right) &:=\sum_{y_{t}\in\mathcal{V}}p_{T}(y_t)\log{\frac{p_{T}(y_t)}{q_{\theta}(y_t)}}. \nonumber\\
\text{MSE}\left(p_{T}, q_{\theta}\right) &:=\sum_{y_{t}\in\mathcal{V}}\left(p_{T}(y_t)-q_{\theta}(y_t)\right)^2. \nonumber\\
\text{ECE}\left(p_{T}, q_{\theta}\right) &:=\sum_{y_{t}\in\mathcal{V}}q_{\theta}(y_t)|p_{T}(y_t) - q_{\theta}(y_t) |. \nonumber\\\nonumber
\end{align}
\setlength{\floatsep}{10pt}      
\setlength{\textfloatsep}{1pt}  
\begin{table*}[h]
\caption{\textcolor{black}{Instance-wise probability calibration results for various logit distillation methods correspond to \Cref{fig:8}}.}
\centering
\small
\begin{tabular}{l|ccccccccccc}
\toprule
& $y_1$ & $y_2$ & $y_3$ & $y_4$ & $y_5$ & $y_6$ & $y_7$ & $y_8$ & $y_9$ & $y_{10}$ & AVG \\
\midrule
\multicolumn{12}{c}{\textbf{KL Divergence}} \\
\midrule
DLD ($S$)      & 0.61 & 5.40 & 16.99 & 0.00 & 0.58 & 0.00 & 9.49 & 10.14 & 0.00 & 17.41 & 6.06 \\
CSD ($S,S$)     & 8.98 & 11.25 & 12.37 & 0.02 & 0.29 & 0.00 & 11.57 & 12.57 & 0.01 & 12.16 & 6.92 \\
CSD ($U,S$)     & 6.41 & 9.90 & 12.00 & 0.03 & 0.48 & 0.00 & 10.38 & 13.84 & 0.04 & 12.08 & 6.52 \\
CSD ($T,S$)     & 8.67 & 0.70 & 2.61 & 5.22 & 0.47 & 0.00 & 0.01 & 0.01 & 1.59 & 0.02 & 1.93 \\
\midrule
\multicolumn{12}{c}{\textbf{Mean Squared Error}} \\
\midrule
DLD ($S$)      & 0.25 & 1.01 & 1.61 & 0.00 & 0.29 & 0.00 & 1.04 & 1.50 & 0.00 & 2.00 & 0.77 \\
CSD ($S,S$)     & 1.13 & 1.17 & 1.94 & 0.00 & 0.00 & 0.00 & 1.32 & 2.00 & 0.00 & 2.00 & 0.96 \\
CSD ($U,S$)     & 0.60 & 1.03 & 1.39 & 0.00 & 0.01 & 0.00 & 1.01 & 1.48 & 0.00 & 1.96 & 0.75 \\
CSD ($T,S$)     & 0.84 & 0.02 & 0.08 & 0.39 & 0.01 & 0.00 & 0.00 & 0.00 & 0.03 & 0.00 & 0.14 \\
\midrule
\multicolumn{12}{c}{\textbf{Expected Calibration Error}} \\
\midrule
DLD ($S$)     & 0.29 & 0.03 & 0.61 & 0.00 & 0.34 & 0.00 & 0.04 & 0.50 & 0.00 & 1.00 & 0.28 \\
CSD ($S,S$)     & 0.39 & 0.18 & 0.94 & 0.00 & 0.05 & 0.00 & 0.32 & 1.00 & 0.00 & 1.00 & 0.39 \\
CSD ($U,S$)     & 0.28 & 0.04 & 0.39 & 0.01 & 0.08 & 0.00 & 0.01 & 0.48 & 0.00 & 0.96 & 0.22 \\
CSD ($T,S$)     & 0.25 & 0.11 & 0.20 & 0.36 & 0.08 & 0.00 & 0.00 & 0.00 & 0.14 & 0.00 & 0.11 \\
\bottomrule
\end{tabular}
\label{tab:calibration}
\end{table*}
\setlength{\floatsep}{1pt}      
\setlength{\textfloatsep}{1pt}  
\begin{figure*}[h]
\centering
        \begin{minipage}[h]{0.325\textwidth}
             \begin{subfigure}{1.0\textwidth}
             \centering
            \includegraphics[width=\linewidth]{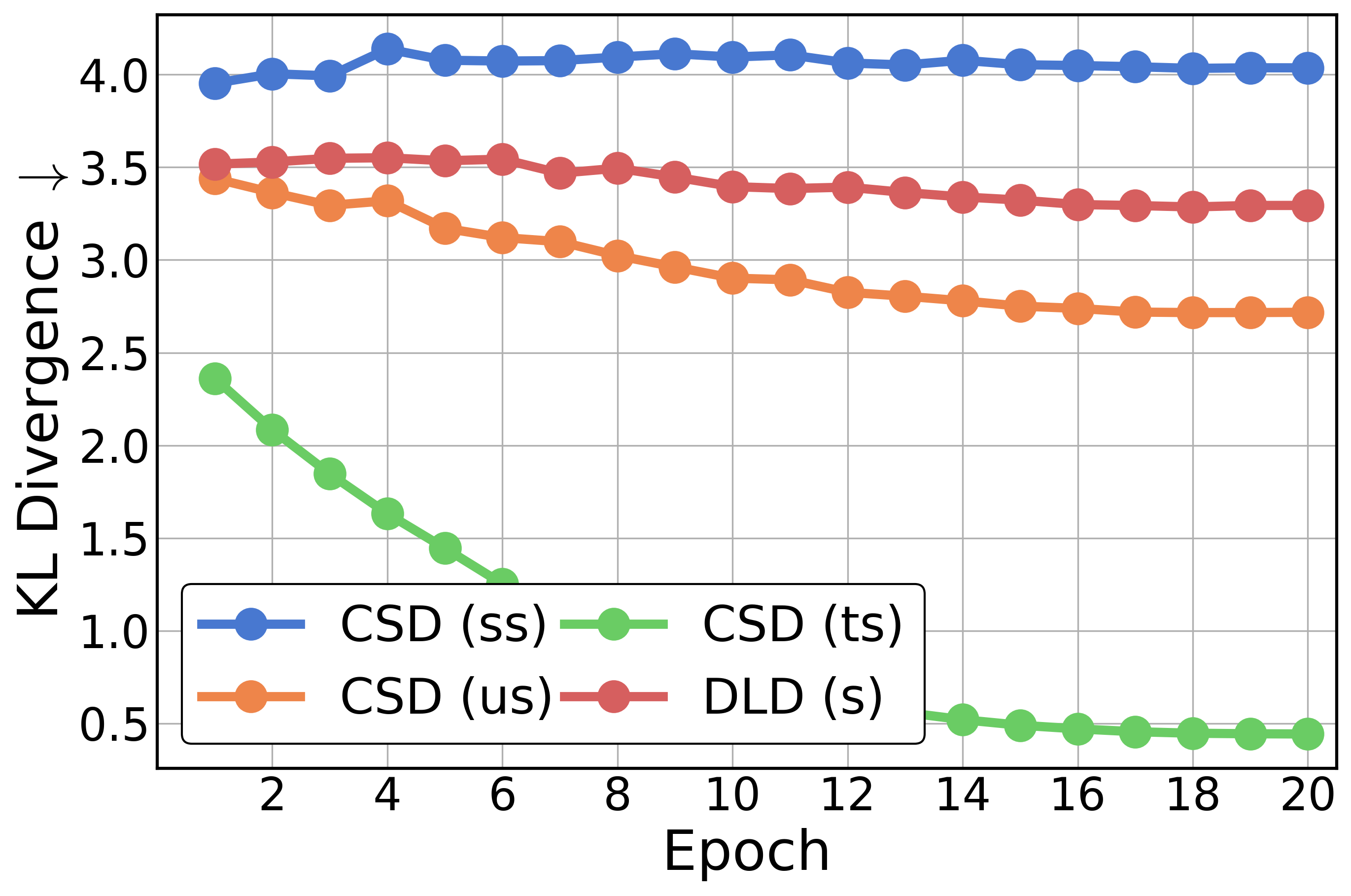}
                 \caption{KL divergence}
                 \label{fig:prob_calibration.a}
             \end{subfigure}
        \end{minipage}
        \begin{minipage}[h]{0.325\textwidth}
             \begin{subfigure}{1.0\textwidth}
             \centering
            \includegraphics[width=\linewidth]{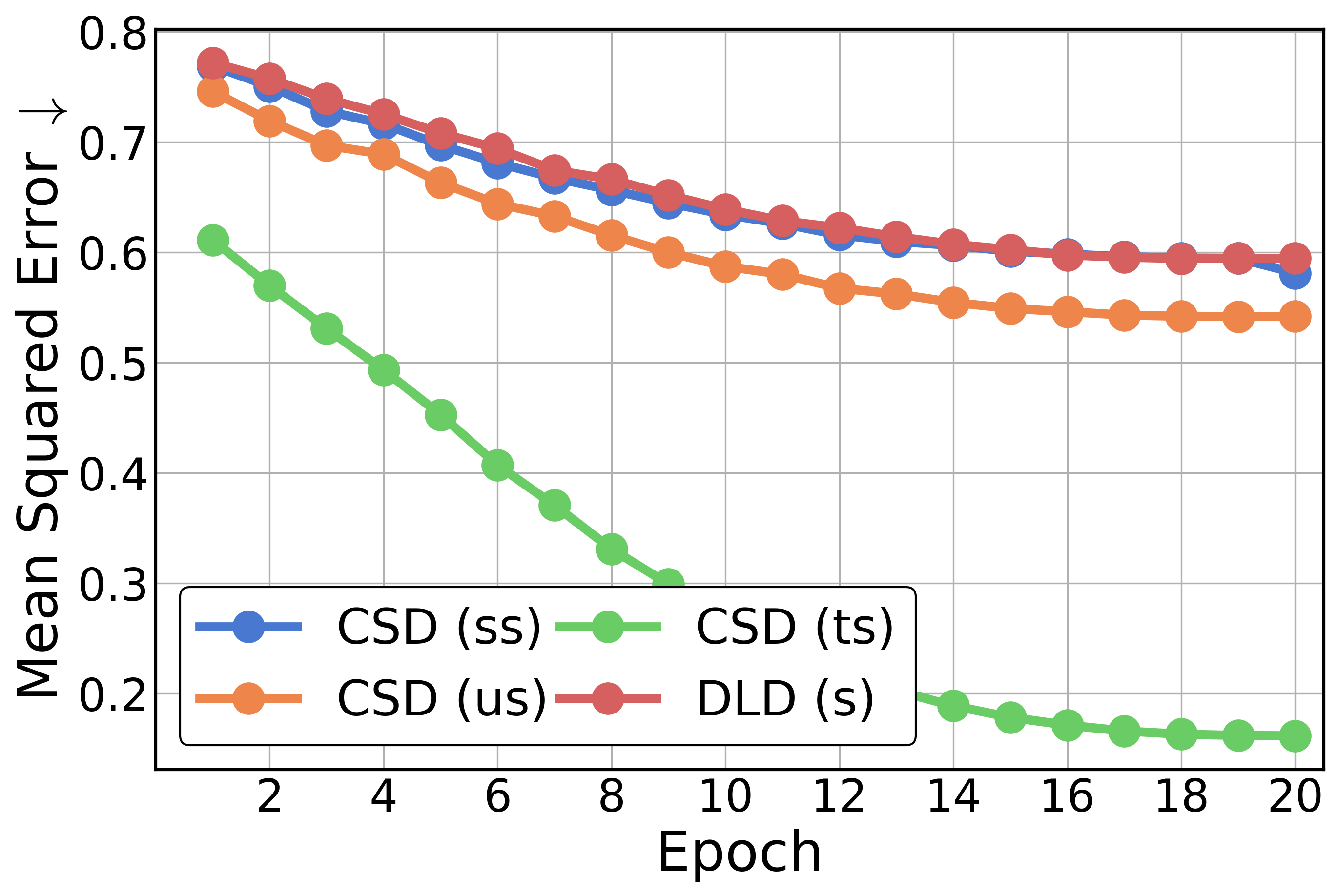}
                 \caption{MSE}
                 \label{fig:prob_calibration.b}
             \end{subfigure}
        \end{minipage}
        \begin{minipage}[h]{0.325\textwidth}
             \begin{subfigure}{1.0\textwidth}
             \centering
            \includegraphics[width=\linewidth]{figure/ECE.png}
                 \caption{ECE}
                 \label{fig:prob_calibration.c}
             \end{subfigure}
        \end{minipage}
        \caption{\textcolor{black}{Averaged probability calibration results during the training.}}
            \label{fig:prob_calibration}
\end{figure*}

\begin{figure*}[h]
\centering
\begin{minipage}[h]{0.32\textwidth}
             \begin{subfigure}{1.0\textwidth}
                 \includegraphics[width=\linewidth]{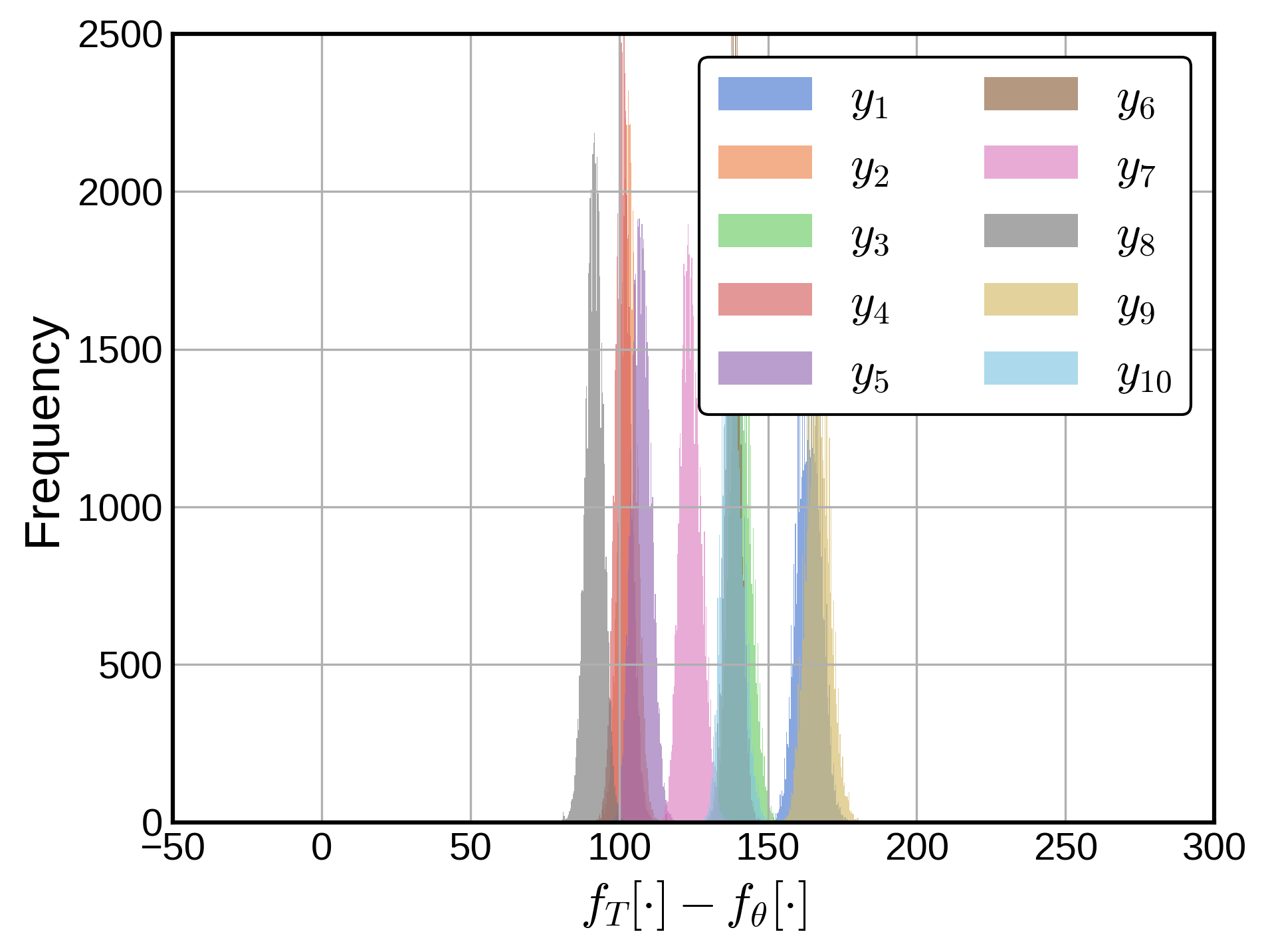}
                 \caption{CSD $(S,S)$ (Init.)}
                 \label{fig:10.z}
             \end{subfigure}
        \end{minipage}
    \begin{minipage}[h]{0.32\textwidth}
             \begin{subfigure}{1.0\textwidth}
                 \includegraphics[width=\linewidth]{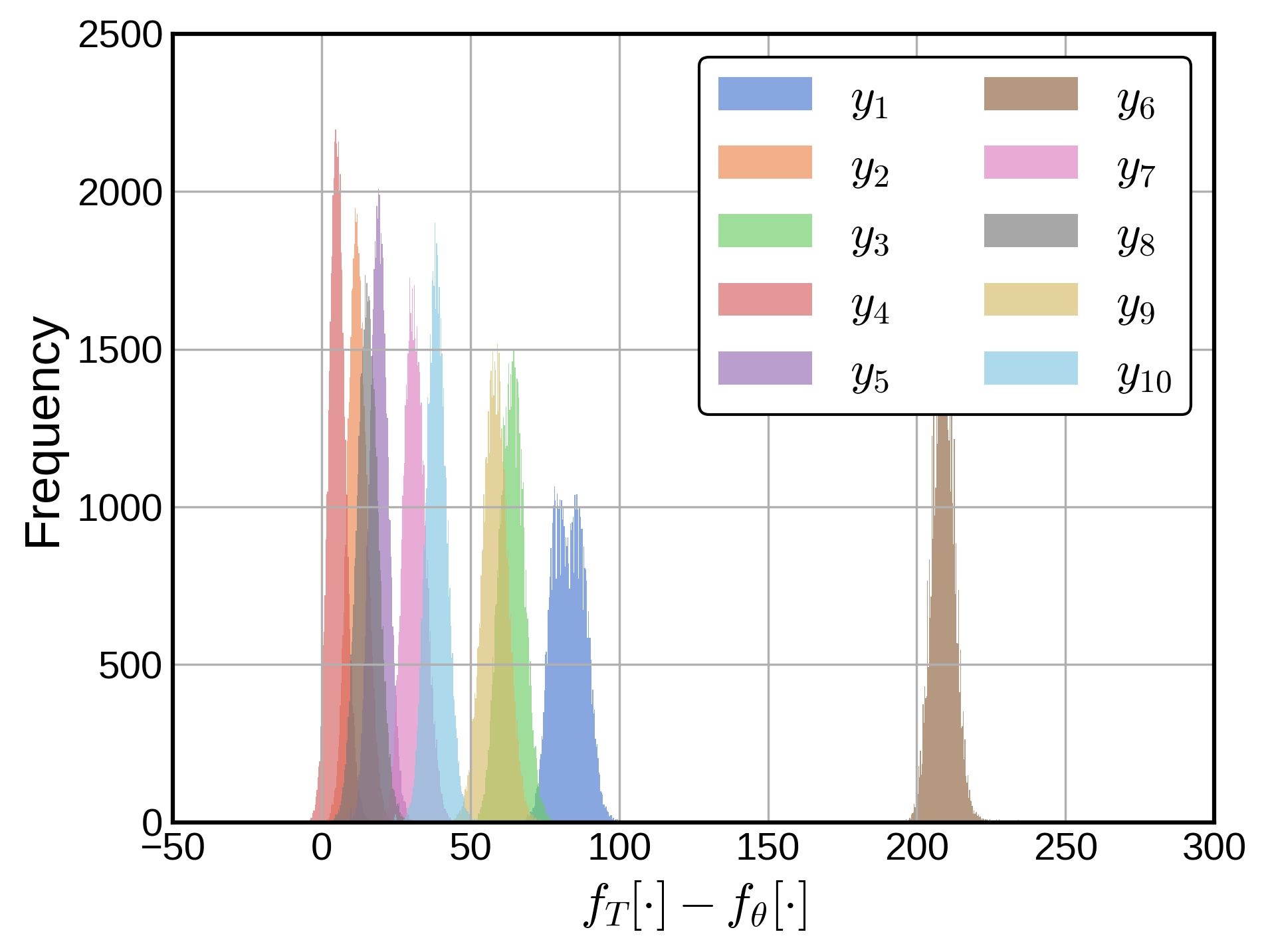}
                 \caption{CSD $(S,S)$ (Epoch 1)}
                 \label{fig:10.a}
             \end{subfigure}
        \end{minipage}
        \begin{minipage}[h]{0.32\textwidth}
             \begin{subfigure}{1.0\textwidth}
             \centering
            \includegraphics[width=\linewidth]{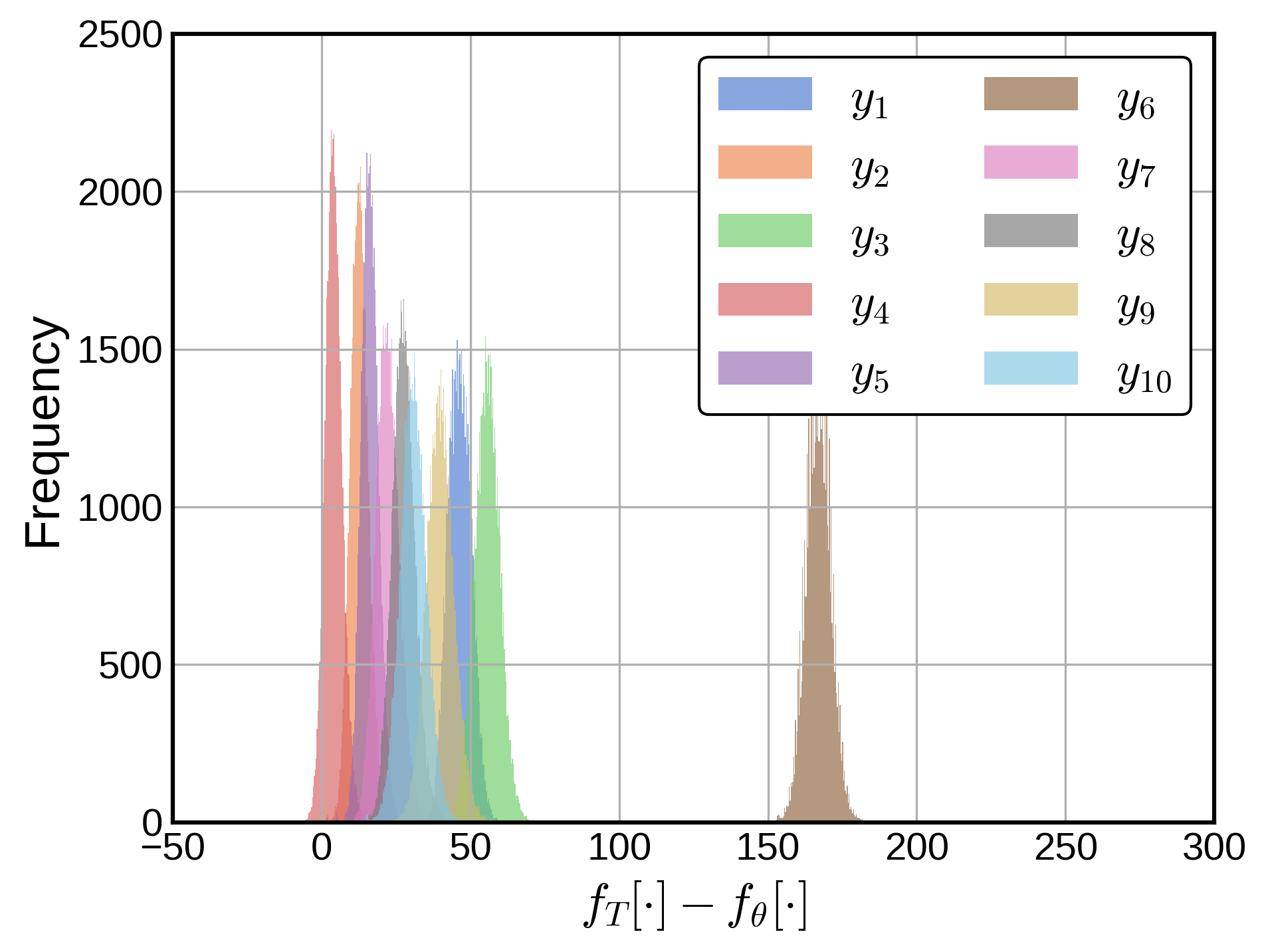}
                 \caption{CSD $(S,S)$ (Epoch 5)}
                 \label{fig:10.b}
             \end{subfigure}
        \end{minipage}
        \begin{minipage}[h]{0.32\textwidth}
             \begin{subfigure}{1.0\textwidth}
             \centering
            \includegraphics[width=\linewidth]{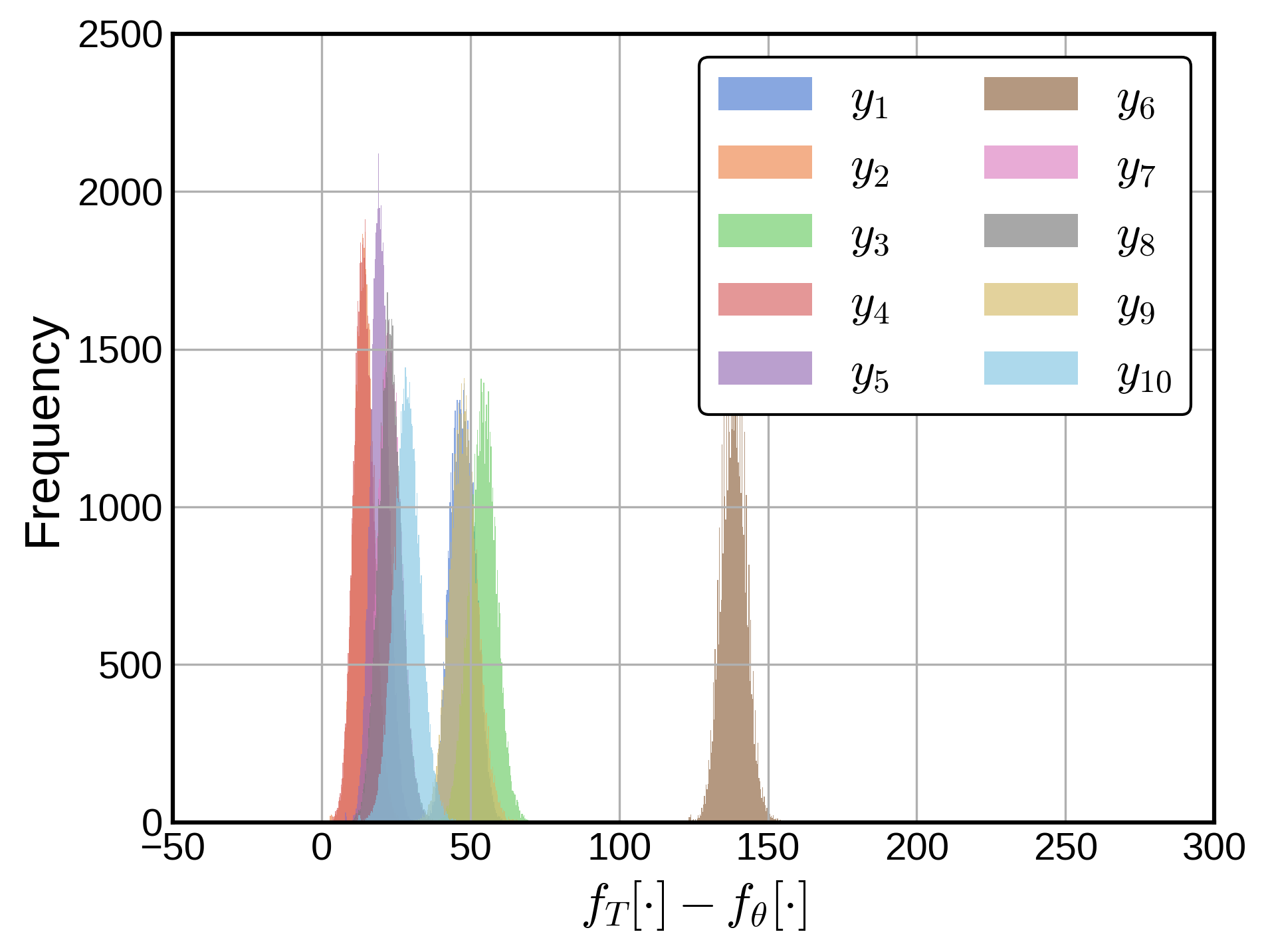}
                 \caption{CSD $(S,S)$ (Epoch 10)}
                 \label{fig:10.c}
             \end{subfigure}
        \end{minipage}
        \begin{minipage}[h]{0.32\textwidth}
             \begin{subfigure}{1.0\textwidth}
             \centering
            \includegraphics[width=\linewidth]{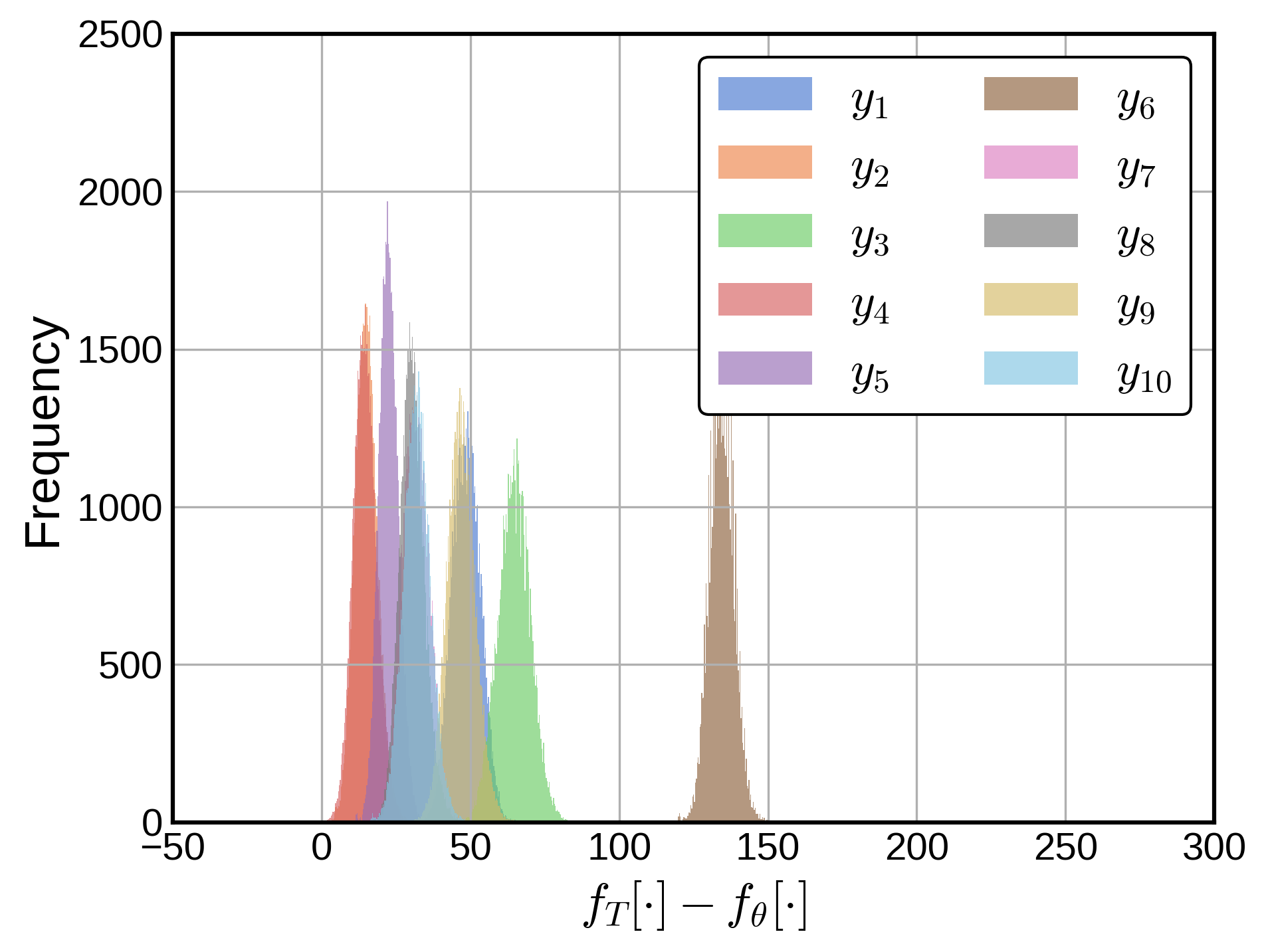}
                 \caption{CSD $(S,S)$ (Epoch 15)}
                 \label{fig:10.d}
             \end{subfigure}
        \end{minipage}
        \begin{minipage}[h]{0.32\textwidth}
             \begin{subfigure}{1.0\textwidth}
             \centering
            \includegraphics[width=\linewidth]{figure/c_analysis_csd20.png}
                 \caption{CSD $(S,S)$ (Epoch 20)}
                 \label{fig:10.e}
             \end{subfigure}
        \end{minipage}
        \caption{\textcolor{black}{Logit offsets dynamics during the training of CSD.}}
            \label{fig:10}
\end{figure*}

\newpage
\color{black}
\subsection{Adaptive loss weighting}
\begin{figure*}[h]
     \centering
    \includegraphics[width=0.65\linewidth]{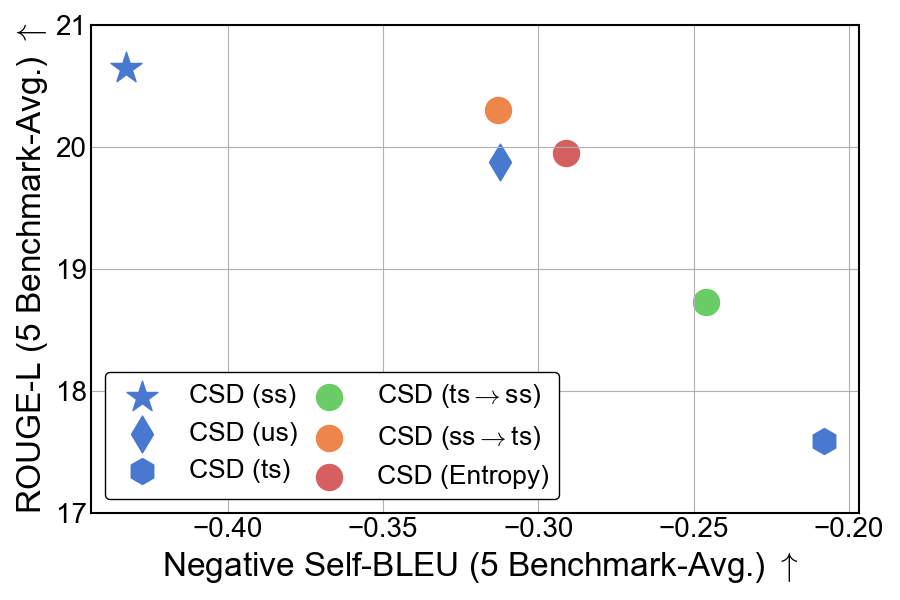}
    \caption{\textcolor{black}{Adaptive loss weighting}}
    \label{fig:100}
\end{figure*}
Diversity and fidelity form an inherent trade-off in generative modeling; a well-balanced default option is also important. Because an adaptive loss can better reconcile this trade-off, we conducted additional experiments on the adaptive loss weighting. We provide the results of two naïve scheduling and one confidence-based adaptive loss that interpolates CSD ($S, S$) and CSD ($T, S$) by defining $w_1$ as an interpolation of $p_S$ and $p_T$ using $\alpha$. We found that the following geometric interpolation performs better than linear interpolation in balancing the fidelity–diversity trade-off:
\begin{align}
w_1(x) \propto p_s(x)^{\alpha}p_T(x)^{1-\alpha}, \quad w_2(x) = p_s(x). \nonumber
\end{align}
\begin{center}
CSD $(TS \rightarrow SS)$: $\alpha = \frac{\text{Current Epoch}}{\text{Total Epoch}}$

CSD $(SS \rightarrow TS)$: $\alpha = \frac{\text{Total Epoch} - \text{Current Epoch}}{\text{Total Epoch}}$ 

CSD $(\text{Entropy})$: $\alpha = \text{clip}\left(\frac{ H(p_s(x)) - H(p_T(x))}{H(p_s(x))}, 0, 1\right)$
\end{center}

CSD $(TS \rightarrow SS)$, CSD $(SS \rightarrow TS)$, and CSD $(\text{Entropy})$ combine the strengths of both CSD $(S,S)$ and CSD $(T,S)$, and therefore achieve better performance at intermediate trade-off points as shown in \Cref{fig:100}. Because the learning rate is high in the early stages and decreases over time, the loss used at the beginning of training tends to have a stronger influence on the final trade-off position. For example, CSD $(TS \rightarrow SS)$ behaves similarly to CSD $(T,S)$ because its early-stage loss is closer to CSD$(T,S)$.

Unlike other epoch-based scheduling, CSD (Entropy) adaptively sets $\alpha$ at each token every step. Early in training, the entropy $H(p_s)$ is typically larger than $H(p_T)$, so $\alpha$ becomes close to 1, making the loss similar to CSD $(S,S)$. Since $p_s$ is more diverse than $p_T$ at the early stage, the CSD $(S,S)$ weighting provides richer feedback over a larger set of vocabulary indices compared to CSD $(T,S)$.

In the later stages of training, $p_s$ ideally becomes closer to $p_T$, making $\alpha$ close to 0. In a well-trained situation, CSD $(S,S)$ and CSD $(T,S)$ will show similar behavior, so the exact value of $\alpha$ becomes less important. However, when training does not progress well and $p_s$ becomes overconfident without matching $p_T$, focusing solely on CSD $(S,S)$ weighting is undesirable. In such cases, stronger teacher guidance from CSD $(T,S)$ is needed, which is why CSD $(\text{Entropy})$ is designed as above. With this design, the model showed more balanced performance.

\color{black}

\color{black}
\subsection{Gradient Coefficient Diversity}
\begin{figure*}[h]
     \centering
    \includegraphics[width=0.65\linewidth]{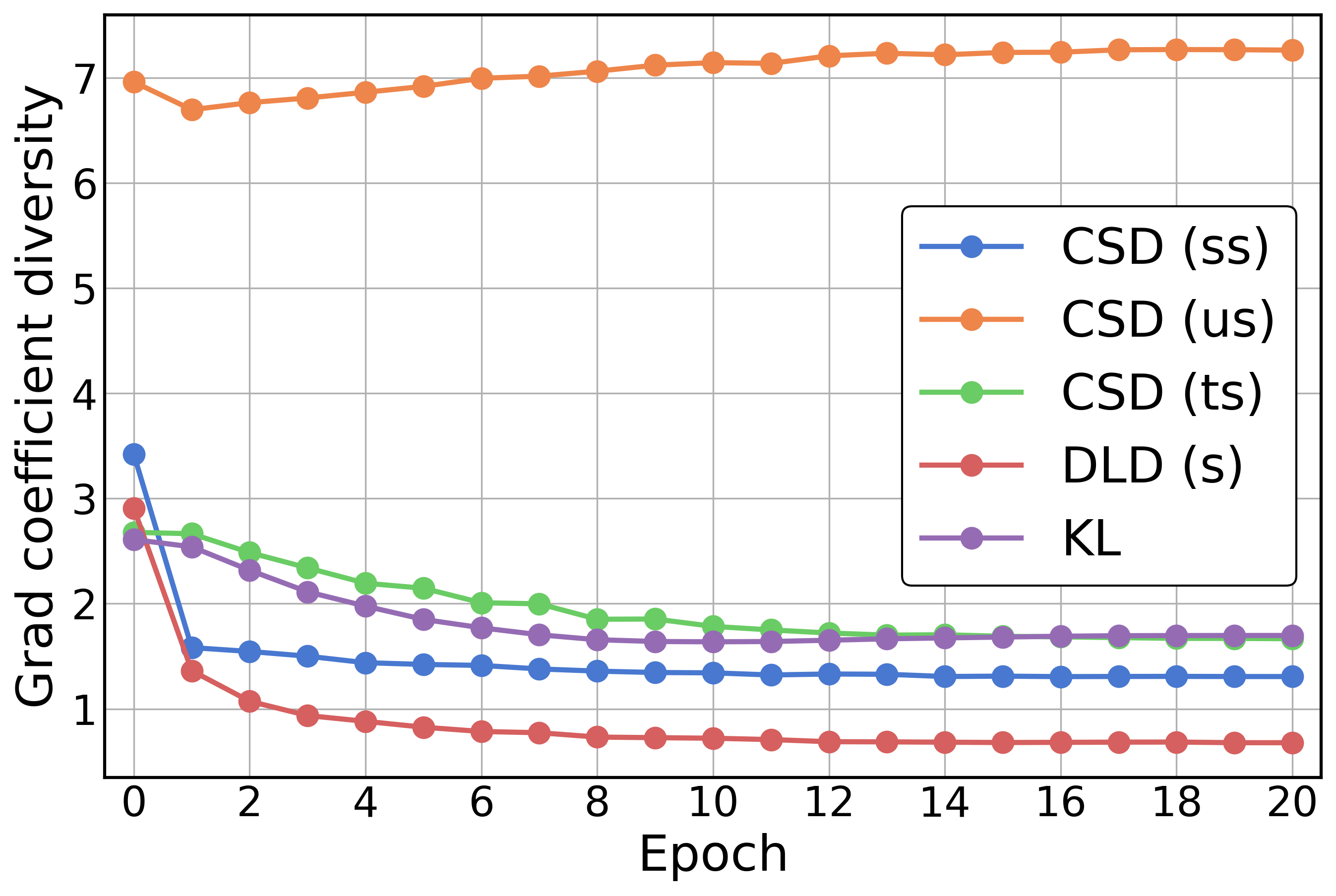}
    \caption{\textcolor{black}{Gradient coefficient diversity.}}
    \label{fig:101}
\end{figure*}
The limitation of softmax-based divergence losses, as pointed out in \Cref{fig:1.a,fig:1.b}, is that they provide almost no learning signal for minority vocabulary items. In this section, we analyze how broadly CSD learns across different vocabulary items. For CSD, we know the gradient coefficient for each vocabulary item from \cref{eq:grad}, which is given as follows:
\[
\text{Coeff}(y_t) = \mathbf{w}(y_t)^T \left(\mathbf{\tilde{f}}_{\theta}[y_t]-\mathbf{\tilde{f}}_{T}.[y_t]\right)
\]

We take the absolute value of this coefficient, normalize it by dividing by a constant so that the values sum to one, and then measure its entropy across training epochs. \Cref{fig:101} shows that when both weightings come from either the teacher’s or the student’s probabilities, the model learns only a small subset of vocabulary items, similar to KL. In contrast, CSD $(U, S)$ learns from a much broader range of vocabulary.

This demonstrates that expanding the loss-design space beyond the smoothing behavior imposed by softmax can be effective, which was the main motivation of this work. Because CSD $(U, S)$ learns uniformly across all vocabulary items, the logits for all vocabularies are well centered around their respective offsets, as shown in \Cref{fig:8.c}. This also explains its strong performance under high-temperature sampling (where minority vocabulary contributions become more important) as shown in \Cref{fig:3.c}.

\color{black}

\setlength{\tabcolsep}{3.0pt}
\begin{table}[h]
    \centering
    \vspace{0mm}
   \small
\caption{Comparison with more baselines corresponds to \Cref{tab:main2}.}
    \vspace{-3mm}
        \begin{tabular}{lcccccccc}
        \toprule
          Method &Loss& $\mathcal{D}$&{\scriptsize Dolly Eval }  &  {\scriptsize Self-Instruct} & {\scriptsize Vicuna Eval }  & {\scriptsize Super-NI } & {\scriptsize UnNI } & {\scriptsize Avg. ($\uparrow$)}\\
        \midrule
         \multicolumn{2}{l}{Teacher (\texttt{GPT-2-1.5B})} && \rateinline{27.00}{0.19} & \rateinline{14.07}{0.37}  &\rateinline{16.31}{0.32} &\rateinline{26.46}{0.41}  &\rateinline{31.10}{0.06} &22.99 \\
        \midrule
        \midrule
        \multicolumn{4}{l}{\texttt{GPT-2-1.5B} $\rightarrow$ \texttt{GPT-2-0.1B}}  \\
        \midrule
        SFT&SFT&Fix& {\rateinline{23.49}{0.25}} &{\rateinline{10.56}{0.29}} &{\rateinline{15.09}{0.48}} &{\rateinline{17.13}{0.12}} &{\rateinline{19.97}{0.08}} &17.25\\
        SeqKD~\citep{kim-rush-2016-sequence}&SFT&$p_T$ & {\rateinline{23.86}{0.49}} &{\rateinline{11.67}{0.80}} &{\rateinline{14.73}{0.37}} &{\rateinline{21.04}{0.19}} &{\rateinline{23.55}{0.11}} &18.97\\
         KD~\citep{hinton2015distilling}&KL &Fix& {\rateinline{23.52}{0.25}}&{\rateinline{10.02}{0.58}}  &{\rateinline{14.57}{0.32}} &{\rateinline{16.76}{0.17}}  &{\rateinline{18.55}{0.13}} & 16.68 \\
        \rowcolor{gray!25} Ours &CSD&Fix &{\rateinline{24.94}{0.29}} &{\rateinline{12.06}{0.46}} &\rateinline{15.78}{0.49} &{\rateinline{\textbf{24.60}}{0.31}} &{\rateinline{\textbf{25.88}}{0.13}} &20.65 \\
        \rowcolor{gray!25} Ours &CSD&On& \rateinline{\textbf{25.70}}{0.23} &\rateinline{\textbf{12.40}}{0.48} &\rateinline{\textbf{17.18}}{0.52} &\rateinline{22.91}{0.46} &\rateinline{25.47}{0.17} &\textbf{20.73}\\
        \midrule
        \midrule
        \multicolumn{4}{l}{\texttt{GPT-2-1.5B} $\rightarrow$ \texttt{GPT-2-0.3B}}  \\
        \midrule
        SFT&SFT&Fix&  {\rateinline{25.09}{0.62}} &{\rateinline{12.23}{0.79}} &{\rateinline{16.24}{0.40}} &{\rateinline{23.42}{0.11}} &{\rateinline{26.99}{0.13}} &20.79\\
        SeqKD~\citep{kim-rush-2016-sequence}&SFT&$p_T$ & {\rateinline{24.79}{0.26}} &{\rateinline{11.03}{0.95}} &{\rateinline{15.27}{0.30}} &{\rateinline{18.91}{0.29}} &{\rateinline{21.78}{0.10}}&18.36\\
         KD~\citep{hinton2015distilling}&KL&Fix & {\rateinline{25.41}{0.52}} &{\rateinline{11.15}{0.20}} &{\rateinline{15.83}{0.26}} &{\rateinline{20.13}{0.38}} &{\rateinline{23.57}{0.13}} &19.22\\
        \rowcolor{gray!25} Ours &CSD&On& \rateinline{\textbf{27.14}}{0.28} &\rateinline{\textbf{14.85}}{0.66} &\rateinline{\textbf{16.88}}{0.18} &\rateinline{\textbf{26.28}}{0.21} &\rateinline{\textbf{30.43}}{0.04} &\textbf{23.12}\\
        \midrule
       \bottomrule
    \end{tabular}
    \label{tab:main9}
\end{table}
\begin{table}[h]
    \centering
    \vspace{0mm}
   \small
\caption{\textcolor{black}{Comparison with probability matching loss with various weighting functions.}}
\begin{tabular}{lcc|cccccc}
        \toprule
         Loss &$w_1(\cdot)$&$w_2(\cdot)$& {\scriptsize Dolly Eval }  &  {\scriptsize Self-Instruct} & {\scriptsize Vicuna Eval }  & {\scriptsize Super-NI } & {\scriptsize UnNI } & {\scriptsize Avg. ($\uparrow$)} \\
        \midrule
        &$T$&-
        &\rateinline{24.41}{0.09}
        &\rateinline{11.45}{0.25}
        &\rateinline{14.43}{0.68}
        &\rateinline{24.08}{0.30}
        &\rateinline{25.53}{0.04}
        &19.98  \\

        Prob L2&$U$&-
        &\rateinline{15.62}{0.37}
        &\rateinline{6.59}{0.49}
        &\rateinline{10.63}{0.44}
        &\rateinline{10.31}{0.34}
        &\rateinline{12.51}{0.14}
        &11.13 \\

        &$S$&-
        &\rateinline{16.43}{0.14}
        &\rateinline{6.51}{0.55}
        &\rateinline{9.73}{0.17}
        &\rateinline{10.94}{0.31}
        &\rateinline{13.16}{0.20}
        &11.35 \\ 
       \midrule
          &$T$&-
       &\rateinline{23.65}{0.44}
       &\rateinline{10.36}{0.19}
       &\rateinline{15.10}{0.41}
       &\rateinline{16.18}{0.36}
       &\rateinline{19.64}{0.07}
       &16.99  \\
       \textcolor{black}{KL}&$U$&-
       &\rateinline{23.52}{0.25}
       &\rateinline{10.02}{0.58}
       &\rateinline{14.57}{0.32}
       &\rateinline{16.76}{0.17}
       &\rateinline{18.55}{0.13}
       &16.68 \\
       &$S$&-
       &\rateinline{23.18}{0.34}
       &\rateinline{10.04}{0.43}
       &\rateinline{15.06}{0.29}
       &\rateinline{16.93}{0.22}
       &\rateinline{19.78}{0.12}
       &17.00  \\
       \midrule
       &$T$&-
       &\rateinline{24.04}{0.33}
       &\rateinline{10.99}{0.41}
       &\rateinline{14.68}{0.19}
       &\rateinline{\textbf{25.40}}{0.06}
       &\rateinline{25.24}{0.04}
       &20.07  \\
       \textcolor{black}{TV}&$U$&-
       &\rateinline{23.88}{0.30}
       &\rateinline{11.03}{0.51}
       &\rateinline{15.13}{0.44}
       &{\rateinline{{24.58}}{0.25}}
       &\rateinline{25.24}{0.06}
       &19.97 \\
       &$S$&-
       &\rateinline{3.21}{0.41}
       &\rateinline{0.51}{0.10}
       &\rateinline{0.97}{0.13}
       &\rateinline{0.66}{0.06}
       &\rateinline{0.69}{0.03}
       &1.21  \\
       \midrule
       &$T$&-
       &\rateinline{0.06}{0.01}
       &\rateinline{0.04}{0.01}
       &\rateinline{0.18}{0.02}
       &\rateinline{0.03}{0.00}
       &\rateinline{0.03}{0.00}
       &0.07  \\
       \textcolor{black}{SRKL}&$U$&-
       &{\rateinline{{24.53}}{0.21}}
       &{\rateinline{{12.19}}{0.29}}
       &\rateinline{15.63}{0.22}
       &\rateinline{23.37}{0.27}
       &\rateinline{24.28}{0.18}
       &{20.00}\\
       &$S$&-
       &\rateinline{0.64}{0.04}
       &\rateinline{0.49}{0.05}
       &\rateinline{0.94}{0.08}
       &\rateinline{0.53}{0.02}
       &\rateinline{0.44}{0.00}
       &0.61  \\

       \midrule
        \rowcolor{gray!25}&$T$&$T$ &\rateinline{6.82}{0.16} &\rateinline{4.24}{0.12}    &\rateinline{9.16}{0.25}    &\rateinline{4.53}{0.02}    &\rateinline{4.83}{0.02}    &5.91 \\
        \rowcolor{gray!25}&$U$&$U$  &\rateinline{17.21}{0.30}   &\rateinline{8.08}{0.39}  &\rateinline{14.27}{0.40} &\rateinline{13.19}{0.27}   &\rateinline{14.07}{0.04}   &13.37 \\

       \rowcolor{gray!25}CSD&$S$&$S$ &{\rateinline{\textbf{24.94}}{0.29}} &\rateinline{12.06}{0.46}   &{\rateinline{{15.78}}{0.49}}  &\rateinline{{24.60}}{0.31}   &{\rateinline{\textbf{25.88}}{0.13}}  &\textbf{20.65} \\

       \rowcolor{gray!25}(Ours)&$U$&$S$ &\rateinline{24.15}{0.55}   &\rateinline{\textbf{12.25}}{0.47}  &\rateinline{15.25}{0.41}   &\rateinline{22.55}{0.09}   &\rateinline{25.19}{0.12} &19.88 \\

       \rowcolor{gray!25} &$T$&$S$
       &\rateinline{22.77}{0.25}
       &\rateinline{10.62}{0.32}
       &\rateinline{14.06}{0.25}
       &\rateinline{18.81}{0.40}
       &\rateinline{21.71}{0.18}
       &17.59 \\
       \bottomrule
    \end{tabular}
    \label{tab:main10}
\end{table}

\newpage

\begin{table}[t]
\setlength{\tabcolsep}{1pt}
\caption{\textcolor{black}{Distillation configuration, memory usage, and training speed for each teacher–student pair and distillation method. All measurements were obtained using a single A100 GPU.}}
\centering
\small
\begin{tabular}{llccccc}
\toprule
\multicolumn{2}{c}{} 
  & \multicolumn{5}{c}{Teacher $\rightarrow$ Student} \\
\cmidrule(lr){3-7}
\multicolumn{2}{c}{} 
  & GPT-2 & GPT-2 & OpenLlama & Qwen2.5-IT & Gemma2-IT \\
\multicolumn{2}{c}{} 
  & 1.5B $\rightarrow$ 0.1B 
  & 1.5B $\rightarrow$ 0.3B 
  & 7B $\rightarrow$ 3B 
  & 7B $\rightarrow$ 1.5B 
  & 9B$\rightarrow$ 2B \\
\midrule

\rowcolor{gray!10}
\multicolumn{7}{l}{\textbf{Configuration}} \\

 & Vocab 
  & 50,257 & 50,257 & 32,000 & 151,665 & 256,000 \\
 & Max sequence len. (prompt len.)
  & 512 (256) & 512 (256) & 512 (256) & 1024 (512) & 1024 (512) \\
 & BatchSize (microbatch $\times$ accum.)
  & 32 & 32 & 32 & 32 (2 $\times$ 16) & 32 (1 $\times$ 32) \\
 & LoRA 
   & \xmark & \xmark & \cmark & \cmark & \cmark  \\
\midrule
\rowcolor{gray!10}
\multicolumn{7}{l}{\textbf{Efficiency (memory \& training speed)}} \\

DLD & Memory (MB) ($\downarrow$) 
  & 30489.98 & 50656.78 & 35341.80 & 45134.24 & 46203.35 \\
 & Elapsed Time (sec / batch) ($\downarrow$) 
  & 0.758 & 1.033 & 4.035 & 26.62 & 44.50 \\
\midrule
SKL & Memory (MB) ($\downarrow$) 
  & 39129.84 & 60082.85 & 41542.05 & 52234.71 & 52196.50 \\
 & Elapsed Time (sec / batch) ($\downarrow$) 
  & 0.803 & 1.094 & 4.077 & 27.88 & 43.92 \\
\midrule
KL & Memory (MB) ($\downarrow$) 
  & 32845.77 & 49870.58 & 35041.99 & 38032.67 & 40208.22 \\
 & Elapsed Time (sec / batch) ($\downarrow$) 
  & 0.770 & 1.027 & 4.044 & 25.84 & 43.34 \\
\midrule
CSD (Anal.) & Memory (MB) ($\downarrow$) 
  & 28919.70 & 49085.38 & 34542.05 & 42766.25 & 44205.31 \\
 & Elapsed Time (sec / batch) ($\downarrow$) 
  & 0.764 & 1.033 & 4.041 & 26.64 & 42.12 \\
\midrule
CSD (MC) & Memory (MB) ($\downarrow$) 
  & 30490.10 & 50656.91 & 35341.92 & 47502.30 & 48201.43 \\
 & Elapsed Time (sec / batch) ($\downarrow$) 
  & 0.789 & 1.061 & 4.063 & 27.78 & 43.06 \\
\bottomrule
\end{tabular}
\label{tab:memory}
\end{table}

\renewcommand{\arraystretch}{1.0}
\setlength{\tabcolsep}{4pt}
\begin{table}[t]
 \vspace{-2mm}
 \centering
    \caption{\textcolor{black}{Task-specific distillation performance from the \texttt{Gemma-7B-IT} teacher to the \texttt{Gemma-2B-IT} student.}\textcolor{black}{Please refer to \Cref{sec:A.4} for DLD variants details.}}
     \adjustbox{max width=\textwidth}{%
        \begin{tabular}{lccc}
        \toprule
          & Summarization & Translation & GSM8K \\
        \cmidrule(lr){2-4} 
         Loss&ROUGE-L&COMET&Accuracy\\
        \midrule
        Teacher & 37.09 & 79.23 & 60.27 \\
        \midrule
    \textcolor{black}{DLD (T)} & 0.00 & 19.00 & 0.00 \\
    \textcolor{black}{DLD (U)} & 0.00 & 18.98 & 0.00 \\
   \textcolor{black}{DLD (S)} & 0.00 & 21.52 & 0.00 \\
    \textcolor{black}{DLD-min (T)} & 0.46 & 48.05 & 0.00 \\
    \textcolor{black}{DLD-min (U)} & 13.29 & 53.81 & 0.00 \\
    \textcolor{black}{DLD-min (S)} & 15.91 & 52.98 & 0.00 \\
    \textcolor{black}{DLD-max (T)} & 32.54 & 65.28 & 17.74 \\
    \textcolor{black}{DLD-max (U)} & 15.75 & 24.23 & 0.00 \\
    \textcolor{black}{DLD-max (S)} & 18.72 & 60.56 & 0.00 \\
    \textcolor{black}{DLD-std (T)} & 0.00 & 35.71 & 0.00 \\
    \textcolor{black}{DLD-std (U)} & 0.85 & 35.29 & 0.00 \\
    \textcolor{black}{DLD-std (S)} & 18.97 & 58.07 & 0.00 \\
    \textcolor{black}{DLD-mean (T)} & 0.00 & 34.21 & 0.00 \\
    \textcolor{black}{DLD-mean (S)} & 18.78 & 43.98 & 0.00 \\
    \textcolor{black}{DLD-mean (U)} & 0.03 & 32.03 & 0.00 \\
        \rowcolor{gray!25}CSD $(T,S)$& \textbf{35.67} & \textbf{74.14} & \textbf{25.78} \\
       \bottomrule
    \end{tabular}
    \label{tab:main300}
    }
    \vspace{-5mm}
\end{table}

\begin{table}[ht]
\centering
\renewcommand{\arraystretch}{1.2}
\caption{Qualitative comparison on the GSM8K dataset. Only \textbf{CSD (Ours)} produces the correct final answer; other students give incorrect results.}
\begin{tabularx}{\textwidth}{X}
\toprule
\textbf{Prompt for model:} \\
Q: A robe takes 2 bolts of blue fiber and half that much white fiber. How many bolts in total does it take? A: 
\\[0.8em]
\textbf{Correct Answer:} \\
It takes 2/2=<<2/2=1>>1 bolt of white fiber \\
So the total amount of fabric is 2+1=<<2+1=3>>3 bolts of fabric \\
\#\#\#\# 3 \\[0.8em]
\textbf{Teacher sampled output:} \\
The robe takes 2 x 1/2 = <<2*1/2=1>>1 bolt of white fiber.\\
So it takes 2 + 1 = <<2+1=3>>3 bolts in total.\\
\#\#\#\# 3 \\[0.8em]
\noalign{\color{lightgray}\hrule height 0.5pt}
\\
\textbf{CSD (ours) student sampled output:} \\
The robe takes 2 x 1/2 = <<2*1/2=1>>1 bolt of white fiber. \\
So, it takes 2 + 1 = <<2+1=3>>3 bolts in total. \\
 \#\#\#\# \AnsCorrect{3}
\\[0.8em]
\textbf{KL student sampled output:} \\
2 bolts of blue fiber + 1 bolt of white fiber = <<2+1=3>>3 bolts in total \\
3 bolts of blue fiber + 2 bolts of white fiber = <<3+2=5>>5 bolts in total \\
\#\#\#\# \AnsIncorrect{5}
\\[0.8em]

\textbf{Jeffrey student sampled output:} \\
The robe takes 2 x 2 = <<2*2=4>>4 bolts of blue fiber. \\
The robe takes 4 / 2 = <<4/2=2>>2 bolts of white fiber. \\
The robe takes 4 + 2 = <<4+2=6>>6 bolts in total. \\
\#\#\#\# \AnsIncorrect{6}
\\[0.8em]

\textbf{JS student sampled\_output:} \\
The robe takes 2*2=<<2*2=4>>4 bolts of blue fiber. \\
The robe takes 4/2=<<4/2=2>>2 bolts of white fiber. \\
The robe takes 4+2=<<4+2=6>>6 bolts in total. \\
\#\#\#\# \AnsIncorrect{6}

\\[0.8em]
\bottomrule
\end{tabularx}
\label{tab:main5}
\end{table}

\begin{table}[ht]
\centering
\renewcommand{\arraystretch}{1.2}
\caption{Qualitative comparison on a GSM8K dataset. The SRKL student's sampled output shows degenerative repetition and drifting arithmetic reasoning.}
\begin{tabularx}{\textwidth}{X}
\toprule
\textbf{Prompt for model:} \\
Q: A robe takes 2 bolts of blue fiber and half that much white fiber. How many bolts in total does it take? A: 
\\[0.8em]

\textbf{Correct Answer:} \\
It takes 2/2=<<2/2=1>>1 bolt of white fiber \\
So the total amount of fabric is 2+1=<<2+1=3>>3 bolts of fabric \\
\#\#\#\# 3 \\[0.8em]
\noalign{\color{lightgray}\hrule height 0.5pt}
\\
\textbf{SRKL student sampled output:} \\
The robe takes 2/2=1/2 bolts of white fiber. \\
So, it takes 1/2*2=1 bolt of blue fiber. \\
So, it takes 1+2=3 bolts in total. \\
So, it takes 3*1=3 bolts in total. \\
So, it takes 3*2=6 bolts in total. \\
So, it takes 6+1=7 bolts in total. \\
So, it takes 7*2=14 bolts in total. \\
So, it takes 14+3=17 bolts in total. \\
So, it takes 17*2=34 bolts in total. \\
So, it takes 34+1=35 bolts in total. \\
So, it takes 35 bolts in total. \\
So, it takes 35*2=70 bolts in total. \\
So, it takes 70+1=71 bolts in total. \\
So, it takes 71 bolts in total. \\
So, it takes 71*2=142 bolts in total. \\
So, it takes 142+3=145 bolts in total. \\
So, it takes 145*2=290 bolts in total. \\
So, it takes 290+1=291 bolts in total. \\
So, it takes 291 bolts in total. \\
So, it takes 291*2=582 bolts in total. \\
So, it takes 582 bolts in total. \\
So, it takes 582*2=1164 bolts in total. \\
So, it takes 1164 bolts in total. \\
So, it takes 1164*2=2328 bolts in total. \\
So, it takes 2328 bolts in total. \\
So, it takes 2328*2=4656 bolts in total. \\
So, it takes 4656 bolts in total. \\
So, it takes 4656*2=9312 bolts in total. \\
So, it takes 9312 bolts in total. \\
So, it takes 9312*2=18624 bolts in total. \\
So, it takes 18624 bolts \\[0.8em]

\bottomrule
\end{tabularx}
\label{tab:main6}
\end{table}

\begin{table}[ht]
\centering
\renewcommand{\arraystretch}{1.2}
\caption{Qualitative comparison on a GSM8K dataset. The TV student's sampled output shows degenerative repetition and drifting arithmetic reasoning.}

\begin{tabularx}{\textwidth}{X}
\toprule
\textbf{Prompt for model:} \\
Q: A robe takes 2 bolts of blue fiber and half that much white fiber. How many bolts in total does it take? A: 
\\[0.8em]

\textbf{Correct Answer:} \\
It takes 2/2=<<2/2=1>>1 bolt of white fiber \\
So the total amount of fabric is 2+1=<<2+1=3>>3 bolts of fabric \\
\#\#\#\# 3 \\[0.8em]
\noalign{\color{lightgray}\hrule height 0.5pt}
\\
\textbf{TV student sampled output:} \\ The robe takes 2 x .5 = 1 bolt of white fiber. \\ So it takes 2 + 1 = 3 bolts in total. \\ The robe takes 3 x 2 = 6 bolts in total. \\ So it takes 6 bolts in total. \\ The robe takes 6 bolts in total. \\ So it takes 6 bolts in total. \\ The robe takes 6 bolts in total. \\ So it takes 6 bolts in total. \\ The robe takes 6 bolts in total. \\ So it takes 6 bolts in total. \\ The robe takes 6 bolts in total. \\ So it takes 6 bolts in total. \\ The robe takes 6 bolts in total. \\ So it takes 6 bolts in total. \\ The robe takes 6 bolts in total. \\ So it takes 6 bolts in total. \\ ... (repeats) ...\\ The robe takes 6 bolts in total. \\ So it takes 6 bolts in total. \\ The robe takes 6 bolts in total. \\ So it takes 6 bolts in total.\\ The robe takes 6 bolts in total. \\ So it takes 6 bolts in total.\\ The robe takes 6 bolts in total. \\ So it takes 6 bolts in total. \\ The robe takes 6 bolts in total. \\ So it takes 6 bolts in total. \\ The robe takes 6 bolts in total. \\ So it takes 6 bolts in total. \\ The robe takes 6 bolts in total. \\ So it takes 6 bolts in total. \\ The robe takes 6 bolts in total. \\ So it takes 6 bolts in total. \\ \\[0.8em]

\bottomrule
\end{tabularx}
\label{tab:main7}
\end{table}

\begin{table}[ht]
\centering
\renewcommand{\arraystretch}{1.2}
\caption{Qualitative comparison on a GSM8K dataset. The SKL student's sampled output shows degenerative repetition and drifting arithmetic reasoning.}

\begin{tabularx}{\textwidth}{X}
\toprule
\textbf{Prompt for model:} \\
Q: A robe takes 2 bolts of blue fiber and half that much white fiber. How many bolts in total does it take? A: 
\\[0.8em]

\textbf{Correct Answer:} \\
It takes 2/2=<<2/2=1>>1 bolt of white fiber \\
So the total amount of fabric is 2+1=<<2+1=3>>3 bolts of fabric \\
\#\#\#\# 3 \\[0.8em]
\noalign{\color{lightgray}\hrule height 0.5pt}
\\
\textbf{SKL student sampled output:} \\
The robe takes 2/2=1/2 bolts of white fiber. \\
So it takes 2+1/2=3/2 bolts of fiber in total. \\
So it takes 3/2=1.5 bolts of fiber in total. \\
So it takes 1.5*2=3 bolts of fiber in total. \\
So it takes 3+1=4 bolts of fiber in total. \\
So it takes 4*2=8 bolts of fiber in total. \\
So it takes 8+1=9 bolts of fiber in total. \\
So it takes 9+1=10 bolts of fiber in total. \\
So it takes 10+2=12 bolts of fiber in total. \\
So it takes 12+1=13 bolts of fiber in total. \\
So it takes 13+1=14 bolts of fiber in total. \\
So it takes 14+4=18 bolts of fiber in total. \\
So it takes 18+1=19 bolts of fiber in total. \\
So it takes 19+1=20 bolts of fiber in total. \\
So it takes 20+2=22 bolts of fiber in total. \\
So it takes 22+1=23 bolts of fiber in total. \\
So it takes 23+1=24 bolts of fiber in total. \\
So it takes 24+1=25 bolts of fiber in total. \\
So it takes 25+1=26 bolts of fiber in total. \\
So it takes 26+1=27 bolts of fiber in total. \\
So it takes 27+1=28 bolts of fiber in total. \\
So it takes 28+1=29 bolts of fiber in total. \\
So it takes 29+1=30 bolts of fiber in total. \\
So it takes 30+1=31 bolts of fiber in total. \\
So it takes 31+1=32 bolts of fiber in total. \\
So it takes 32+1=33 bolts of fiber in total. \\
So it takes 33+1=34 bolts of fiber in total. \\
So it takes 34+1=35 bolts of fiber in total. \\
So it takes 35+1=36 bolts of fiber in total. \\

\bottomrule
\end{tabularx}
\label{tab:main8}
\end{table}

\begin{table}[ht]
\centering
\renewcommand{\arraystretch}{1.2}
\caption{\textcolor{black} {Qualitative comparison on a GSM8K dataset. The DLD (S) student's sampled output shows degenerative repetition and drifting arithmetic reasoning.}}
\begin{tabularx}{\textwidth}{X}
\toprule
\textbf{Prompt for model:} \\
Q: A robe takes 2 bolts of blue fiber and half that much white fiber. How many bolts in total does it take? A: 
\\[0.8em]

\textbf{Correct Answer:} \\
It takes 2/2=<<2/2=1>>1 bolt of white fiber \\
So the total amount of fabric is 2+1=<<2+1=3>>3 bolts of fabric \\
\#\#\#\# 3 \\[0.8em]
\noalign{\color{lightgray}\hrule height 0.5pt}
\\
\textbf{DLD (S) student sampled output:} \\
reluct reluct reluct reluct reluct reluct\\
reluct reluct reluct reluct reluct reluct\\ 
reluct reluct reluct reluct reluct reluct\\ 
reluct reluct reluct reluct reluct reluct\\
reluct reluct reluct reluct reluct reluct\\
reluct reluct reluct reluct reluct reluct\\
reluct reluct reluct reluct reluct reluct\\
reluct reluct reluct reluct reluct reluct\\
reluct reluct reluct reluct reluct reluct\\
reluct reluct reluct reluct reluct reluct \\ 
... (repeats) ...\\
reluct reluct reluct reluct reluct reluct\\
reluct reluct reluct reluct reluct reluct\\
reluct reluct reluct reluct reluct reluct\\
reluct reluct reluct reluct reluct reluct\\
reluct reluct reluct reluct reluct reluct\\
reluct reluct reluct reluct reluct reluct\\
reluct reluct reluct reluct reluct reluct\\
reluct reluct reluct reluct reluct reluct\\
reluct reluct reluct reluct reluct reluct\\
reluct reluct reluct reluct reluct reluct\\
reluct reluct reluct reluct reluct reluct\\
 reluct reluct reluct reluct reluct reluct\\
reluct reluct reluct reluct reluct reluct\\
 
 \\[0.8em]

\bottomrule
\end{tabularx}
\label{tab:qualitative_dld}
\end{table}

\section{The Use of Large Language Models}
In this work, LLMs were used only for minor writing assistance, such as grammar correction after drafting. In addition, since the research topic is LLM distillation, LLMs were employed as the subject of experiments and also as evaluation models for performance assessment.

\end{document}